\documentclass[preprint,authoryear,12pt]{elsarticle}
\usepackage{amsfonts,amssymb,amsmath,amsopn,amsthm}
\usepackage{graphicx}
\usepackage{subcaption}
\usepackage{rotating}
\usepackage{booktabs}
\usepackage{url}
\usepackage{lscape}
\usepackage{xcolor}
\usepackage{array}
\usepackage{bbm}
\usepackage{multirow}
\usepackage{setspace}
\usepackage{tablefootnote}
\usepackage{comment}
\usepackage[space]{grffile}
\usepackage[margin=1in]{geometry}
\usepackage{xr}
\usepackage{microtype}
\usepackage{algorithm}
\usepackage{algorithmic}
\usepackage{threeparttable}
\usepackage[colorlinks=true,linkcolor=blue,citecolor=blue,urlcolor=blue]{hyperref}

\newcommand{\ignore}[1]{}

\theoremstyle{plain}
\newtheorem{theorem}{Theorem}[section]
\newtheorem{proposition}[theorem]{Proposition}
\newtheorem{lemma}[theorem]{Lemma}
\newtheorem{corollary}[theorem]{Corollary}

\theoremstyle{definition}
\newtheorem{definition}[theorem]{Definition}
\newtheorem{assumption}[theorem]{Assumption}

\theoremstyle{remark}

\newcommand{\E}{\mathbb{E}}
\newcommand{\Pp}{\mathbb{P}}
\newcommand{\R}{\mathbb{R}}

\newcommand{\1}{\mathbf{1}}
\newcommand{\G}{\mathcal{G}}
\newcommand{\M}{\mathcal{M}}
\newcommand{\Vset}{\mathcal{V}}
\newcommand{\Cset}{\mathcal{C}}

\newcommand{\Iset}{\mathcal{I}}
\newcommand{\argmin}{\operatorname*{arg\,min}}
\newcommand{\pos}{\mathrm{pos}}
\newcommand{\cert}{\mathrm{cert}}
\newcommand{\proj}{\mathrm{proj}}
\newcommand{\CCSMCS}{\textnormal{\textsc{CC-SMCS}}}

\newcommand{\bigtimes}{\mathop{\times}}

\newcommand{\qcert}{q^{\mathrm{cert}}}

\newcommand{\papertitle}{Sequential Confidence Sets for Coverage-Constrained Conformal Model Selection}

\begin{document}
\begin{frontmatter}

\title{\papertitle}
\author{Jing Li\footnote{Department of Statistics and Data Science, School of Economics, Jinan University, Guangzhou, China. Email: \href{mailto:ssun28769@stu2025.jnu.edu.cn}{ssun28769@stu2025.jnu.edu.cn} }}
\author{Haibin Zhu\footnote{Corresponding author. Department of Statistics and Data Science, School of Economics, Jinan University, Guangzhou, China. Email: \href{mailto:haibinzhu@jnu.edu.cn}{haibinzhu@jnu.edu.cn} }}

\begin{abstract}
Modern conformal forecasting systems often maintain several adaptive pipelines that differ in base forecasters, conformity scores, calibration windows, and update rules. Comparing them is difficult because coverage is a hard constraint, whereas efficiency should be optimized only among feasible pipelines. We formulate this problem as sequential inference for a stochastic constrained argmin. At each time step, the target is the set of minimum-cost pipelines satisfying multiple prefix-average conditional miscoverage constraints. We introduce Coverage-Constrained Sequential Model Confidence Sets (CC-SMCS), which separate certifiably feasible, possibly feasible, and possibly constrained-optimal pipelines. Using simultaneous martingale confidence sequences, CC-SMCS projects a rectangular confidence region onto the constrained argmin and admits an exact closed-form rule. With probability at least $1-\delta$, it contains every constrained-optimal pipeline simultaneously over all times. This finite-sample guarantee requires no stationarity or mixing assumptions and remains valid under data-dependent stopping. We also establish an impossibility result for safe certification at the coverage boundary and extend the construction to delayed multi-horizon feedback and outcome-dependent efficiency objectives.
\\~\\
\textit{JEL Classification:} 
C14, C23, C53, G17
\end{abstract}
\begin{keyword}
Conformal Prediction \sep Sequential Model Selection \sep Anytime-valid Inference \sep Time-series Forecasting 
\end{keyword}
\end{frontmatter}

\doublespacing
\section{Introduction}
\label{sec:introduction}
Conformal prediction (CP) turns point or distributional predictors into set-valued predictions with explicit coverage targets \citep{vovk2005algorithmic,angelopoulos2023gentle}. In practice, however, a forecasting system rarely relies on a single conformal pipeline. Different base forecasters, conformity scores, calibration windows, nominal levels, and online update rules can all produce prediction sets with different coverage and efficiency, and their relative performance may change under distribution shift. The resulting problem is how to compare a library of adaptive conformal procedures while accounting for uncertainty about both coverage and efficiency.

The main difficulty is that coverage and efficiency play different roles. Coverage is a constraint: a narrow prediction set is not useful if it systematically undercovers. Efficiency, by contrast, should be optimized only among pipelines that satisfy the required coverage constraints. A scalar score that combines width and undercoverage replaces this constrained problem by a different objective and can favor an under-covering method. A two-stage rule that first screens methods for coverage and then selects the narrowest survivor has a different problem: failure to reject infeasibility is not evidence of feasibility. Near the coverage boundary, an infeasible but narrow method may survive because there is not yet enough evidence to exclude it, while a feasible boundary method may be difficult to certify.

This motivates a different inferential target. Rather than selecting a single pipeline, we ask which pipelines can still be the optimizer of a coverage-constrained problem at each time. At time $t$, every candidate pipeline has several prefix-average conditional miscoverage constraints and an observed prequential efficiency cost. A pipeline is feasible when all monitored coverage constraints are satisfied, and the target $\M_t^\star$ is the set of feasible pipelines with minimum cost. Because both the feasible set and the identity of the optimizer may change over time, we seek a sequence of random model sets $(\widehat\M_t^{\proj})_{t\geq1}$ satisfying
\begin{equation}
\Pp\!\left(\forall t\geq1:\ \M_t^\star\subseteq\widehat\M_t^{\proj}\right)\geq1-\delta.
\label{eq:introgoal}
\end{equation}
We introduce a \emph{Coverage-Constrained Sequential Model Confidence Set} (\CCSMCS) for this target. It separates certified and possible feasibility and projects simultaneous coverage confidence sequences onto the constrained argmin. The rectangular projection has a closed-form rule computable in $O(MR)$ time. Under prequential measurability and bounded miscoverage increments, (1) holds without stationarity, mixing, or exchangeability assumptions and remains valid under data-dependent stopping.

The target is method comparison: membership in $\widehat{\mathcal M}^{\mathrm{proj}}_t$ does not imply certified feasibility or predictive coverage after selection.

\paragraph{Contributions.}
\begin{enumerate}
    \item We formulate conformal pipeline comparison as sequential inference for a stochastic constrained argmin with an unknown, time-varying feasible set. The framework supports multiple coverage constraints, with extensions to delayed multi-horizon feedback and outcome-dependent efficiency objectives.
    \item We introduce \CCSMCS, which separates certifiably feasible, possibly feasible, and possibly constrained-optimal pipelines. For an observed prequential objective, we derive an exact rectangular projection rule and prove finite-sample, time-uniform oracle coverage using martingale confidence sequences.
    \item We give sufficient conditions for oracle recovery and a finite-sample lower bound near the coverage boundary. At the boundary, we show that even a known strict cost advantage need not permit high-probability singleton identification under uniform oracle coverage.
\end{enumerate}

\section{Related Work}
\label{sec:related}

\paragraph{Model confidence sets and argmin inference.} The classical Model Confidence Set (MCS) contains the best models with asymptotic confidence for a fixed evaluation sample \citep{hansen2011mcs}. Sequential MCS provides nonasymptotic time-uniform coverage for models ranked by scalar conditional-average losses \citep{arnold2026sequential}. Related methods test sequential predictive ability using betting processes \citep{choe2024comparing}. Model Prediction Sets target the empirically best model for the next period under a long-run model-set coverage criterion \citep{li2025mps}. Other work studies confidence sets for the index of the minimum component of a noisy mean vector \citep{zhang2024winners,kim2025bestmodel}. We instead minimize an observed objective over an unknown, time-varying feasible set.

\paragraph{Online conformal prediction and sequential risk control.} Conformal methods address temporal dependence and distribution shift \citep{barber2023conformal,xu2021enbpi,zaffran2022adaptive}. Adaptive conformal inference controls long-run coverage error \citep{gibbs2021adaptive,gibbs2024arbitrary}, while strongly adaptive methods consider multiple time scales \citep{bhatnagar2023improved}. Risk-control extensions address non-exchangeable data \citep{farinhas2024non} and arbitrary inspection times \citep{hultberg2026anytime}. Multi-model methods learn a predictor to deploy \citep{hajihashemi2024multi}, combine conformity scores through Bayesian model averaging \citep{bhagwat2025cbma}, or weight residuals using gating similarities \citep{kong2026adaptive}. These methods construct or deploy prediction sets. \CCSMCS\ instead compares pipelines under monitored coverage constraints.

\paragraph{Risk-controlled selection and constrained identification.} Learn-then-test treats risk control as multiple hypothesis testing \citep{angelopoulos2025learn}, and Pareto Testing combines risk certification with optimization \citep{laufer2023pareto}. Conformal risk control calibrates prediction sets for general monotone losses \citep{angelopoulos2024conformal}. Constrained best-arm identification uses adaptive sampling to identify a fixed optimizer under unknown feasibility constraints \citep{lardy2025constrained,cai2026constrained}. We instead evaluate adaptive pipelines on a common outcome stream and cover all current constrained optimizers over time, allowing excluded pipelines to re-enter.

\paragraph{Conformal model selection and coverage assessment.} Efficiency-oriented selection motivates procedures that preserve predictive validity after selection \citep{liang2024selection,hegazy2025valid,wang2026localized}. Conformal Prediction Assessment evaluates conditional coverage and develops a selection criterion using a learned reliability estimator \citep{zhou2026assessment}. Our target is sequential method comparison under prefix-average conditional coverage constraints.

\ignore{
\paragraph{Contributions.}
\begin{enumerate}
    \item We formulate conformal pipeline comparison as sequential
    inference for a stochastic constrained argmin with an unknown,
    time-varying feasible set. The framework supports multiple coverage
    constraints, with extensions to delayed multi-horizon feedback and
    outcome-dependent efficiency objectives.

    \item We introduce \CCSMCS, which separates certifiably feasible,
    possibly feasible, and possibly constrained-optimal pipelines.
    For an observed prequential objective, we derive an exact
    rectangular projection rule and prove finite-sample, time-uniform
    oracle coverage using martingale confidence sequences.

    \item We give sufficient conditions for excluding non-oracle
    methods and recovering the oracle set. Under uniform oracle
    coverage, we establish an impossibility result for
    high-probability singleton identification at the coverage boundary
    and a finite-sample lower bound near the boundary.
\end{enumerate}

\section{Related Work}
\label{sec:related}

\paragraph{Model confidence sets, sequential forecast comparison, and argmin inference.}
The classical Model Confidence Set (MCS) reports a flexible-size set containing the best models with asymptotic confidence for a fixed evaluation sample \citep{hansen2011mcs}. Sequential MCS replaces fixed-sample inference by e-processes and confidence sequences, giving nonasymptotic time-uniform coverage for models ranked by scalar conditional-average losses \citep{arnold2026sequential}; related pairwise methods test sequential predictive ability using betting processes \citep{choe2024comparing}. Model Prediction Sets instead target the empirically best model for the next period and control a long-run model-set coverage criterion under nonstationarity \citep{li2025mps}. A separate line studies confidence sets for the index of the minimum component of a noisy mean vector, including high-dimensional and minimax formulations \citep{zhang2024winners,kim2025bestmodel}. Our problem differs from these settings because the objective is minimized over an unknown feasible set defined by several coverage moments, and the constrained optimizer may change over time.

\paragraph{Online conformal prediction and sequential risk control.}
Conformal methods for non-exchangeable or time-series data adapt prediction sets to temporal dependence and distribution shift \citep{barber2023conformal,xu2021enbpi,zaffran2022adaptive}. Adaptive conformal inference controls long-run coverage error under distribution shift \citep{gibbs2021adaptive,gibbs2024arbitrary}, while strongly adaptive methods target performance over multiple time scales \citep{bhatnagar2023improved}. Non-exchangeable conformal risk control extends risk guarantees beyond exchangeable data \citep{farinhas2024non}, and anytime-valid conformal risk control provides high-probability risk guarantees over a growing calibration sample at arbitrary inspection times \citep{hultberg2026anytime}. Multi-model online CP methods such as SAMOCP learn a predictor to deploy while controlling coverage error and adaptive regret \citep{hajihashemi2024multi}. Related approaches combine conformity scores through Bayesian model averaging \citep{bhagwat2025cbma} or use mixture-of-experts gating similarities to weight calibration residuals \citep{kong2026adaptive}. These methods construct, calibrate, or deploy prediction sets. \CCSMCS\ instead forms a confidence set for minimum-cost pipelines that satisfy the monitored coverage constraints.


\paragraph{Risk-controlled configuration selection.}
Learn-then-test reframes finite-sample risk control as multiple hypothesis testing \citep{angelopoulos2025learn}, and Pareto Testing combines risk certification with optimization over additional objectives \citep{laufer2023pareto}. Conformal risk control extends conformal calibration from coverage indicators to general monotone losses \citep{angelopoulos2024conformal}. These methods certify configurations or calibrate prediction sets against prescribed risk bounds. Our goal is to form a confidence set for the optimizer when the feasible set is uncertain.

\paragraph{Conformal model selection, post-selection validity, and coverage assessment.}
Several recent works study model selection within conformal prediction. Efficiency-oriented selection can invalidate predictive coverage when the same data are reused for selection and calibration, motivating procedures that preserve finite-sample validity after selection \citep{liang2024selection}. Stability-based and localized methods also provide validity guarantees for selected or locally adaptive conformal sets \citep{hegazy2025valid,wang2026localized}. Conformal Prediction Assessment studies conditional coverage evaluation using a learned reliability estimator and develops a model-selection criterion based on estimated local coverage behavior \citep{zhou2026assessment}. These works concern predictive validity or coverage assessment after selection. Our target is sequential method comparison under prefix-average conditional coverage constraints.}

\section{Setup and Target}
\label{sec:setup}


Let $(\Omega,\mathcal A,\Pp)$ be a probability space with an increasing filtration $(\G_t)_{t\geq0}$. The sigma-field $\G_{t-1}$ contains all information available when the prediction for outcome $Y_t$ is issued.

There are $M<\infty$ candidate conformal pipelines indexed by $[M]$ and $R<\infty$ monitored constraints indexed by $[R]$. A candidate is the complete conformal procedure, including its base forecaster, conformity score, calibration sample or rolling window, nominal level, update rule, hyperparameters, and randomization. Thus, candidates may share the same base forecaster while differing in their conformal layers. Each pipeline may adapt arbitrarily to the past, but its internal state and output at time $t$ must be $\G_{t-1}$-measurable. 



For every method $m\in[M]$ and constraint $r\in[R]$, let $C_{mrt}$ be the prediction set for $Y_t$. A conformal pipeline is designed to produce prediction sets at a prescribed nominal coverage level. Our comparison framework evaluates these sets through the monitored coverage constraints defined below. For notational simplicity, the index $r$ may encode a nominal level, a tail, a forecast horizon after the delayed-feedback construction, or a predictable subgroup.

Define the miscoverage indicator as
\begin{equation}
B_{mrt}=\1\{Y_t\notin C_{mrt}\}.
\label{eq:miss}
\end{equation}
Let $\tau_r\in(0,1)$ be the tolerated miscoverage rate, and let $a_{rt}\in[0,1]$ be a $\G_{t-1}$-measurable exposure weight. For example, $a_{rt}=1$ gives an overall constraint, while $a_{rt}=\1\{S_t=s\}$ selects a regime $S_t$ known at prediction time. Define 
\begin{align*}
X_{mrt}&=a_{rt}(B_{mrt}-\tau_r),
&\mu_{mrt}&=\E[X_{mrt}\mid\G_{t-1}],\\
A_{r,t}&=\sum_{s=1}^t a_{rs},
&\Gamma_{mr,t}&=
\begin{cases}
A_{r,t}^{-1}\sum_{s=1}^t\mu_{mrs},&A_{r,t}>0,\\
0,&A_{r,t}=0.
\end{cases}
\end{align*}
The inequality $\Gamma_{mr,t}\leq 0$ requires the exposure-weighted prefix average of the conditional miscoverage probabilities to be at most $\tau_r$. For a subgroup indicator, it is the average of the conditional risks over the periods when that subgroup is active. The target differs from usual marginal conformal coverage and is weaker than pointwise conditional coverage.

In practice, we allow for a prespecified tolerance
\begin{equation}\label{eq:tolerance}
\tau_r=\alpha_r+\varepsilon_r,\qquad \varepsilon_r\geq0,
\end{equation}
where $1-\alpha_r$ is the nominal conformal level. The quantity $\varepsilon_r$ is a prespecified practical tolerance that relaxes the coverage requirement. Section~\ref{sec:coverage-boundary} explains its role in certification.

\subsection{Observed prequential efficiency}

Let $c_{mt}\geq0$ be a $\G_{t-1}$-measurable set cost, such as interval length, Lebesgue measure, classification-set cardinality, or a prespecified weighted average of these quantities across levels and horizons. Let $w_t\geq0$ be predictable weights and assume $W_t=\sum_{s=1}^t w_s>0$. Define the weighted prequential cost as
\begin{equation}
Q_{m,t}=\frac{1}{W_t}\sum_{s=1}^t w_s c_{ms}.
\label{eq:objective}
\end{equation}
The objective is observed exactly because the set cost is known when the prediction is issued. This design isolates uncertainty about feasibility. Appendix~\ref{app:randomobjective} extends the method to outcome-dependent efficiency losses by adding confidence sequences for pairwise objective differences.

\begin{definition}[Dynamic feasible and oracle sets]
\label{def:oracle}
At time $t$, define
\[
\Vset_t^\star=\{m\in[M]:\Gamma_{mr,t}\leq0\ \text{for every }r\in[R]\},\qquad \M_t^\star=\argmin_{m\in\Vset_t^\star}Q_{m,t},
\]
with $\M_t^\star=\varnothing$ if $\Vset_t^\star=\varnothing$.
\end{definition}

Both sets are random and may change with $t$. Multiple pipelines may tie for the constrained optimum. Importantly, the target does not assume that any candidate is a true data-generating model or that the process is stationary.

\begin{definition}[Coverage-constrained sequential model confidence set]
A sequence $(\widehat\M_t^{\proj})_{t\geq1}$ is a level-$1-\delta$ \CCSMCS\ for $(\M_t^\star)_{t\geq1}$ if
\begin{equation}
\Pp\!\left(\forall t\geq1:\ \M_t^\star\subseteq\widehat\M_t^{\proj}\right)\geq1-\delta.
\label{eq:ccsmcsdef}
\end{equation}
\end{definition}

We call \eqref{eq:ccsmcsdef} \emph{finite-sample time-uniform coverage}. Its operational consequence is anytime validity: the analyst may inspect the set repeatedly and stop according to the observed data without invalidating the guarantee.

\section{Coverage-Constrained Sequential Model Confidence Sets}
\label{sec:method}

\subsection{Simultaneous confidence sequences for coverage risk}

Let
\begin{equation}
S_{mr,t}=\sum_{s=1}^tX_{mrs}.
\label{eq:SV}
\end{equation}
Following the empirical-Bernstein construction of \citet{howard2021time}, define the predictable estimate
\begin{equation}
\widehat B_{mrt}=
\begin{cases}
\tau_r,&A_{r,t-1}=0,\\
\dfrac{\sum_{s=1}^{t-1}a_{rs}B_{mrs}}{A_{r,t-1}},&A_{r,t-1}>0,
\end{cases}
\qquad
\widehat X_{mrt}=a_{rt}(\widehat B_{mrt}-\tau_r).
\label{eq:predictable_estimate}
\end{equation}
By construction, $\widehat B_{mrt}$ is $\G_{t-1}$-measurable and belongs to $[0,1]$. Since $a_{rt}\in[0,1]$, both $X_{mrt}$ and $\widehat X_{mrt}$ belong to the fixed interval $[-\tau_r,1-\tau_r]$, whose length is one. Define the empirical intrinsic-time process
\begin{equation}
V_{mr,t}=\sum_{s=1}^t(X_{mrs}-\widehat X_{mrs})^2.
\label{eq:empirical_variance}
\end{equation}
For $\rho>0$ and a two-sided error probability $\eta\in(0,1)$, let $b_{\eta,\rho}$ denote the gamma-exponential mixture boundary defined in Appendix~\ref{app:mixtureproof}. The empirical-Bernstein construction gives
\begin{equation}
\Pp\left(\forall t\geq1:\left|S_{mr,t}-\sum_{s=1}^t\mu_{mrs}\right|\leq b_{\eta,\rho}(V_{mr,t})\right)\geq1-\eta.
\label{eq:singlecs}
\end{equation}
Lemma~\ref{lem:mixed_bound} in Appendix~\ref{app:mixtureproof} proves \eqref{eq:singlecs}. Choose fixed weights $\pi_{mr}>0$ satisfying $\sum_{m,r}\pi_{mr}=1$, and set $\eta_{mr}=\delta\pi_{mr}$. For $A_{r,t}>0$, define
\begin{equation}
L_{mr,t}=\frac{S_{mr,t}-b_{\eta_{mr},\rho}(V_{mr,t})}{A_{r,t}},\qquad U_{mr,t}=\frac{S_{mr,t}+b_{\eta_{mr},\rho}(V_{mr,t})}{A_{r,t}}.
\label{eq:LU}
\end{equation}
If $A_{r,t}=0$, set $L_{mr,t}=-\infty$ and $U_{mr,t}=+\infty$. A union bound over the method--constraint pairs yields
\begin{equation}
\Pp\left(\forall t,m,r:\ L_{mr,t}\leq\Gamma_{mr,t}\leq U_{mr,t}\right)\geq1-\delta.
\label{eq:jointcs}
\end{equation}
Lemma~\ref{lem:gamma_interval} in Appendix~\ref{app:mixtureproof} proves \eqref{eq:jointcs}.

\subsection{Certified and possibly feasible methods}

A method $m$ is ruled infeasible if there exists at least one constraint $r$ such that $L_{mr,t}>0$. On the joint confidence event, this implies $\Gamma_{mr,t}>0$, so method $m$ violates at least one coverage constraint. The set of methods that have not been ruled infeasible is
\begin{equation}
\widehat\Vset_t^{\pos}=\{m:L_{mr,t}\leq0\ \text{for every }r\}.
\label{eq:possible}
\end{equation}
Among these methods, the certified feasible set is
\begin{equation}
\widehat\Vset_t^{\cert}=\{m:U_{mr,t}\leq0\ \text{for every }r\}.
\label{eq:certified}
\end{equation}
A method in $\widehat\Vset_t^{\pos}$ is possibly feasible because every confidence interval contains at least one nonpositive value, while a method in $\widehat\Vset_t^{\cert}$ is certified feasible because every value in each confidence interval is nonpositive. Therefore, $\widehat\Vset_t^{\cert}\subseteq\widehat\Vset_t^{\pos}$. On the joint confidence event, $\widehat\Vset_t^{\cert}\subseteq\Vset_t^\star\subseteq\widehat\Vset_t^{\pos}$ holds for every $t$.
If a constraint has received no exposure, its confidence interval is $(-\infty,+\infty)$. It does not rule out feasibility, since its lower bound is nonpositive, but it prevents certification since its upper bound is positive.

\subsection{Projection onto the constrained argmin}
\label{meth:simple_way}

At time $t$, define the rectangular confidence region $\Cset_t=\bigtimes_{m=1}^{M}\bigtimes_{r=1}^{R}[L_{mr,t},U_{mr,t}]$.
For $\gamma=(\gamma_{mr})\in\Cset_t$, let
\[
\Vset(\gamma)=\{m\in[M]:\gamma_{mr}\leq0\ \text{for every }r\in[R]\},
\qquad
\M^\star(\gamma)=\argmin_{m\in\Vset(\gamma)}Q_{m,t}
\]
when $\Vset(\gamma)\neq\varnothing$. Define the projection confidence set by
\begin{equation}
\widehat\M_t^{\proj}=\bigcup_{\substack{\gamma\in\Cset_t\\\Vset(\gamma)\neq\varnothing}}\M^\star(\gamma).
\label{eq:projection}
\end{equation}
The rectangular structure yields the following closed-form characterization.

\begin{proposition}\label{prop:projection}
Fix any time $t$. Suppose that the coordinate intervals in $\Cset_t$ are nonempty and each contains at least one finite value, and that $Q_{m,t}\in\R$ is fixed for every $m$. Define
\begin{equation}
\qcert_t=\min_{j\in\widehat\Vset_t^{\cert}}Q_{j,t},\qquad\min\varnothing=+\infty.
\label{eq:qcert}
\end{equation}
Then
\begin{equation}
\widehat\M_t^{\proj}=\left\{m\in\widehat\Vset_t^{\pos}:~Q_{m,t}\leq\qcert_t\right\}.
\label{eq:operational}
\end{equation}
\end{proposition}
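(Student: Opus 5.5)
I will prove the set equality \eqref{eq:operational} by double inclusion, working directly with the definition \eqref{eq:projection}. The key observation is that membership in $\Vset(\gamma)$ depends on each method's row $(\gamma_{m1},\dots,\gamma_{mR})$ separately. Because $\Cset_t$ is a product of intervals, each row can be chosen independently of the others. For a fixed method $m$, the possible rows make $m$ feasible iff $m\in\widehat\Vset_t^{\pos}$ (choose $\gamma_{mr}=L_{mr,t}\le 0$, or any finite nonpositive value in the interval when $L_{mr,t}=-\infty$). The rows make $m$ infeasible iff $m\notin\widehat\Vset_t^{\cert}$ (some $U_{mr,t}>0$, so pick a positive point of that interval and any finite point elsewhere). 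Certified methods are feasible under every $\gamma\in\Cset_t$. This is where the hypothesis that each interval is nonempty and contains a finite value is used: it guarantees a valid finite coordinate can always be chosen.

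\emph{Inclusion $\subseteq$.} Let $m\in\M^\star(\gamma)$ for some $\gamma\in\Cset_t$ with $\Vset(\gamma)\neq\varnothing$. Then $m\in\Vset(\gamma)$, so $\gamma_{mr}\le0$ for all $r$. Since $L_{mr,t}\le\gamma_{mr}$, this gives $m\in\widehat\Vset_t^{\pos}$. Also $\widehat\Vset_t^{\cert}\subseteq\Vset(\gamma)$, because $\gamma_{jr}\le U_{jr,t}\le0$ for every certified $j$. Hence $Q_{m,t}=\min_{\Vset(\gamma)}Q\le\min_{\widehat\Vset_t^{\cert}}Q=\qcert_t$. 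The inequality is trivial if the certified set is empty.

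\emph{Inclusion $\supseteq$.} Take $m\in\widehat\Vset_t^{\pos}$ with $Q_{m,t}\le\qcert_t$. I construct an adversarial $\gamma$ that keeps $m$ feasible and makes as many competitors infeasible as possible:
\begin{itemize}
\item for method $m$, set $\gamma_{mr}\le0$ inside its interval;
\item for each $j\ne m$ with $j\notin\widehat\Vset_t^{\cert}$, choose row $j$ so that some $\gamma_{jr}>0$;
\item for certified $j$, choose any point of its intervals.
\end{itemize}
Then $\Vset(\gamma)=\{m\}\cup\widehat\Vset_t^{\cert}$, which is nonempty. Its minimum cost is $\min(Q_{m,t},\qcert_t)=Q_{m,t}$, so $m\in\M^\star(\gamma)\subseteq\widehat\M_t^{\proj}$. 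Ties are harmless because the argmin is a set containing all minimizers. The case $m$ certified is covered automatically.

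The main obstacle is bookkeeping with infinite endpoints: when $A_{r,t}=0$ the intervals are $[-\infty,+\infty]$, and coordinates of $\gamma$ must be chosen finite, positive, or nonpositive as needed. I will handle this by a single lemma-like remark at the start. An interval $[L,U]$ with $L\le U$ containing a finite point contains a finite nonpositive point iff $L\le0$, and contains a finite positive point iff $U>0$. This follows from its convexity and the presence of a finite point. The double inclusion above then goes through verbatim.
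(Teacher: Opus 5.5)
Your proof is correct and follows the paper's argument: double inclusion, with the converse direction using the coordinatewise freedom of the rectangle $\Cset_t$ to build an adversarial $\gamma$. The only difference is cosmetic. You make every non-certified competitor infeasible, so that $\Vset(\gamma)=\{m\}\cup\widehat\Vset_t^{\cert}$, whereas the paper forces infeasibility only for the strictly cheaper competitors, which cannot be certified; either choice makes $m$ a minimizer.
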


The result is deterministic and uses only the rectangular confidence region and the fixed objective values. Appendix~\ref{app:projectionproof} gives the proof. We therefore define the operational \CCSMCS\ by the right-hand side of \eqref{eq:operational}, and Algorithm~\ref{alg:ccsmcs} summarizes the full procedure.

\section{Theoretical Guarantees}
\label{sec:theory}

The following assumption collects the measurability and design conditions used for the confidence sequences and the main coverage result.
\begin{assumption}\label{ass:protocol}
The integers $M,R$, tolerances $\tau_r$, confidence level $\delta\in(0,1)$, allocations $\pi_{mr}>0$ with $\sum_{m,r}\pi_{mr}=1$, objective definition, and mixture parameter $\rho>0$ are fixed before evaluation. For every $t,m,r$, the set $C_{mrt}$, exposure $a_{rt}$, cost $c_{mt}$, and weight $w_t$ are $\G_{t-1}$-measurable; $Y_t$ and the resulting miss indicators are $\G_t$-measurable. The costs are finite and $\sum_{s=1}^t w_s>0$ whenever $Q_{m,t}$ is reported.
\end{assumption}

\subsection{Time-uniform coverage and optional stopping}

\begin{theorem}\label{thm:main}
Under Assumption~\ref{ass:protocol}, construct the confidence sequences in \eqref{eq:LU} with fixed allocations $\eta_{mr}=\delta\pi_{mr}$. Then
\begin{equation}
\Pp\!\left(\forall t\geq1:\ \widehat\Vset_t^{\cert}\subseteq\Vset_t^\star \subseteq\widehat\Vset_t^{\pos}, \quad \M_t^\star\subseteq\widehat\M_t^{\proj}\right)\geq1-\delta.
\label{eq:mainresult}
\end{equation}
\end{theorem}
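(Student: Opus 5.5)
The plan is to place everything on the single joint confidence event
\[
E=\left\{\forall t\geq1,\ \forall m\in[M],\ \forall r\in[R]:\ L_{mr,t}\leq\Gamma_{mr,t}\leq U_{mr,t}\right\},
\]
which has probability at least $1-\delta$ by \eqref{eq:jointcs}. That bound rests on Lemma~\ref{lem:gamma_interval}, hence on the single-pair bound \eqref{eq:singlecs} together with the union bound over the allocations $\eta_{mr}=\delta\pi_{mr}$. Assumption~\ref{ass:protocol} is what licenses this step. Predictability of $C_{mrt}$ and $a_{rt}$, plus $\G_t$-measurability of $B_{mrt}$, make $X_{mrt}-\mu_{mrt}$ a bounded martingale difference sequence. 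The estimate $\widehat X_{mrt}$ is predictable. Fixing $\rho$ and $\pi_{mr}$ in advance means the boundaries are not data-snooped. After this, I will show that on $E$ every inclusion in \eqref{eq:mainresult} holds deterministically, simultaneously for all $t$. The remainder is therefore a pathwise argument at a fixed $\omega\in E$ and a fixed $t$.

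\textbf{Feasibility sandwich.} If $m\in\widehat\Vset_t^{\cert}$, then $\Gamma_{mr,t}\leq U_{mr,t}\leq0$ for every $r$, so $m\in\Vset_t^\star$. If $m\in\Vset_t^\star$, then $L_{mr,t}\leq\Gamma_{mr,t}\leq0$ for every $r$, so $m\in\widehat\Vset_t^{\pos}$. Constraints with $A_{r,t}=0$ need a separate check. There $\Gamma_{mr,t}=0$ by definition and the interval is $(-\infty,+\infty)$, so the sandwich still holds, and such an $m$ is never certified.

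\textbf{Oracle inclusion.} The most direct route avoids Proposition~\ref{prop:projection} and uses the definition \eqref{eq:projection} of $\widehat\M_t^{\proj}$. On $E$, the true vector $\gamma^\star=(\Gamma_{mr,t})_{m,r}$ lies in $\Cset_t$, and $\Vset(\gamma^\star)=\Vset_t^\star$. If $\Vset_t^\star=\varnothing$, then $\M_t^\star=\varnothing$ and there is nothing to prove. Otherwise $\M^\star(\gamma^\star)=\M_t^\star$ is one of the sets in the union \eqref{eq:projection}, so $\M_t^\star\subseteq\widehat\M_t^{\proj}$.

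Since the paper defines the operational set through the right-hand side of \eqref{eq:operational}, I will also verify the inclusion directly for that form, which removes any dependence on the hypotheses of Proposition~\ref{prop:projection}. Take $m\in\M_t^\star$. Then $m\in\Vset_t^\star\subseteq\widehat\Vset_t^{\pos}$ by the sandwich. Also $\widehat\Vset_t^{\cert}\subseteq\Vset_t^\star$, so $Q_{m,t}=\min_{j\in\Vset_t^\star}Q_{j,t}\leq\min_{j\in\widehat\Vset_t^{\cert}}Q_{j,t}=\qcert_t$, with the convention $\qcert_t=+\infty$ when $\widehat\Vset_t^{\cert}$ is empty. Hence $m$ belongs to the right-hand side of \eqref{eq:operational}.

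\textbf{Main obstacle.} The real difficulty is not this deterministic step but confirming that \eqref{eq:jointcs} is genuinely time-uniform, with one null set working for all $t$ at once. This is what delivers the "$\forall t$" inside the probability and, with it, validity under optional stopping. Two points need care: the boundary $b_{\eta,\rho}$ in \eqref{eq:singlecs} must be the time-uniform mixture supermartingale bound (via Ville's inequality), and the division by $A_{r,t}$ in \eqref{eq:LU} must preserve the event for each $t$. Because $A_{r,t}>0$ is a deterministic rescaling at each fixed time on the same event, it does. Finally, I will note that for any random time $T$, the event $E$ implies the inclusions at $t=T$, which gives the anytime-valid interpretation.
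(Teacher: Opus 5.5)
Your proposal is correct and follows the paper's proof. Both work on the joint confidence event of Lemma~\ref{lem:gamma_interval}, derive $\widehat\Vset_t^{\cert}\subseteq\Vset_t^\star\subseteq\widehat\Vset_t^{\pos}$ pathwise, and then use $Q_{m^\star,t}\leq\qcert_t$ to place every oracle method in the operational set. Your extra route is also valid: on that event the true matrix $(\Gamma_{mr,t})_{m,r}$ lies in $\Cset_t$, so $\M_t^\star$ is itself one of the sets in the union \eqref{eq:projection}, which gives inclusion in the projection-defined set without invoking Proposition~\ref{prop:projection}.
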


Because Theorem~\ref{thm:main} holds simultaneously over all times, no additional correction is needed for repeated inspection or data-dependent stopping. For any stopping time $T$, the probability that $T<\infty$ and $\M_T^\star\nsubseteq\widehat\M_T^{\proj}$ is at most $\delta$. Examples include stopping when $|\widehat\M_t^{\proj}|=1$, when a prespecified method is certified, or when a computational budget is exhausted.
\subsection{Contraction of the oracle confidence set}
\label{sec:oracle-contraction}

For $A_{r,t}>0$, define
$\mathrm{rad}_{mr,t}:=b_{\eta_{mr},\rho}(V_{mr,t})/A_{r,t}$,
and set $\mathrm{rad}_{mr,t}=+\infty$ when $A_{r,t}=0$.
Under persistent exposure $A_{r,t}\asymp t$ almost surely for every $r$
and balanced allocations $\eta_{mr}\asymp\delta/(MR)$,
Proposition~B.5 gives
\begin{equation}
\mathrm{rad}_{mr,t}=O\!\left(\sqrt{\frac{\log t+\log(MR/\delta)}{t}}\right)\longrightarrow 0.
\label{eq:oracle-contraction-rate}
\end{equation}
On the joint confidence event, method $m$ is excluded at time $t$
whenever either
\[
\begin{aligned}
&\exists r:\quad
\Gamma_{mr,t}>2\mathrm{rad}_{mr,t},\qquad\text{or}\\
&\exists j:\quad
Q_{j,t}<Q_{m,t}\quad\text{and}\quad\Gamma_{jr,t}+2\mathrm{rad}_{jr,t}\le0\quad\text{for every }r.
\end{aligned}
\]
Thus $\widehat{\mathcal M}^{\mathrm{proj}}_t=\mathcal M^\star_t$
whenever every non-oracle method satisfies either condition.
Under persistent exposure, this equality holds eventually if there is
$\zeta>0$ such that, for all sufficiently large $t$, each non-oracle
method either violates a constraint by at least $\zeta$ or has a
strictly cheaper competitor satisfying
$\Gamma_{jr,t}\le-\zeta$ for every $r$.
Appendix~\ref{app:contraction} proves these statements.

\ignore{
\subsection{Contraction away from the coverage boundary}\label{sec:contraction}

For $A_{r,t}>0$, define $\operatorname{rad}_{mr,t}:=\frac{b_{\eta_{mr},\rho}(V_{mr,t})}{A_{r,t}}$. Under persistent exposure, with $A_{r,t}\asymp t$ almost surely, Proposition~\ref{prop:contraction} in Appendix~\ref{app:contraction} gives
\begin{equation}
\operatorname{rad}_{mr,t}=O\left(\sqrt{\frac{\log t+\log(MR/\delta)}{t}}\right)\to0.
\label{eq:rad}
\end{equation}
for balanced allocations $\eta_{mr}\asymp\delta/(MR)$. On the joint confidence event, if $\Gamma_{mr,t}\leq-\zeta$ for all sufficiently large $t$, the constraint is eventually certified. If $\Gamma_{mr,t}\geq\zeta$ for all sufficiently large $t$, it eventually rules the method infeasible. The remaining difficulty is the boundary case $\Gamma_{mr,t}=0$.}

\subsection{The coverage boundary}\label{sec:coverage-boundary}

Section~\ref{sec:oracle-contraction} shows how feasibility margins
allow CC-SMCS to remove non-oracle methods.
At the boundary, even a known strict cost advantage need not suffice.
Consider $Y_t=B_t$ with $(B_t)_{t\ge1}$ i.i.d.\
$\mathrm{Bernoulli}(p)$, a single coverage constraint with threshold
$\tau\in(0,1)$, and $a_{1t}=w_t=1$.
The fixed intervals $C_{11t}=[0,1/2]$ and $C_{21t}=[0,1]$
have width costs $1/2<1$ and miscoverage probabilities $p$ and $0$.
The oracle is therefore $\{1\}$ when $p\le\tau$ and $\{2\}$
when $p>\tau$: the cost ordering is known, but the identity of the
oracle depends on whether the cheaper candidate is feasible.

\begin{proposition}
\label{prop:certification-limits}
Let $T_{\mathrm{cert}}$ be a stopping time for the natural filtration
of $(B_t)_{t\ge1}$, where $T_{\mathrm{cert}}<\infty$ means that
$p\le\tau$ is certified.
For any $\eta\in(0,1)$,
\[
\sup_{p>\tau}\mathbb P_p(T_{\mathrm{cert}}<\infty)\le\eta
\quad\Longrightarrow\quad
\mathbb P_\tau(T_{\mathrm{cert}}<\infty)\le\eta.
\]
\end{proposition}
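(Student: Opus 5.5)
The plan is a continuity argument in $p$. The event $\{T_{\mathrm{cert}}<\infty\}$ is the increasing union of the events $\{T_{\mathrm{cert}}\le n\}$. Each of these depends only on $(B_1,\dots,B_n)$, because $T_{\mathrm{cert}}$ is a stopping time for the natural filtration. So I will first show that $\mathbb P_p(T_{\mathrm{cert}}\le n)$ is continuous in $p$ for each fixed $n$, then pass to $n\to\infty$ by monotone convergence. The hypothesis controls the probability only for $p>\tau$, and the boundary value $p=\tau$ is obtained as a limit from above.

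The first step handles finite horizons. Since $\{T_{\mathrm{cert}}\le n\}\in\sigma(B_1,\dots,B_n)$, there is a set $E_n\subseteq\{0,1\}^n$ with $\{T_{\mathrm{cert}}\le n\}=\{(B_1,\dots,B_n)\in E_n\}$. Therefore
\[
\mathbb P_p(T_{\mathrm{cert}}\le n)=\sum_{x\in E_n}p^{\sum_i x_i}(1-p)^{n-\sum_i x_i}.
\]
This is a polynomial in $p$, hence continuous on $[0,1]$.

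The second step takes the limit in $p$. Fix $n$. For every $p\in(\tau,1)$ we have $\mathbb P_p(T_{\mathrm{cert}}\le n)\le\mathbb P_p(T_{\mathrm{cert}}<\infty)\le\eta$ by the hypothesis. Letting $p\downarrow\tau$, which is possible because $\tau<1$, continuity gives $\mathbb P_\tau(T_{\mathrm{cert}}\le n)\le\eta$.

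The third step takes the limit in $n$. Continuity of measure from below gives $\mathbb P_\tau(T_{\mathrm{cert}}<\infty)=\lim_n\mathbb P_\tau(T_{\mathrm{cert}}\le n)\le\eta$, which is the claim.

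The main obstacle is the order of limits. The map $p\mapsto\mathbb P_p(T_{\mathrm{cert}}<\infty)$ need not be continuous, since for instance the laws $\mathbb P_p$ of the infinite sequence are mutually singular for different $p$. So the argument must truncate at a fixed $n$ first, use continuity there, and only afterwards send $n\to\infty$ under the single measure $\mathbb P_\tau$. The monotone limit is taken under one fixed measure, so no uniformity in $p$ is needed. It is also worth remarking that if $T_{\mathrm{cert}}$ is allowed external randomization independent of $(B_t)$, the same argument applies after conditioning on that randomization, or directly, because the finite-horizon probability is still an average of polynomials in $p$.
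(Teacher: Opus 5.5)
Your proof is correct and follows the paper's argument: truncate to the event $\{T_{\mathrm{cert}}\le n\}\in\sigma(B_1,\ldots,B_n)$, use that its probability is a polynomial in $p$ to pass to $p\downarrow\tau$, and then let $n\to\infty$ by continuity from below under $\mathbb P_\tau$. Your remarks on the order of limits and on external randomization go slightly beyond the paper's proof but do not change it.
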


For any model-set rule $(\widehat{\mathcal S}_t)_{t\ge1}$ adapted
to this filtration and satisfying (\ref{eq:ccsmcsdef}) for every $p\in(0,1)$,
the first exclusion of candidate~2 is a safe certification time
with $\eta=\delta$.
Consequently,
\[
\mathbb P_\tau\!\left(
\exists t\ge1:\widehat{\mathcal S}_t=\{1\}
\right)\le\delta.
\]
Thus, a unique oracle with strictly lower observed cost does not
guarantee high-probability singleton identification under uniform
oracle coverage.
Appendix~\ref{app:certification-limits} proves these claims and gives
a finite-sample lower bound: for fixed threshold and error levels,
high-probability identification requires at least order
$(\tau-p)^{-2}$ observations as $p\uparrow\tau$.

The prespecified tolerance in (\ref{eq:tolerance}) creates a strict feasibility margin:
if $p\le\alpha_r$ and $\tau_r=\alpha_r+\varepsilon_r$ with
$\varepsilon_r>0$, then $p-\tau_r\le-\varepsilon_r$.
Under persistent exposure, this relaxed constraint is eventually
certified on the joint confidence event.

\ignore{
\subsection{The coverage boundary}
\label{sec:boundary}

The preceding separation argument requires a nonzero feasibility margin. At the boundary, shrinking confidence sequences alone do not imply certification. This difficulty already appears in the simplest stationary Bernoulli setting.

\begin{proposition}\label{prop:boundary}
Let $B_1,B_2,\ldots\overset{\mathrm{i.i.d.}}{\sim}\mathrm{Bernoulli}(p)$, let feasibility require $p\leq\tau$ for $\tau\in(0,1)$, and let $T_{\mathrm{cert}}$ be a stopping time with respect to the natural filtration of $(B_n)_{n\geq1}$, where $T_{\mathrm{cert}}<\infty$ means that the constraint is certified. If, for some $\eta\in(0,1)$,
\[
\sup_{p>\tau}\Pp_p(T_{\mathrm{cert}}<\infty)\leq\eta,\quad\text{then,}\quad \Pp_{\tau}(T_{\mathrm{cert}}<\infty)\leq\eta.
\]
\end{proposition}

Thus, Proposition~\ref{prop:boundary} gives a boundary impossibility result for uniformly safe sequential certification. A rule that controls false certification uniformly over $p>\tau$ cannot certify the boundary case $p=\tau$ with probability exceeding the same error level. This also clarifies the role of the tolerance in \eqref{eq:tolerance}. 

In the Bernoulli setting above, a method satisfying the nominal target $p\leq\alpha_r$ has margin $p-\tau_r\leq-\varepsilon_r$ when $\tau_r=\alpha_r+\varepsilon_r$ with $\varepsilon_r>0$. Under persistent exposure, Proposition~\ref{prop:contraction} then permits eventual certification of this strict margin. The gain comes from the prespecified relaxation from $\alpha_r$ to $\alpha_r+\varepsilon_r$. 
}

\section{Experiments}
\label{sec:experi}
\subsection{Simulations}
To evaluate the validity and power of CC-SMCS, we design three simulations that share the same variation process and point predictor but differ in oracle dynamics. These settings allow us to assess both oracle retention and adaptation to time-varying oracle models. Each simulation uses $T=3000$ observations and is replicated N=1,000 times.

\subsubsection{Data Generating Process}
\label{subsub:generate}
To obtain the population oracle for evaluating CC-SMCS, we directly generate the point forecast as $\widehat{Y}_t=Y_{t-1}$ and the outcome as $Y_t=Y_{t-1}+\sigma_tZ_t$, where $Z_t\overset{\mathrm{iid}}{\sim}\mathcal{N}(0,1)$. Following \citet{hansen2011mcs}, we generate $H_t=-\frac{\phi}{2(1+\phi)}+\phi H_{t-1}+\sqrt{\phi}\varepsilon_t$, where $\varepsilon_t\overset{\mathrm{iid}}{\sim}\mathcal{N}(0,1)$ and $\phi\in(0,1)$, with stationary initialization $H_0\sim\mathcal{N}\left(-\frac{\phi}{2(1-\phi^2)},\frac{\phi}{1-\phi^2}\right)$. We set $\phi=0.5$. The predictable baseline variation for forecasting period $t$ is defined as $\sigma_t^{0}=\exp\{H_{t-1}\}$.

Let $h_t=H_{t-1}+\frac{\phi}{2(1-\phi^2)}$, and $\sigma_{t}=\sigma^{0}_{t}\exp\{l_{t}|h_{t}|\}$. We design six methods. For M2--M6, we construct the candidate prediction intervals as
\[
C_{mr,t}=\left[\widehat{Y}_t-z\sigma_{mr,t},\widehat{Y}_t+z\sigma_{mr,t}\right],\qquad \sigma_{mr,t}=\sigma_t^{0}\exp\{\theta_{mr}|h_t|\}
\]
For M1, $C_{1r,t}=\left[\widehat{Y}_t+\kappa\sigma_{4r,t}-z\sigma_{4r,t},\widehat{Y}_t+\kappa\sigma_{4r,t}+z\sigma_{4r,t}\right]$,
$\kappa=0.5$, where we instead use a shifted center interval so that M1 has the same interval width as M4 but exhibits undercoverage due to the shifted center. 

For M2--M6, we set $(\theta_{2r},\ldots,\theta_{6r})=\xi(-2,-1,0,1,2)$, with $\xi=0.15$ and $z=\Phi^{-1}(0.975)$. In Simulation~1, M2 and M3 undercover, M4 is calibrated, and M5 and M6 overcover with wider intervals.
Under the true scale $\sigma_t$, the conditional miscoverage probability of a centered candidate is
\[
P\left\{Y_t\notin C_{mr,t}\mid\mathcal{G}_{t-1}\right\}=2\left[1-\Phi\left(z\frac{\sigma_{mr,t}}{\sigma_t}\right)\right].
\]

\paragraph{Simulation 1 (Stationary).} We set $l_t=0$ for all $t$, so M4 lies exactly on the coverage boundary and is the fixed oracle.

\paragraph{Simulation 2 (Structural Change).} We introduce a permanent structural break after $t_0=400$ by setting:
\[
l_t=\begin{cases}0, & t\leq 400,\\1.75\xi, & t>400.\end{cases}
\]
 
\paragraph{Simulation 3 (Linear Drift).} We let $l_t=2\xi(t-1)/(T-1)$, $t=1,\ldots, T$, so that the true forecast-error scale changes gradually over time. 

Further details of the simulation designs are provided in Appendix~\ref{sec:detail_sim}.

\subsubsection{Simulation Results}
We evaluate finite-horizon oracle retention using
\[
P_{\mathrm{TU}}=\frac{1}{N}\sum_{n=1}^{N}\mathbf{1}\left\{\mathcal M^{\star}_{n,t}\subseteq\widehat{\mathcal M}^{\mathrm{proj}}_{n,t}\text{ for all } t\leq T\right\},
\]
which estimates the retention probability in Theorem~\ref{thm:main} over the evaluation horizon. We also report the mean confidence radius across methods, $\bar{\mathrm{rad}}_t$, at three time points to illustrate contraction. For the boundary method $M_\tau=M_4$ in Simulation~1, we report the fraction of runs with certification by $T$,
\[
\widehat p_{\tau,T}=\frac{1}{N}\sum_{n=1}^{N}\mathbf{1}\left\{\exists\,t\leq T:M_\tau\in\widehat{\mathcal V}^{\mathrm{cert}}_{n,t}\right\}.
\]
Table~\ref{tab:resu_sim} shows oracle-retention rates close to one, decreasing mean radii, and rare boundary certification. The population oracle set is nonempty at every evaluated time in all replications.
\begin{table}[!ht]
\centering
\caption{Simulation results}
\label{tab:resu_sim}
\begin{threeparttable}
\resizebox{0.6\linewidth}{!}{%
\begin{tabular}{lcccccc}
\toprule
Simulations & $\delta$ & $P_{TU}$ & $\bar{\mathrm{rad}}_{t_{0.25}}$ & $\bar{\mathrm{rad}}_{t_{0.50}}$ & $\bar{\mathrm{rad}}_{t_{0.75}}$ & $\widehat p_{\tau,T}$ \\
\midrule
\multirow{2}{*}{Simulation 1}
& 0.10 & 1.000 & 0.0361 & 0.0234 & 0.0186 & 0.001 \\
& 0.25 & 1.000& 0.0326 & 0.0211 & 0.0168 & 0.001 \\
\midrule
\multirow{2}{*}{Simulation 2}
& 0.10 & 1.000 & 0.0402 & 0.0283 & 0.0231 & -- \\
& 0.25 & 1.000 & 0.0363 & 0.0255 & 0.0209 & -- \\
\midrule
\multirow{2}{*}{Simulation 3}
& 0.10 & 0.999 & 0.0372 & 0.0252 & 0.0209 & -- \\
& 0.25 & 0.999 & 0.0336 & 0.0227 & 0.0188 & -- \\
\bottomrule
\end{tabular}%
}
\end{threeparttable}
\vspace{2pt}
\parbox{\linewidth}{%
\footnotesize
\textit{Note:} $t_q=qT$ for $q\in\{0.25,0.50,0.75\}$.
$\widehat p_{\tau,T}$ is reported only for Simulation~1, where
$M_4$ remains on the coverage boundary throughout the evaluation.
}
\end{table}

\begin{figure}[!ht]
\centering
\begin{subfigure}{\linewidth}
\centering
\includegraphics[width=\linewidth]{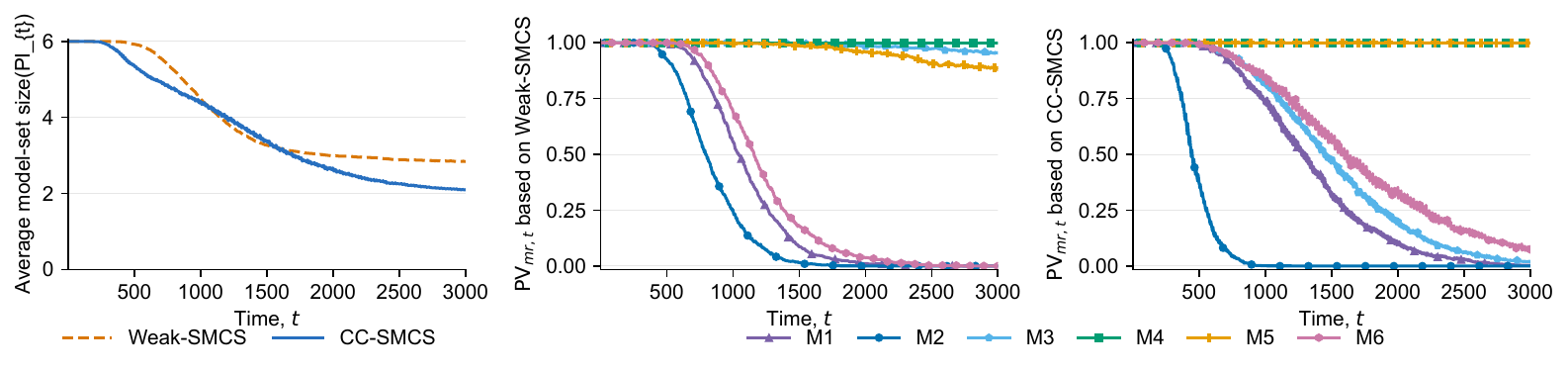}
\caption{Simulation 1}
\label{fig:sim1_010}
\end{subfigure}
\begin{subfigure}{\linewidth}
\centering
\includegraphics[width=\linewidth]{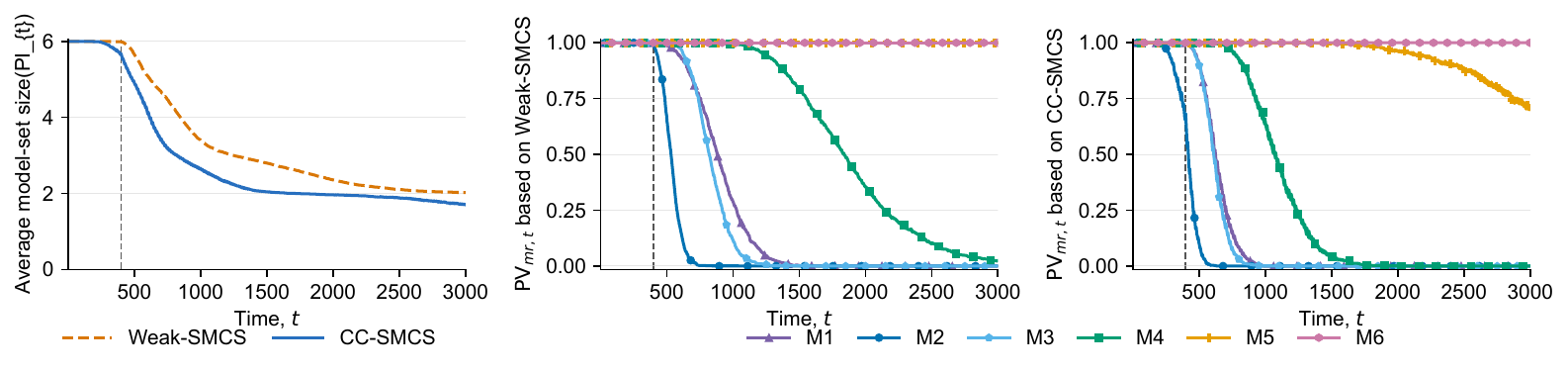}
\caption{Simulation 2}
\label{fig:sim2_010}
\end{subfigure}
\begin{subfigure}{\linewidth}
\centering
\includegraphics[width=\linewidth]{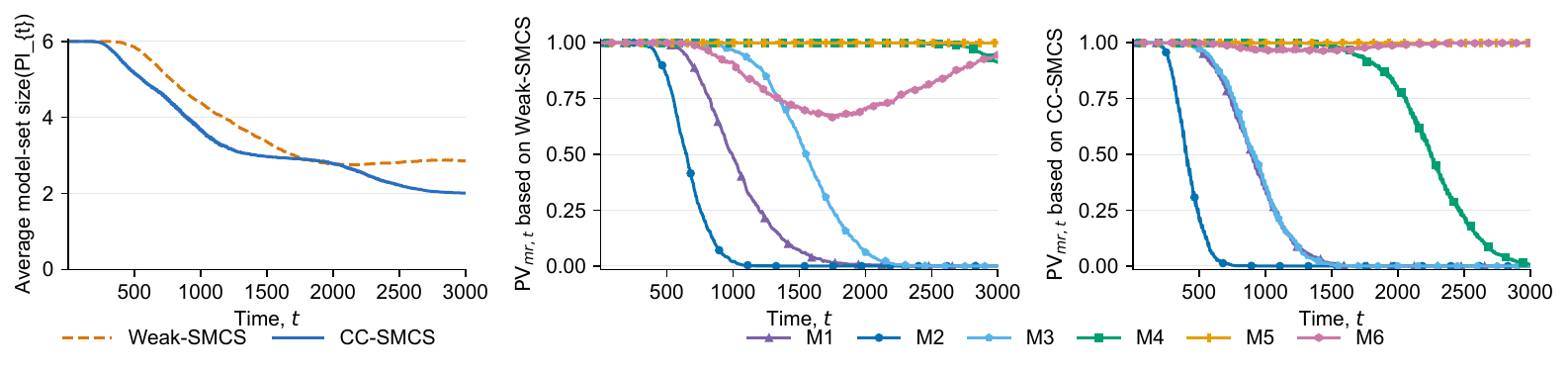}
\caption{Simulation 3}
\label{fig:sim3_010}
\end{subfigure}
\caption{The performance of simulations with $\delta=0.10$. The vertical dashed line in Figure~\ref{fig:sim2_010} indicates the structural break.}
\label{fig:sim_010}
\end{figure}

We summarize the model-set paths by the inclusion frequency $\mathrm{PV}_{m,t}$ and mean set size $\mathrm{PI}_t$:
\begin{equation}
\mathrm{PV}_{m,t}=\frac{1}{N}\sum_{n=1}^{N}\mathbf{1}\left\{m\in\widehat{\mathcal M}^{\mathrm{proj}}_{n,t}\right\},\qquad\mathrm{PI}_t=\frac{1}{N}\sum_{n=1}^{N}\left|\widehat{\mathcal M}^{\mathrm{proj}}_{n,t}\right|.
\end{equation}
Figure~\ref{fig:sim_010} compares CC-SMCS with Weak-SMCS \citep{arnold2026sequential} at $\delta=0.10$. Benchmark details and additional results are given in Appendices~\ref{sub:weak_smcs} and~\ref{sub:addi_re}, respectively.

In Simulation~1, CC-SMCS reduces the set while retaining $M_4$. The continued inclusion of $M_5$ is consistent with the difficulty of certifying the boundary oracle $M_4$. In Simulations~2 and~3, the oracle moves from $M_4$ to $M_5$ and then $M_6$ after a structural break or under gradual drift. CC-SMCS excludes $M_4$ as evidence of undercoverage accumulates. Under gradual drift, the inclusion frequency of $M_6$ first decreases and then increases as it becomes the constrained oracle. Weak-SMCS follows different paths because it compares standardized Winkler-score differences rather than interval width subject to coverage constraints.

\subsection{Empirical Study}
In this section, we present empirical applications of CC-SMCS using four real-world time-series datasets: French Electricity Price, Electricity Load, Weather, and Wind Power, which were used by \citet{zaffran2022adaptive}, \citet{hong2016probabilistic}, \citet{wu2023timesnet}, and \citet{xu2021enbpi}, respectively. The Electricity Load and Weather datasets were also utilized by \citet{lin2022conformal} and \citet{chen2024conformalized}, respectively. For each dataset, we employ DLinear \citep{zeng2023transformers}, PatchTST \citep{nie2023patchtst}, and TimesNet \citep{wu2023timesnet} as forecasting backbones, then construct prediction intervals by classical split conformal prediction (Split CP) \citep{lei2018distribution}, weighted conformal prediction (WCP) \citep{barber2023conformal}, Adaptive Conformal Inference (ACI) \citep{gibbs2021adaptive}, Aggregated ACI (AgACI) \citep{zaffran2022adaptive}, conformalized quantile regression (CQR) \citep{romano2019cqr}, and EnbPI \citep{xu2021enbpi}. Implementation details are provided in Appendix~\ref{sec:add_exper}.
\begin{figure}[!ht]
\centering
\begin{subfigure}{\linewidth}
\centering
\includegraphics[width=\linewidth]{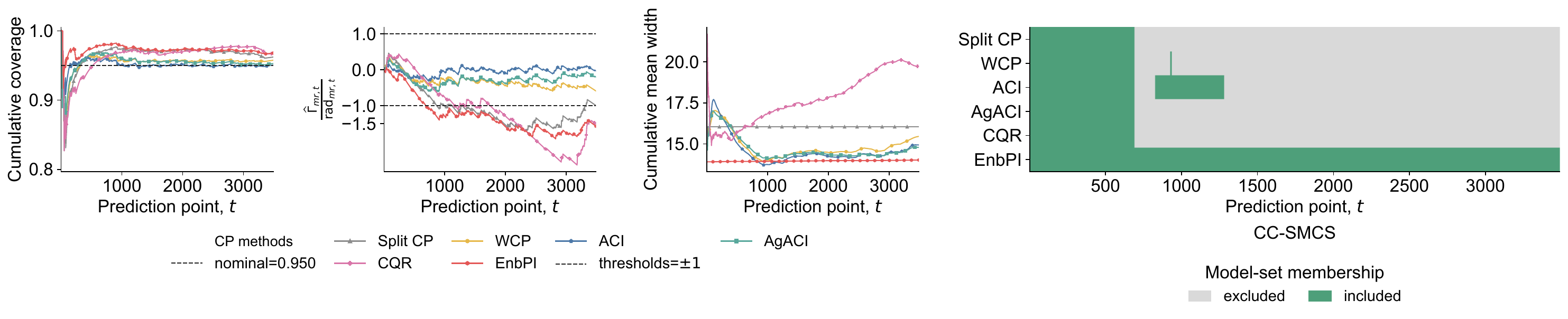}
\caption{French Electricity Price}
\label{fig:france_010}
\end{subfigure}
\begin{subfigure}{\linewidth}
\centering
\includegraphics[width=\linewidth]{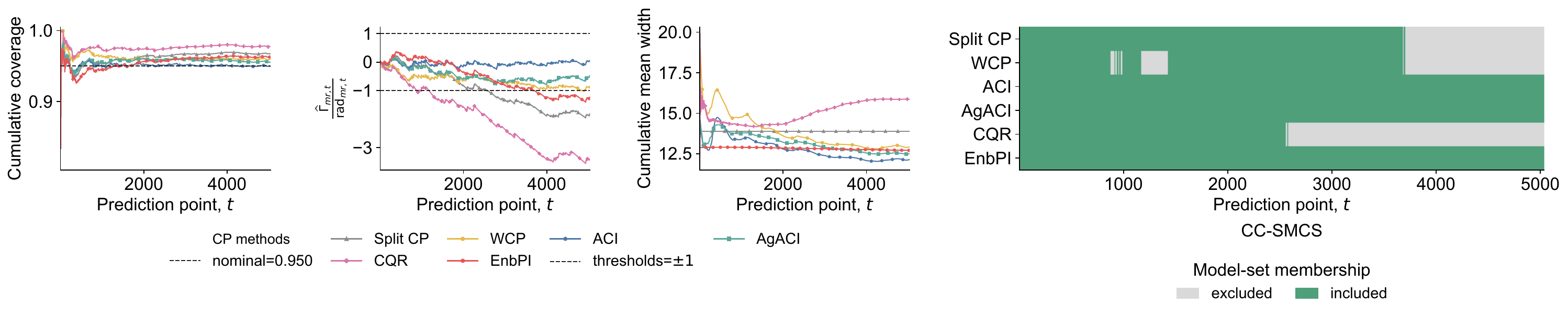}
\caption{Electricity Load}
\label{fig:load_010}
\end{subfigure}

\vspace{0.1cm}

\begin{subfigure}{\linewidth}
\centering
\includegraphics[width=\linewidth]{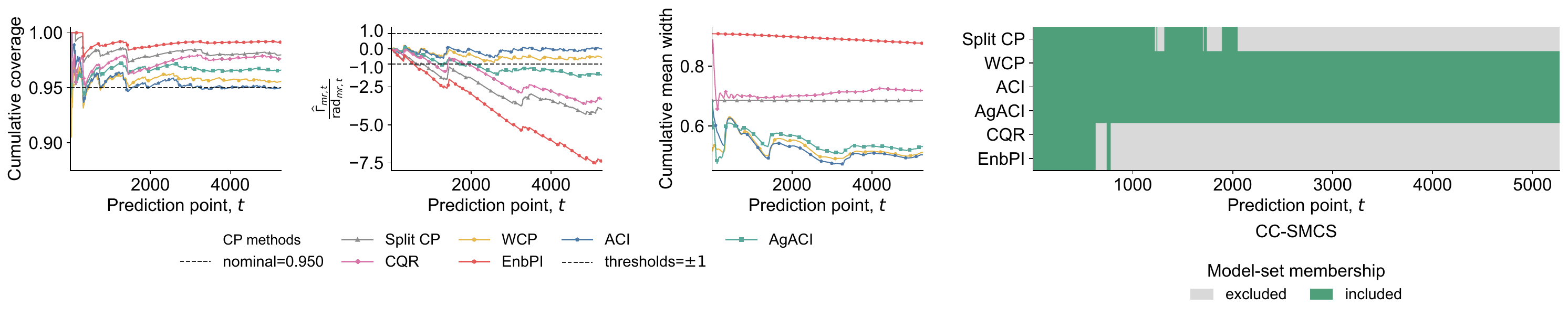}
\caption{Weather}
\label{fig:weather_010}
\end{subfigure}

\vspace{0.1cm}

\begin{subfigure}{\linewidth}
\centering
\includegraphics[width=\linewidth]{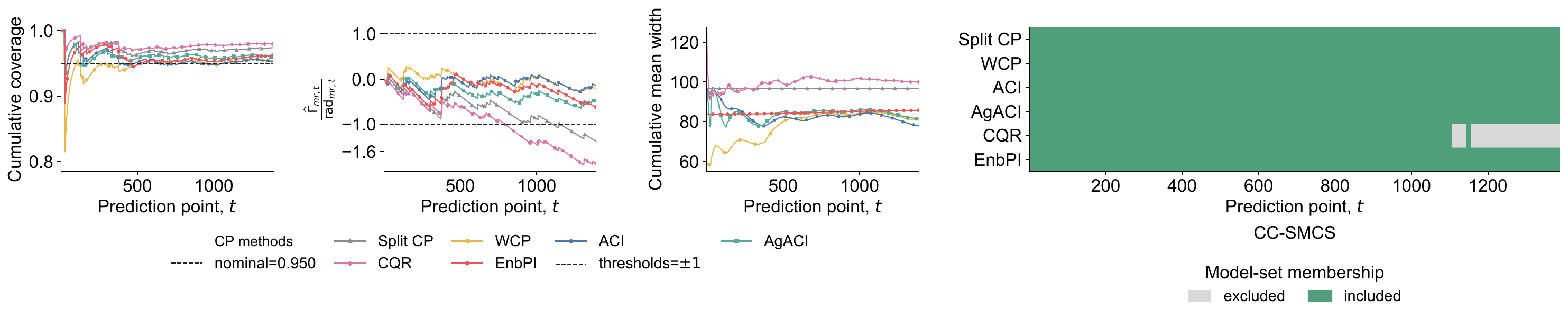}
\caption{Wind Power}
\label{fig:wind_010}
\end{subfigure}

\caption{Empirical results for DLinear at 95\% nominal coverage with $\delta=0.10$.}
\label{fig:dlinear_010}
\end{figure}

Figure~\ref{fig:dlinear_010} presents the DLinear results for French Electricity Price Electricity Load, Weather and Wind Power. Additional results are in Appendix~\ref{sub:add_re}. The constrained oracle is unobservable, so coverage and width help explain the reported model sets. The ratio $\widehat{\Gamma}_{mr,t}/\mathrm{rad}_{mr,t}$, with $\widehat{\Gamma}_{mr,t}=S_{mr,t}/A_{r,t}$, summarizes the coverage evidence: values above $1$ rule out feasibility, while values at most
$-1$ certify it. On four datasets, CQR is excluded despite high coverage when $Q_{\mathrm{CQR},t}>q_t^{\mathrm{cert}}$, illustrating how CC-SMCS separates coverage certification from efficiency comparison. The re-entry of ACI in French Electricity Price, WCP in Electricity Load and Split CP in Weather also reflect the time-varying nature of CC-SMCS.

\section{Conclusion}
\label{sec:conclusion}
We introduced \CCSMCS\ to compare conformal pipelines under coverage constraints. The method separates feasibility from efficiency and provides finite-sample, time-uniform oracle coverage. Our theory gives conditions for oracle recovery and identifies limits at the coverage boundary. Simulations and real data examples illustrate how the model sets respond to changes in coverage and interval width.

\bibliographystyle{model2-names}
\bibliography{references}

\clearpage

\newpage
\appendix
\section{Notation and Algorithm}
\label{app:notation}

Table~\ref{tab:notation} summarizes the main objects. The distinction between the three reported sets is essential: $\widehat\Vset_t^{\cert}$ is an inner confidence set for feasibility, $\widehat\Vset_t^{\pos}$ is an outer confidence set for feasibility, and $\widehat\M_t^{\proj}$ is an outer confidence set for the constrained optimizer.

\begin{table}[t]
\centering
\caption{Main notation.}
\label{tab:notation}
\small
\begin{tabular}{ll}
\toprule
Symbol & Meaning \\
\midrule
$C_{mrt}$ & Prediction set from method $m$ under constraint $r$ at round $t$ \\
$B_{mrt}$ & Miscoverage indicator $\1\{Y_t\notin C_{mrt}\}$ \\
$a_{rt}$ & Predictable exposure or subgroup weight \\
$\Gamma_{mr,t}$ & Prefix-average conditional excess miscoverage \\
$Q_{m,t}$ & Observed prefix-average set cost \\
$\Vset_t^\star$ & True feasible method set \\
$\M_t^\star$ & Minimum-cost subset of $\Vset_t^\star$ \\
$\widehat\Vset_t^{\cert}$ & Methods certified feasible \\
$\widehat\Vset_t^{\pos}$ & Methods not ruled infeasible \\
$\widehat\M_t^{\proj}$ & Coverage-constrained sequential model confidence set \\
\bottomrule
\end{tabular}
\end{table}

The main event in Theorem~\ref{thm:main} simultaneously implies
\[
\widehat\Vset_t^{\cert}\subseteq\Vset_t^\star\subseteq\widehat\Vset_t^{\pos},\qquad \M_t^\star\subseteq\widehat\M_t^{\proj}\quad\text{for every }t.
\]
It does not imply $\widehat\M_t^{\proj}\subseteq\Vset_t^\star$.

Algorithm~\ref{alg:ccsmcs} summarizes the \CCSMCS\ procedure. If no method is certified, then $q_t^{\cert}=+\infty$ and the output equals $\widehat\Vset_t^{\pos}$. A positive practical tolerance $\varepsilon_r$ creates a strict feasibility margin for methods satisfying the nominal target $p\leq\alpha_r$, which permits eventual certification under persistent exposure. Section~\ref{sec:coverage-boundary} gives the precise statement.
\begin{algorithm}[t]
\caption{Empirical-Bernstein CC-SMCS}
\label{alg:ccsmcs}
\begin{algorithmic}[1]

\REQUIRE Candidate prediction sets $\{C_{mrt}\}_{m\in[M],r\in[R]}$, thresholds $\{\tau_r\}_{r\in[R]}$, predictable exposures $\{a_{rt}\}_{r\in[R]}$, efficiency costs $\{Q_{m,t}\}_{m\in[M]}$, confidence level $\delta$, allocations $\{\pi_{mr}\}$ with $\sum_{m,r}\pi_{mr}=1$, and mixture parameter $\rho>0$.

\STATE Set $A_{r,0}=0$, $S_{mr,0}=0$, and $V_{mr,0}=0$ for all $m,r$.
\STATE Set $\eta_{mr}=\delta\pi_{mr}$ for all $m,r$.

\FOR{$t=1,2,\ldots$}

\FOR{$r=1,\ldots,R$}

\FOR{$m=1,\ldots,M$}

\IF{$A_{r,t-1}=0$}
\STATE Set $\widehat B_{mrt}=\tau_r$.
\ELSE
\STATE Set $\widehat B_{mrt}=\dfrac{\sum_{s=1}^{t-1}a_{rs}B_{mrs}}{A_{r,t-1}}$.
\ENDIF

\STATE Set $\widehat X_{mrt}=a_{rt}(\widehat B_{mrt}-\tau_r)$.

\ENDFOR

\ENDFOR

\STATE Observe $Y_t$.

\FOR{$r=1,\ldots,R$}

\STATE Update $A_{r,t}=A_{r,t-1}+a_{rt}$.

\FOR{$m=1,\ldots,M$}

\STATE Compute $B_{mrt}=\mathbf 1\{Y_t\notin C_{mrt}\}$ and $X_{mrt}=a_{rt}(B_{mrt}-\tau_r)$.
\STATE Update $S_{mr,t}=S_{mr,t-1}+X_{mrt}$.
\STATE Update $V_{mr,t}=V_{mr,t-1}+(X_{mrt}-\widehat X_{mrt})^2$.

\IF{$A_{r,t}>0$}
\STATE Compute $b_{\eta_{mr},\rho}(V_{mr,t})$ from $\mathfrak m_\rho(b_{\eta_{mr},\rho}(V_{mr,t}),V_{mr,t})=2/\eta_{mr}$.
\STATE Set $L_{mr,t}=\dfrac{S_{mr,t}-b_{\eta_{mr},\rho}(V_{mr,t})}{A_{r,t}}$ and $U_{mr,t}=\dfrac{S_{mr,t}+b_{\eta_{mr},\rho}(V_{mr,t})}{A_{r,t}}$.
\ELSE
\STATE Set $L_{mr,t}=-\infty$ and $U_{mr,t}=+\infty$.
\ENDIF

\ENDFOR

\ENDFOR

\STATE Construct $\widehat{\Vset}_t^{\mathrm{pos}}=\{m:L_{mr,t}\le0\text{ for every }r\}$.
\STATE Construct $\widehat{\Vset}_t^{\mathrm{cert}}=\{m:U_{mr,t}\le0\text{ for every }r\}$.
\STATE Set $q_t^{\mathrm{cert}}=\min_{j\in\widehat{\Vset}_t^{\mathrm{cert}}}Q_{j,t}$, with $\min\varnothing:=+\infty$.
\STATE Construct $\widehat\M_t^{\proj}=\{m\in\widehat{\Vset}_t^{\mathrm{pos}}:Q_{m,t}\le q_t^{\mathrm{cert}}\}$.
\STATE Output $\widehat{\Vset}_t^{\mathrm{pos}}$, $\widehat{\Vset}_t^{\mathrm{cert}}$, and $\widehat\M_t^{\proj}$.

\ENDFOR

\end{algorithmic}
\end{algorithm}
\section{Proofs}
\label{app:proofs}

\subsection{Empirical-Bernstein confidence sequence}
\label{app:mixtureproof}

For $\rho>0$ and $z,v\geq0$, define the gamma-exponential mixture function
\begin{equation}
\mathfrak m_\rho(z,v)=\frac{\rho^\rho e^{z+v}\gamma(v+\rho,z+v+\rho)}{\gamma(\rho,\rho)(z+v+\rho)^{v+\rho}},
\label{eq:mixture_function}
\end{equation}
where $\gamma(a,h)=\int_0^h y^{a-1}e^{-y}\,dy$ is the lower incomplete gamma function. The details of the selection of $\rho$ are provided in Appendix~\ref{sec:select_rho}.
For a two-sided error probability $\eta\in(0,1)$, define
\begin{equation}
b_{\eta,\rho}(v)=\inf\left\{z\geq0:\mathfrak m_\rho(z,v)\geq\frac{2}{\eta}\right\}.
\label{eq:boundary}
\end{equation}
The threshold $2/\eta$ allocates error probability $\eta/2$ to each tail.

\begin{lemma}\label{lem:hoeffding}
Let $\mu_{mrt}:=\E[X_{mrt}\mid\G_{t-1}]$, $D_{mrt}:=X_{mrt}-\mu_{mrt}$, and $\Delta_{mrt}:=(X_{mrt}-\widehat X_{mrt})^2$. Suppose that $a_{rt}\in[0,1]$ is $\G_{t-1}$-measurable, $\tau_r\in[0,1]$, $B_{mrt}\in\{0,1\}$, and $\widehat B_{mrt}\in[0,1]$ is $\G_{t-1}$-measurable. Then
\[
\E[D_{mrt}\mid\G_{t-1}]=0.
\label{eq:conditional_mean_zero}
\]
Moreover, for every $\lambda\in[0,1)$, with $\psi_E(\lambda):=-\log(1-\lambda)-\lambda$,
\begin{equation}
\E\left[\exp\{\lambda D_{mrt}-\psi_E(\lambda)\Delta_{mrt}\}\mid\G_{t-1}\right]\leq1.
\label{eq:conditionalhoeffding}
\end{equation}
\end{lemma}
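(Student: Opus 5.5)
The argument is the standard empirical-Bernstein supermartingale step of \citet{howard2021time}, adapted to the predictable centring $\widehat X_{mrt}$. The first claim is immediate. Assumption~\ref{ass:protocol} makes $Y_t$ $\G_t$-measurable and $C_{mrt}$, $a_{rt}$ $\G_{t-1}$-measurable, so $X_{mrt}$ is bounded and integrable. Hence $\mu_{mrt}=\E[X_{mrt}\mid\G_{t-1}]$ is well defined and $\E[D_{mrt}\mid\G_{t-1}]=\mu_{mrt}-\mu_{mrt}=0$.

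For \eqref{eq:conditionalhoeffding} I would write $\xi:=X_{mrt}-\widehat X_{mrt}$ and $c:=\mu_{mrt}-\widehat X_{mrt}$, so that $D_{mrt}=\xi-c$ and $\Delta_{mrt}=\xi^2$. Here $c$ is $\G_{t-1}$-measurable. Both $X_{mrt}$ and $\widehat X_{mrt}$ lie in an interval of length one, so $\xi\in[-1,1]$. Moreover $\E[\xi\mid\G_{t-1}]=c$.

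The key deterministic inequality is Fan et al.'s: for $\lambda\in[0,1)$ and $y\geq-1$,
\[
\exp\{\lambda y-\psi_E(\lambda)y^2\}\leq1+\lambda y .
\]
I would prove it by writing $f(y)=\log(1+\lambda y)-\lambda y+\psi_E(\lambda)y^2$ and showing $f\geq0$ on $[-1,\infty)$. Note that $f(0)=0$ and $f(-1)=\log(1-\lambda)+\lambda+\psi_E(\lambda)=0$. A sign analysis of $f'$ then shows $f$ decreases and then increases on $[-1,0]$ and stays nonnegative for $y\geq0$. I expect this to be the main technical step. If it proves delicate I would simply cite \citet{howard2021time} (or Fan, Grama and Liu, 2015), where it is stated in exactly this form.

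I apply the inequality with $y=\xi\geq-1$ and take conditional expectations, giving
\[
\E[\exp\{\lambda\xi-\psi_E(\lambda)\xi^2\}\mid\G_{t-1}]\leq1+\lambda c\leq e^{\lambda c}.
\]
Multiplying by the $\G_{t-1}$-measurable factor $e^{-\lambda c}$ yields
\[
\E[\exp\{\lambda(\xi-c)-\psi_E(\lambda)\xi^2\}\mid\G_{t-1}]\leq1,
\]
which is \eqref{eq:conditionalhoeffding} since $\xi-c=D_{mrt}$.

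The lemma is stated for $\lambda\in[0,1)$. The lower tail is handled separately: the analogous bound for $-D_{mrt}$ follows by applying the same argument to $-\xi\geq-1$ with $-c$ in place of $c$. This is what the two-sided boundary $b_{\eta,\rho}$ in \eqref{eq:singlecs} will later use.
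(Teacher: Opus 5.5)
Your proposal is correct and matches the paper's proof step for step: Fan's inequality applied to $X_{mrt}-\widehat X_{mrt}\in[-1,1]$, a conditional expectation that uses the predictability of $\widehat X_{mrt}$, then $1+\lambda c\le e^{\lambda c}$ and multiplication by the $\G_{t-1}$-measurable factor $e^{-\lambda c}$. One slip in your optional self-contained check of Fan's inequality: since $f'(y)=y\{2\psi_E(\lambda)-\lambda^2/(1+\lambda y)\}$ and $\lambda^2\le 2\psi_E(\lambda)\le\lambda^2/(1-\lambda)$, $f$ first \emph{increases} and then decreases on $[-1,0]$, not the other way round (decrease-then-increase together with $f(-1)=f(0)=0$ would force $f\le0$ there), and it is this increase-then-decrease shape that gives $f\ge0$.
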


\begin{proof}
By definition,
\[
\E[D_{mrt}\mid\G_{t-1}]=\E[X_{mrt}-\mu_{mrt}\mid\G_{t-1}]=\E[X_{mrt}\mid\G_{t-1}]-\mu_{mrt}=0.
\]
Since $B_{mrt}\in\{0,1\}$, $\widehat B_{mrt}\in[0,1]$, and $a_{rt}\in[0,1]$, both $X_{mrt}=a_{rt}(B_{mrt}-\tau_r)$ and $\widehat X_{mrt}=a_{rt}(\widehat B_{mrt}-\tau_r)$ belong to $[-\tau_r,1-\tau_r]$. Hence
\[
X_{mrt}-\widehat X_{mrt}=a_{rt}(B_{mrt}-\widehat B_{mrt})\in[-a_{rt},a_{rt}]\subseteq[-1,1]
\]
almost surely. Following the self-normalization argument in the proof of Theorem~4 of \citet{howard2021time}, for every $\lambda\in[0,1)$ and $x\geq-1$,
\begin{equation}
\exp\{\lambda x-\psi_E(\lambda)x^2\}\leq1+\lambda x.
\label{eq:fan_inequality}
\end{equation}
Applying \eqref{eq:fan_inequality} to $x=X_{mrt}-\widehat X_{mrt}$ gives
\[
\exp\{\lambda(X_{mrt}-\widehat X_{mrt})-\psi_E(\lambda)(X_{mrt}-\widehat X_{mrt})^2\}\leq1+\lambda(X_{mrt}-\widehat X_{mrt}).
\label{eq:fan_applied}
\]
Multiplying both sides by $\exp\{\lambda(\widehat X_{mrt}-\mu_{mrt})\}$ yields
\[
\exp\{\lambda(X_{mrt}-\mu_{mrt})-\psi_E(\lambda)(X_{mrt}-\widehat X_{mrt})^2\}\leq\exp\{\lambda(\widehat X_{mrt}-\mu_{mrt})\}\{1+\lambda(X_{mrt}-\widehat X_{mrt})\}.
\label{eq:self_normalization_step}
\]
Taking conditional expectations with respect to $\G_{t-1}$ and using the $\G_{t-1}$-measurability of $\widehat X_{mrt}$ gives
\[
\begin{aligned}
&\E\left[\exp\{\lambda(X_{mrt}-\mu_{mrt})-\psi_E(\lambda)(X_{mrt}-\widehat X_{mrt})^2\}\mid\G_{t-1}\right]\notag\\
&\leq\exp\{\lambda(\widehat X_{mrt}-\mu_{mrt})\}\{1+\lambda(\mu_{mrt}-\widehat X_{mrt})\}\notag\\
&\leq\exp\{\lambda(\widehat X_{mrt}-\mu_{mrt})\}\exp\{\lambda(\mu_{mrt}-\widehat X_{mrt})\}=1,
\label{eq:one_step_bound}
\end{aligned}
\]
where the last inequality uses $1-y\leq e^{-y}$.
\end{proof}

\begin{lemma}
\label{lem:mixed_bound}
Let
\[
Z_{mr,t}:=\sum_{s=1}^tD_{mrs},\qquad \Delta_{mrt}:=(X_{mrt}-\widehat X_{mrt})^2,\qquad V_{mr,t}:=\sum_{s=1}^t\Delta_{mrs}=\sum_{s=1}^t(X_{mrs}-\widehat X_{mrs})^2.
\]
For every $\eta\in(0,1)$ and $\rho>0$, with $b_{\eta,\rho}$ defined in \eqref{eq:boundary},
\begin{equation}
\Pp\left(\forall t\geq1:\ |Z_{mr,t}|\leq b_{\eta,\rho}(V_{mr,t})\right)\geq1-\eta.
\label{eq:mixturelemma}
\end{equation}
\end{lemma}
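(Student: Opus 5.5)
The plan is to build nonnegative supermartingales from Lemma~\ref{lem:hoeffding}, mix them over $\lambda$ with a gamma prior, and apply Ville's inequality once to each tail.

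\emph{Upper tail.} For fixed $\lambda\in[0,1)$ define
\[
L_t(\lambda)=\exp\Bigl\{\lambda Z_{mr,t}-\psi_E(\lambda)V_{mr,t}\Bigr\},\qquad L_0(\lambda)=1.
\]
Since $L_t(\lambda)=L_{t-1}(\lambda)\exp\{\lambda D_{mrt}-\psi_E(\lambda)\Delta_{mrt}\}$ and $L_{t-1}(\lambda)$ is $\G_{t-1}$-measurable, \eqref{eq:conditionalhoeffding} gives $\E[L_t(\lambda)\mid\G_{t-1}]\le L_{t-1}(\lambda)$. So $L_t(\lambda)$ is a nonnegative supermartingale with initial value one.

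Next, take the mixing distribution $F$ on $[0,1)$ used by \citet{howard2021time} for the gamma-exponential boundary. This is the law of $\lambda=1-e^{-\xi}$ with $\xi$ gamma distributed, up to their parametrization by $\rho$. Define
\[
\bar L_t=\int L_t(\lambda)\,dF(\lambda).
\]
By Tonelli (Fubini for nonnegative integrands), $\bar L_t$ is again a nonnegative supermartingale with $\bar L_0=1$. The key computational step is to verify that $\bar L_t=\mathfrak m_\rho(Z_{mr,t},V_{mr,t})$ whenever $Z_{mr,t}\ge0$. Writing $e^{\lambda z-\psi_E(\lambda)v}=(1-\lambda)^{v}e^{\lambda(z+v)}$, the substitution $y=(z+v+\rho)\lambda$-type change of variables turns the integral into a lower incomplete gamma function, which yields \eqref{eq:mixture_function}.

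I expect this identification with the exact normalization $\rho^\rho/\gamma(\rho,\rho)$ to be the main obstacle. If it does not match cleanly, I would simply cite Proposition~9 of \citet{howard2021time}, which states that $\mathfrak m_\rho$ is the mixture of the sub-exponential (``$\psi_E$'') supermartingales. Only two properties of $\mathfrak m_\rho$ are actually needed:
\begin{itemize}
\item[(i)] it is a supermartingale mixture, and
\item[(ii)] it is nondecreasing in $z$ on $z\ge0$, which holds because the integrand increases in $z$ for $\lambda\ge0$.
\end{itemize}

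Applying Ville's inequality to $\bar L_t$ gives
\[
\Pp\bigl(\exists t:\bar L_t\ge 2/\eta\bigr)\le \eta/2.
\]
On the event $Z_{mr,t}>b_{\eta,\rho}(V_{mr,t})$, monotonicity and the definition \eqref{eq:boundary} as an infimum give $\mathfrak m_\rho(Z_{mr,t},V_{mr,t})\ge2/\eta$. Continuity of $\mathfrak m_\rho$ in $z$ handles the infimum. Hence
\[
\Pp\bigl(\exists t: Z_{mr,t}>b_{\eta,\rho}(V_{mr,t})\bigr)\le\eta/2.
\]

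\emph{Lower tail.} Here Lemma~\ref{lem:hoeffding} needs a symmetric version for $-D_{mrt}$, since \eqref{eq:fan_inequality} requires $x\ge-1$. Apply the same argument to $x=-(X_{mrt}-\widehat X_{mrt})$, which also lies in $[-1,1]$. Multiplying by $\exp\{-\lambda(\widehat X_{mrt}-\mu_{mrt})\}$ and repeating the computation gives
\[
\E\bigl[\exp\{-\lambda D_{mrt}-\psi_E(\lambda)\Delta_{mrt}\}\mid\G_{t-1}\bigr]\le1.
\]
The variance proxy $\Delta_{mrt}$ is unchanged, so the same mixture and Ville argument give
\[
\Pp\bigl(\exists t:-Z_{mr,t}>b_{\eta,\rho}(V_{mr,t})\bigr)\le\eta/2.
\]

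A union bound over the two tails yields \eqref{eq:mixturelemma}.
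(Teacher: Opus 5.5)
Your proposal follows the paper's proof step for step:
\begin{itemize}
\item the $\psi_E$ supermartingales built from Lemma~\ref{lem:hoeffding};
\item mixing over $\lambda$ with conditional Tonelli;
\item Ville's inequality at level $2/\eta$ for each tail;
\item inversion of the boundary through monotonicity of $\mathfrak m_\rho$ in $z$;
\item the reflected one-step bound for $\widehat X_{mrt}-X_{mrt}\in[-1,1]$;
\item a union bound.
\end{itemize}

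The one concrete error is your description of the mixing law, which is the step you yourself flagged. The mixing distribution is not the law of $1-e^{-\xi}$ with $\xi$ gamma. It is the density $f_\rho(\lambda)=\frac{\rho^\rho}{\gamma(\rho,\rho)}(1-\lambda)^{\rho-1}e^{-\rho(1-\lambda)}$ on $[0,1)$. Equivalently, $1-\lambda$ itself follows a gamma law with shape and rate $\rho$, truncated to $(0,1]$. The normalization comes from $\int_0^1u^{\rho-1}e^{-\rho u}\,du=\rho^{-\rho}\gamma(\rho,\rho)$.

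With this density, the correct substitution is $y=(1-\lambda)(z+v+\rho)$, not a multiple of $\lambda$. The factor $e^{-\rho(1-\lambda)}$ in the density is exactly what turns $z+v$ into $z+v+\rho$ in $\mathfrak m_\rho$.

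Under your stated law, with $\xi$ of shape $k$ and rate $\beta$, the variable $\lambda$ has density proportional to $(-\log(1-\lambda))^{k-1}(1-\lambda)^{\beta-1}$. No choice of $k,\beta$ reproduces $\mathfrak m_\rho$: the mixing measure is determined by $\mathfrak m_\rho(\cdot,0)$, its moment generating function. So the explicit verification would fail as written.

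Your fallback of citing Proposition~9 of \citet{howard2021time} is legitimate, since that result is precisely this mixture identity. With the density corrected, the argument is complete. A minor point: because you work with the strict event $Z_{mr,t}>b_{\eta,\rho}(V_{mr,t})$, monotonicity alone suffices to invert the infimum, and continuity is not needed. The paper uses $\geq$, which is where continuity enters.
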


\begin{proof}
Fix $\lambda\in[0,1)$ and define
\[
M_t(\lambda):=\exp\{\lambda Z_{mr,t}-\psi_E(\lambda)V_{mr,t}\}.
\label{eq:a_mart}
\]
Because $D_{mrt}$ and $\Delta_{mrt}$ are $\G_t$-measurable, $M_t(\lambda)$ is $\G_t$-measurable. By Lemma~\ref{lem:hoeffding},
\[
\E\left[\exp\{\lambda D_{mrt}-\psi_E(\lambda)\Delta_{mrt}\}\mid\G_{t-1}\right]\leq1.
\]
Using $Z_{mr,t}=Z_{mr,t-1}+D_{mrt}$ and $V_{mr,t}=V_{mr,t-1}+\Delta_{mrt}$, we have
\[
\begin{aligned}
M_t(\lambda)&=\exp\{\lambda Z_{mr,t-1}+\lambda D_{mrt}-\psi_E(\lambda)V_{mr,t-1}-\psi_E(\lambda)\Delta_{mrt}\}\notag\\
&=M_{t-1}(\lambda)\exp\{\lambda D_{mrt}-\psi_E(\lambda)\Delta_{mrt}\}.
\label{eq:martingale_factorization}
\end{aligned}
\]
Taking conditional expectations and using that $M_{t-1}(\lambda)$ is $\G_{t-1}$-measurable gives
\begin{align}
\E[M_t(\lambda)\mid\G_{t-1}]&=M_{t-1}(\lambda)\E\left[\exp\{\lambda D_{mrt}-\psi_E(\lambda)\Delta_{mrt}\}\mid\G_{t-1}\right]\notag\\
&\leq M_{t-1}(\lambda).
\label{eq:supermartingale_property}
\end{align}
Since $M_t(\lambda)\geq0$, taking expectations and iterating the preceding inequality yields
\[
\E[M_t(\lambda)]\leq\E[M_{t-1}(\lambda)]\leq\cdots\leq\E[M_0(\lambda)]=1.
\]
Thus $M_t(\lambda)$ is integrable, and $(M_t(\lambda))_{t\geq0}$ is a nonnegative $(\G_t)_{t\geq0}$-supermartingale with $M_0(\lambda)=1$.

Since
\[
e^{\lambda z-\psi_E(\lambda)v}=e^{\lambda(z+v)}(1-\lambda)^v,
\label{eq:subexp_kernel}
\]
following the gamma-exponential mixture construction in Proposition~9 of \citet{howard2021time}, for $\rho>0$ define
\[
f_\rho(\lambda)=\frac{\rho^\rho}{\gamma(\rho,\rho)}(1-\lambda)^{\rho-1}e^{-\rho(1-\lambda)},\qquad 0\leq\lambda<1.
\label{eq:density}
\]
The factor $\rho^\rho/\gamma(\rho,\rho)$ is the normalizing constant, since
\[
\int_0^1(1-\lambda)^{\rho-1}e^{-\rho(1-\lambda)}\,d\lambda=\rho^{-\rho}\gamma(\rho,\rho).
\label{eq:density_normalization}
\]
Define the mixture process
\[
M_t:=\int_0^1M_t(\lambda)f_\rho(\lambda)\,d\lambda.
\label{eq:mixture_process}
\]
Since the integrand is nonnegative, Tonelli's theorem gives
\begin{align*}
\E[M_t]&=\E\left[\int_0^1M_t(\lambda)f_\rho(\lambda)\,d\lambda\right]\\
&=\int_0^1\E[M_t(\lambda)]f_\rho(\lambda)\,d\lambda\\
&\leq\int_0^1f_\rho(\lambda)\,d\lambda=1.
\end{align*}
Thus $M_t$ is integrable. By conditional Tonelli's theorem and \eqref{eq:supermartingale_property},
\begin{align*}
\E[M_t\mid\G_{t-1}]&=\int_0^1\E[M_t(\lambda)\mid\G_{t-1}]f_\rho(\lambda)\,d\lambda\\
&\leq\int_0^1M_{t-1}(\lambda)f_\rho(\lambda)\,d\lambda=M_{t-1}.
\end{align*}
Moreover,
\[
M_0=\int_0^1M_0(\lambda)f_\rho(\lambda)\,d\lambda=\int_0^1f_\rho(\lambda)\,d\lambda=1.
\]
Hence $(M_t)_{t\geq0}$ is a nonnegative $(\G_t)_{t\geq0}$-supermartingale. By Ville's inequality,
\begin{equation}
\Pp\left(\exists t\geq1:\ M_t\geq\frac{2}{\eta}\right)\leq\frac{\eta}{2}.
\label{eq:ville_mixture}
\end{equation}
For $z\geq0$ and $v\geq0$, the corresponding mixture integral is
\[
\int_0^1e^{\lambda z-\psi_E(\lambda)v}f_\rho(\lambda)\,d\lambda=\frac{\rho^\rho}{\gamma(\rho,\rho)}\int_0^1e^{\lambda(z+v)}(1-\lambda)^{v+\rho-1}e^{-\rho(1-\lambda)}\,d\lambda.
\]
Set $y=(1-\lambda)(z+v+\rho)$. Then
\begin{align*}
\int_0^1e^{\lambda z-\psi_E(\lambda)v}f_\rho(\lambda)\,d\lambda
&=\frac{\rho^\rho e^{z+v}}{\gamma(\rho,\rho)}\int_0^1(1-\lambda)^{v+\rho-1}e^{-(1-\lambda)(z+v+\rho)}\,d\lambda\\
&=\frac{\rho^\rho e^{z+v}}{\gamma(\rho,\rho)(z+v+\rho)^{v+\rho}}\int_0^{z+v+\rho}y^{v+\rho-1}e^{-y}\,dy\\
&=\frac{\rho^\rho e^{z+v}\gamma(v+\rho,z+v+\rho)}{\gamma(\rho,\rho)(z+v+\rho)^{v+\rho}}\\
&=\mathfrak m_\rho(z,v),
\end{align*}
which is the expression used in \eqref{eq:mixture_function}.
For fixed $v\geq0$,
\[
\frac{\partial}{\partial z}\mathfrak m_\rho(z,v)=\int_0^1\lambda e^{\lambda z-\psi_E(\lambda)v}f_\rho(\lambda)\,d\lambda>0.
\label{eq:monotone}
\]
Thus $\mathfrak m_\rho(z,v)$ is continuous and strictly increasing in $z\geq0$. By the definition of $b_{\eta,\rho}(v)$, if $Z_{mr,t}\geq b_{\eta,\rho}(V_{mr,t})$, then $Z_{mr,t}\geq0$ and $M_t\geq2/\eta$. Therefore \eqref{eq:ville_mixture} gives
\begin{equation}
\Pp\left(\exists t\geq1:\ Z_{mr,t}\geq b_{\eta,\rho}(V_{mr,t})\right)\leq\frac{\eta}{2}.
\label{eq:upper_tail}
\end{equation}
For the lower tail, consider the reflected variables
\[
X_{mrt}^{-}:=-X_{mrt},\qquad \widehat X_{mrt}^{-}:=-\widehat X_{mrt},\qquad \mu_{mrt}^{-}:=-\mu_{mrt}.
\]
Then
\[
X_{mrt}^{-}-\widehat X_{mrt}^{-}=-(X_{mrt}-\widehat X_{mrt})\in[-1,1],
\]
and
\[
(X_{mrt}^{-}-\widehat X_{mrt}^{-})^2=(X_{mrt}-\widehat X_{mrt})^2.
\]
Moreover,
\[
X_{mrt}^{-}-\mu_{mrt}^{-}=-(X_{mrt}-\mu_{mrt}),
\]
so the corresponding centered cumulative process and empirical intrinsic time satisfy
\[
Z_{mr,t}^{-}:=\sum_{s=1}^t(X_{mrs}^{-}-\mu_{mrs}^{-})=-Z_{mr,t},\qquad V_{mr,t}^{-}:=\sum_{s=1}^t(X_{mrs}^{-}-\widehat X_{mrs}^{-})^2=V_{mr,t}.
\]
Applying \eqref{eq:fan_inequality} to $X_{mrt}^{-}-\widehat X_{mrt}^{-}$ and repeating the preceding one-step argument gives, for every $\lambda\in[0,1)$,
\begin{equation}
M_t^{-}(\lambda):=\exp\{-\lambda Z_{mr,t}-\psi_E(\lambda)V_{mr,t}\}
\label{eq:reflected_mart}
\end{equation}
as a nonnegative $(\G_t)_{t\geq0}$-supermartingale. Mixing over the same density $f_\rho$ gives
\begin{equation}
M_t^{-}:=\int_0^1M_t^{-}(\lambda)f_\rho(\lambda)\,d\lambda.
\label{eq:reflected_mixture}
\end{equation}
By the same Tonelli argument, $(M_t^{-})_{t\geq0}$ is a nonnegative supermartingale with $M_0^{-}=1$. Hence Ville's inequality gives
\begin{equation}
\Pp\left(\exists t\geq1:\ M_t^{-}\geq\frac{2}{\eta}\right)\leq\frac{\eta}{2}.
\label{eq:ville_lower}
\end{equation}
If $-Z_{mr,t}\geq b_{\eta,\rho}(V_{mr,t})$, then $-Z_{mr,t}\geq0$ and $M_t^{-}\geq2/\eta$. Therefore
\begin{equation}
\Pp\left(\exists t\geq1:\ -Z_{mr,t}\geq b_{\eta,\rho}(V_{mr,t})\right)\leq\frac{\eta}{2}.
\label{eq:lower_tail}
\end{equation}
A union bound over \eqref{eq:upper_tail} and \eqref{eq:lower_tail} yields
\begin{equation}
\Pp\left(\forall t\geq1:\ |Z_{mr,t}|\leq b_{\eta,\rho}(V_{mr,t})\right)\geq1-\eta.
\label{eq:two_sided_cs}
\end{equation}
This proves \eqref{eq:mixturelemma}.
\end{proof}

\begin{lemma}
\label{lem:gamma_interval}
Choose fixed weights $\pi_{mr}>0$ with $\sum_{m,r}\pi_{mr}=1$ and set $\eta_{mr}=\delta\pi_{mr}$. For $A_{r,t}>0$, let $L_{mr,t}$ and $U_{mr,t}$ be the lower and upper bounds defined in \eqref{eq:LU}. If $A_{r,t}=0$, set $L_{mr,t}=-\infty$ and $U_{mr,t}=+\infty$. Then
\begin{equation}
\Pp\left(\forall t,m,r:\ L_{mr,t}\leq\Gamma_{mr,t}\leq U_{mr,t}\right)\geq1-\delta.
\end{equation}
\end{lemma}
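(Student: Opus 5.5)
The proof follows from Lemma~\ref{lem:mixed_bound} applied pair by pair, a union bound over the $MR$ method--constraint pairs, and a deterministic rescaling by $A_{r,t}$.

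\textbf{Step 1: one good event per pair.} For each fixed pair $(m,r)$, apply Lemma~\ref{lem:mixed_bound} with $\eta=\eta_{mr}=\delta\pi_{mr}\in(0,1)$. This is valid because $\pi_{mr}\in(0,1]$ and $\delta\in(0,1)$; if $\pi_{mr}=1$ there is only one pair and $\eta=\delta<1$. Define
\[
E_{mr}=\bigl\{\forall t\ge1:\ |Z_{mr,t}|\le b_{\eta_{mr},\rho}(V_{mr,t})\bigr\},
\]
so that $\Pp(E_{mr}^c)\le\eta_{mr}$. Since $Z_{mr,t}=\sum_{s\le t}(X_{mrs}-\mu_{mrs})=S_{mr,t}-\sum_{s\le t}\mu_{mrs}$, the event $E_{mr}$ is exactly the event appearing in \eqref{eq:singlecs}.

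\textbf{Step 2: union bound.} Set $E=\bigcap_{m,r}E_{mr}$. Because the number of pairs is finite,
\[
\Pp(E^c)\le\sum_{m,r}\eta_{mr}=\delta\sum_{m,r}\pi_{mr}=\delta.
\]
The allocations are fixed in advance by Assumption~\ref{ass:protocol}, so the union bound needs no further adjustment.

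\textbf{Step 3: deterministic conversion on $E$.} Fix $t,m,r$ and work on $E$. There are two cases.
\begin{itemize}
\item If $A_{r,t}=0$, then $L_{mr,t}=-\infty$ and $U_{mr,t}=+\infty$, and $\Gamma_{mr,t}=0$ by definition, so the inclusion holds trivially.
\item If $A_{r,t}>0$, start from $\bigl|S_{mr,t}-\sum_{s\le t}\mu_{mrs}\bigr|\le b_{\eta_{mr},\rho}(V_{mr,t})$ and divide by $A_{r,t}>0$. This preserves the inequalities and gives $L_{mr,t}\le A_{r,t}^{-1}\sum_{s\le t}\mu_{mrs}=\Gamma_{mr,t}\le U_{mr,t}$, using the definitions \eqref{eq:LU}.
\end{itemize}
Hence $E\subseteq\{\forall t,m,r:\ L_{mr,t}\le\Gamma_{mr,t}\le U_{mr,t}\}$, and the claim follows from Step~2.

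\textbf{Where care is needed.} There is no real obstacle; two points need checking.
\begin{itemize}
\item The hypotheses of Lemma~\ref{lem:hoeffding} must hold. Under Assumption~\ref{ass:protocol}, $a_{rt}$ is $\G_{t-1}$-measurable, and $\widehat B_{mrt}\in[0,1]$ is $\G_{t-1}$-measurable by \eqref{eq:predictable_estimate}.
\item The conditional means $\mu_{mrt}$ exist, since $X_{mrt}$ is bounded.
\end{itemize}
Also, $A_{r,t}$ is itself random. This causes no difficulty because the event $E$ is uniform over $t$, so dividing by $A_{r,t}$ is carried out pathwise.
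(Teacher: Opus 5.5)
Your proposal is correct and follows essentially the same route as the paper: you apply Lemma~\ref{lem:mixed_bound} to each pair with level $\eta_{mr}=\delta\pi_{mr}$ and take a union bound to get total error $\delta$. On the resulting joint event you write $Z_{mr,t}=S_{mr,t}-A_{r,t}\Gamma_{mr,t}$, divide by $A_{r,t}>0$, and note that the case $A_{r,t}=0$ holds trivially by construction.
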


\begin{proof}
For each $(m,r)$, apply Lemma~\ref{lem:mixed_bound} with error level $\eta_{mr}=\delta\pi_{mr}$. A union bound over all method--constraint pairs gives
\begin{align}
&\Pp\left(\forall m,r,\ \forall t\geq1:\ |Z_{mr,t}|\leq b_{\eta_{mr},\rho}(V_{mr,t})\right)\notag\\
&\qquad\geq1-\sum_{m=1}^M\sum_{r=1}^R\Pp\left(\exists t\geq1:\ |Z_{mr,t}|>b_{\eta_{mr},\rho}(V_{mr,t})\right)\notag\\
&\qquad\geq1-\sum_{m=1}^M\sum_{r=1}^R\eta_{mr}=1-\delta.
\label{eq:joint_boundary}
\end{align}

On this joint event, if $A_{r,t}>0$, then by the definition of $\Gamma_{mr,t}$,
\[
Z_{mr,t}=S_{mr,t}-\sum_{s=1}^t\mu_{mrs}=S_{mr,t}-A_{r,t}\Gamma_{mr,t}.
\]
Hence
\[
\left|S_{mr,t}-A_{r,t}\Gamma_{mr,t}\right|\leq b_{\eta_{mr},\rho}(V_{mr,t}),
\]
which is equivalent to
\[
-b_{\eta_{mr},\rho}(V_{mr,t})\leq S_{mr,t}-A_{r,t}\Gamma_{mr,t}\leq b_{\eta_{mr},\rho}(V_{mr,t}).
\]
Rearranging gives
\[
S_{mr,t}-b_{\eta_{mr},\rho}(V_{mr,t})\leq A_{r,t}\Gamma_{mr,t}\leq S_{mr,t}+b_{\eta_{mr},\rho}(V_{mr,t}).
\]
Since $A_{r,t}>0$, division by $A_{r,t}$ preserves both inequalities:
\[
\frac{S_{mr,t}-b_{\eta_{mr},\rho}(V_{mr,t})}{A_{r,t}}\leq\Gamma_{mr,t}\leq\frac{S_{mr,t}+b_{\eta_{mr},\rho}(V_{mr,t})}{A_{r,t}}.
\]
By the definition of the confidence bounds in \eqref{eq:LU}, this is exactly
\[
L_{mr,t}\leq\Gamma_{mr,t}\leq U_{mr,t}.
\]
If $A_{r,t}=0$, the same inclusion holds by construction. The result follows from \eqref{eq:joint_boundary}.
\end{proof}

\subsection{Exact rectangular projection}
\label{app:projectionproof}

\begin{proof}[Proof of Proposition~\ref{prop:projection}]
We prove both inclusions.

First take $m\in\widehat\M_t^{\proj}$. There exists $\gamma\in\Cset_t$ such that $m\in\M^\star(\gamma)$. Because $m$ is feasible under $\gamma$, for every $r$ we have $\gamma_{mr}\leq0$ and $L_{mr,t}\leq\gamma_{mr}$; hence $m\in\widehat\Vset_t^{\pos}$. Next let $j\in\widehat\Vset_t^{\cert}$. Then $U_{jr,t}\leq0$ for every $r$, so every $\gamma\in\Cset_t$ makes $j$ feasible. Optimality of $m$ under the selected $\gamma$ implies $Q_{m,t}\leq Q_{j,t}$. Therefore, $Q_{m,t}\leq\qcert_t$, proving membership in the right-hand side of \eqref{eq:operational}.

Conversely, suppose $m\in\widehat\Vset_t^{\pos}$ and $Q_{m,t}\leq\qcert_t$. We construct a matrix $\gamma\in\Cset_t$ under which $m$ is optimal. Since $m\in\widehat\Vset_t^{\pos}$, for every $r$ we have $L_{mr,t}\leq0$, so the interval $[L_{mr,t},U_{mr,t}]$ intersects $(-\infty,0]$. Choose a finite value $\gamma_{mr}\in[L_{mr,t},U_{mr,t}]\cap(-\infty,0]$ for every $r$. Then $\gamma_{mr}\leq0$ for every $r$, and therefore $m\in\Vset(\gamma)$.

Now consider any competitor $j$ with $Q_{j,t}<Q_{m,t}$. Such a competitor cannot belong to $\widehat\Vset_t^{\cert}$, because otherwise the definition of $\qcert_t$ would give $\qcert_t\leq Q_{j,t}<Q_{m,t}$, contradicting $Q_{m,t}\leq\qcert_t$. Hence $j\notin\widehat\Vset_t^{\cert}$, so there exists at least one coordinate $r(j)$ such that $U_{j,r(j),t}>0$. The interval $[L_{j,r(j),t},U_{j,r(j),t}]$ therefore contains a finite positive value. Choose
\[
\gamma_{j,r(j)}\in[L_{j,r(j),t},U_{j,r(j),t}]\cap(0,\infty).
\]
For every remaining coordinate $r\neq r(j)$ of this competitor, choose any finite $\gamma_{jr}\in[L_{jr,t},U_{jr,t}]$. Then $\gamma_{j,r(j)}>0$, so $j\notin\Vset(\gamma)$.

For every competitor $j$ with $Q_{j,t}\geq Q_{m,t}$, choose an arbitrary finite value $\gamma_{jr}\in[L_{jr,t},U_{jr,t}]$ for every $r$. Because $\Cset_t$ is a Cartesian product of the coordinate intervals, all of these choices can be made independently and together define a matrix $\gamma\in\Cset_t$.

Under this matrix, $m$ is feasible, and every method with a strictly smaller cost than $Q_{m,t}$ is infeasible. Therefore every feasible competitor $j$ satisfies $Q_{j,t}\geq Q_{m,t}$, so
\[
m\in\arg\min_{j\in\Vset(\gamma)}Q_{j,t}=\M^\star(\gamma).
\]
Hence $m\in\widehat\M_t^{\proj}$.
\end{proof}

The proof relies on coordinatewise freedom inside a rectangle. With a nonrectangular confidence region, setting one competitor infeasible can restrict the feasible values for other methods, and the operational formula need not remain exact.

\subsection{Main time-uniform theorem}
\label{app:mainproof}

\begin{proof}[Proof of Theorem~\ref{thm:main}]
Define the simultaneous confidence event
\[
\mathcal E:=\left\{\forall t\geq1,\ \forall m,r:\ L_{mr,t}\leq\Gamma_{mr,t}\leq U_{mr,t}\right\}.
\]
By Lemma~\ref{lem:gamma_interval}, $\Pp(\mathcal E)\geq1-\delta$.

Suppose that $\mathcal E$ holds and fix any time $t$. If $m\in\widehat\Vset_t^{\cert}$, then $U_{mr,t}\leq0$ for every $r$. Since $\Gamma_{mr,t}\leq U_{mr,t}$ on $\mathcal E$, we have $\Gamma_{mr,t}\leq0$ for every $r$, and hence $m\in\Vset_t^\star$. Therefore
\[
\widehat\Vset_t^{\cert}\subseteq\Vset_t^\star.
\]
Conversely, if $m\in\Vset_t^\star$, then $\Gamma_{mr,t}\leq0$ for every $r$. Since $L_{mr,t}\leq\Gamma_{mr,t}$ on $\mathcal E$, we have $L_{mr,t}\leq0$ for every $r$, so $m\in\widehat\Vset_t^{\pos}$. Thus
\[
\Vset_t^\star\subseteq\widehat\Vset_t^{\pos}.
\]

Now take any $m^\star\in\M_t^\star$. The preceding inclusion gives $m^\star\in\widehat\Vset_t^{\pos}$. Moreover, every $j\in\widehat\Vset_t^{\cert}$ is truly feasible, so the optimality of $m^\star$ over $\Vset_t^\star$ gives
\[
Q_{m^\star,t}\leq Q_{j,t},\qquad \forall j\in\widehat\Vset_t^{\cert}.
\]
Hence $Q_{m^\star,t}\leq q_t^{\cert}$, where $q_t^{\cert}=+\infty$ if $\widehat\Vset_t^{\cert}=\varnothing$. By the definition of $\widehat\M_t^{\proj}$, we have $m^\star\in\widehat\M_t^{\proj}$. Therefore
\[
\M_t^\star\subseteq\widehat\M_t^{\proj}.
\]

Since $t$ was arbitrary, all three inclusions hold for every $t$ on $\mathcal E$. Because $\Pp(\mathcal E)\geq1-\delta$, the result follows.
\end{proof}

\begin{corollary}\label{cor:stopping}
Let $T$ be any stopping time with respect to $(\G_t)$ that is finite almost surely. Under the conditions of Theorem~\ref{thm:main},
\begin{equation}
\Pp\!\left(\M_T^\star\subseteq\widehat\M_T^{\proj}\right)\geq1-\delta.
\end{equation}
For a possibly infinite stopping time, the corresponding statement is
$\Pp(T<\infty,\,\M_T^\star\nsubseteq\widehat\M_T^{\proj})\leq\delta$.
\end{corollary}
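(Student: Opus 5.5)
The plan is to derive Corollary~\ref{cor:stopping} directly from Theorem~\ref{thm:main}, with no new concentration argument, because Theorem~\ref{thm:main} is already a statement about a single event holding uniformly over all $t$.

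\textbf{The good event.} Let $\mathcal E'$ be the event in \eqref{eq:mainresult}, or simply the confidence event $\mathcal E$ from the proof of Theorem~\ref{thm:main}, which implies it. On $\mathcal E'$ the inclusion $\M_t^\star\subseteq\widehat\M_t^{\proj}$ holds for every $t\geq1$ simultaneously. Theorem~\ref{thm:main} gives $\Pp(\mathcal E')\geq1-\delta$.

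\textbf{General (possibly infinite) stopping time.} Consider the failure event $F=\{T<\infty,\ \M_T^\star\nsubseteq\widehat\M_T^{\proj}\}$.
\begin{itemize}
\item I first check that $F$ is measurable. Write $F=\bigcup_{t\geq1}\{T=t\}\cap\{\M_t^\star\nsubseteq\widehat\M_t^{\proj}\}$. For each fixed $t$, the set $\{\M_t^\star\nsubseteq\widehat\M_t^{\proj}\}$ is measurable because $\M_t^\star$ and $\widehat\M_t^{\proj}$ are finite-set-valued functions of finitely many random variables.
\item Next, on $F$ there is some finite $t=T(\omega)$ at which the inclusion fails. Such an $\omega$ is therefore not in $\mathcal E'$.
\item Hence $F\subseteq(\mathcal E')^c$, which gives $\Pp(F)\leq\delta$.
\end{itemize}
If $T$ can take the value $0$, I will restrict to $T\geq1$ as in the statement, or note that $t=0$ is excluded by convention.

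\textbf{Almost surely finite stopping time.} In this case $\Pp(T<\infty)=1$, so
\[
\Pp(\M_T^\star\subseteq\widehat\M_T^{\proj})\geq\Pp(T<\infty)-\Pp(F)\geq1-\delta.
\]

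\textbf{Main obstacle.} There is essentially none; the argument never uses the stopping-time property. This is the point of time-uniformity: the bound holds for any random time, even one that is not a stopping time. The only care needed is the measurability bookkeeping above. I will remark that the result requires no optional stopping theorem, because uniformity over $t$ already absorbs data-dependent stopping.
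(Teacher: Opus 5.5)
Your proposal is correct and matches the paper's proof: on the time-uniform event of Theorem~\ref{thm:main}, the inclusion holds at the realized index $T(\omega)$, so $\{T<\infty,\ \M_T^\star\nsubseteq\widehat\M_T^{\proj}\}\subseteq\mathcal E^c$, and the almost surely finite case follows directly. Your measurability check and your remark that the stopping-time property is never used are accurate details that the paper leaves implicit.
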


\begin{proof}[Proof of Corollary~\ref{cor:stopping}]
Let $\mathcal E$ denote the time-uniform event in Theorem~\ref{thm:main}. On $\mathcal E$, the inclusion $\M_t^\star\subseteq\widehat\M_t^{\proj}$ holds for every finite deterministic index $t$. Hence, on $\mathcal E\cap\{T<\infty\}$, it also holds at the realized index $T(\omega)$. Therefore
\[
\left\{T<\infty,\ \M_T^\star\not\subseteq\widehat\M_T^{\proj}\right\}\subseteq\mathcal E^c,
\]
and consequently
\[
\Pp\left(T<\infty,\ \M_T^\star\not\subseteq\widehat\M_T^{\proj}\right)\leq\Pp(\mathcal E^c)\leq\delta.
\]
If $T<\infty$ almost surely, this reduces to
\[
\Pp\left(\M_T^\star\subseteq\widehat\M_T^{\proj}\right)\geq1-\delta.
\]
\end{proof}

\subsection{Contraction rate}
\label{app:contraction}

\begin{proposition}\label{prop:contraction}
Suppose that $a_{rt}\in[0,1]$ and the exposure is persistent almost surely, in the sense that $A_{r,t}\asymp t$. For fixed $\rho>0$ and $\eta_{mr}\in(0,1)$, the empirical-Bernstein confidence sequence based on the gamma-exponential mixture boundary satisfies
\[
\operatorname{rad}_{mr,t}=O\left(\sqrt{\frac{\log t+\log(1/\eta_{mr})}{t}}\right)\to0.
\]
If $\eta_{mr}\asymp\delta/(MR)$, then
\[
\operatorname{rad}_{mr,t}=O\left(\sqrt{\frac{\log t+\log(MR/\delta)}{t}}\right)\to0.
\]
\end{proposition}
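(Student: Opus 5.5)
Since $\operatorname{rad}_{mr,t}=b_{\eta_{mr},\rho}(V_{mr,t})/A_{r,t}$ and $A_{r,t}\asymp t$ by assumption, it suffices to prove that $b_{\eta,\rho}(v)=O(\sqrt{v\log v}+\sqrt{v\log(1/\eta)}+\log(1/\eta))$ for large $v$, and that $V_{mr,t}\le A_{r,t}\le t$. The second claim is immediate. By Lemma~\ref{lem:hoeffding}, $X_{mrt}-\widehat X_{mrt}=a_{rt}(B_{mrt}-\widehat B_{mrt})\in[-a_{rt},a_{rt}]$, so each increment of $V_{mr,t}$ is at most $a_{rt}^2\le a_{rt}$. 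Hence $V_{mr,t}\le A_{r,t}\le t$. Because $b_{\eta,\rho}$ is nondecreasing in $v$, this gives $b_{\eta_{mr},\rho}(V_{mr,t})\le b_{\eta_{mr},\rho}(t)$. Dividing by $A_{r,t}\gtrsim t$ then yields
\[
\operatorname{rad}_{mr,t}=O\bigl(\sqrt{(\log t+\log(1/\eta_{mr}))/t}+\log(1/\eta_{mr})/t\bigr).
\]
For fixed $\eta_{mr}$ the last term is of lower order. Substituting $\log(1/\eta_{mr})=\log(MR/\delta)+O(1)$ gives the second display.

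\textbf{Monotonicity of $b_{\eta,\rho}$ in $v$.} To check it, I would lower-bound $\mathfrak m_\rho(z,v)$ rather than differentiate it. Since $\psi_E(\lambda)\ge0$, the integrand $e^{\lambda z-\psi_E(\lambda)v}$ is decreasing in $v$. So for fixed $z$ the map $v\mapsto\mathfrak m_\rho(z,v)$ is nonincreasing. Combined with the strict increase of $\mathfrak m_\rho$ in $z$ shown in Lemma~\ref{lem:mixed_bound}, this makes $b_{\eta,\rho}(v)$ nondecreasing in $v$.

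\textbf{Growth bound on $b_{\eta,\rho}$.} This is the main step. I would lower-bound the mixture by restricting the integral to a window around the optimal $\lambda^\star\approx z/v$. Use $\psi_E(\lambda)\le\lambda^2/(2(1-\lambda))\le\lambda^2$ for $\lambda\le1/2$. Then for $\lambda\in[\lambda^\star/2,\lambda^\star]$ with $\lambda^\star\le1/2$,
\[
\mathfrak m_\rho(z,v)\ge \int_{\lambda^\star/2}^{\lambda^\star} e^{\lambda z-\lambda^2 v}f_\rho(\lambda)\,d\lambda\ge c_\rho\,\tfrac{\lambda^\star}{2}\,\exp\{\tfrac{\lambda^\star}{2}z-(\lambda^\star)^2v\},
\]
where $c_\rho=\inf_{\lambda\le1/2}f_\rho(\lambda)>0$ depends only on $\rho$. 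Choose $z=C\sqrt{v(\log v+\log(2/\eta))}$ and $\lambda^\star=z/(4v)$, which is at most $1/2$ once $v$ is large relative to $\log(1/\eta)$. The exponent is then of order $z^2/v=C^2(\log v+\log(2/\eta))/c$. This dominates the polynomial prefactor loss $\log(1/\lambda^\star)\asymp\tfrac12\log v$ when $C$ is a large absolute constant, so $\mathfrak m_\rho(z,v)\ge2/\eta$. By the definition of $b_{\eta,\rho}$ in \eqref{eq:boundary}, $b_{\eta,\rho}(v)\le z$.

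\textbf{Small-$v$ regime.} When $\lambda^\star$ would exceed $1/2$, i.e.\ $v\lesssim\log(1/\eta)$, I would take $\lambda\in[1/4,1/2]$ instead. This gives $b_{\eta,\rho}(v)=O(v+\log(1/\eta))=O(\log(1/\eta))$. That range only affects finitely many $t$ for fixed $\eta$, or it is absorbed into the $\log(1/\eta)/t$ term, so it does not change the rate.

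The obstacle I expect is bookkeeping in this step: tracking the $\log v$ penalty from the prefactor $\lambda^\star$, and making the constants uniform in $\eta$. This uniformity is needed for the $\log(MR/\delta)$ form. Everything else follows from monotonicity together with $V_{mr,t}\le t$ and $A_{r,t}\asymp t$.
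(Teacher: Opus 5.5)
Your argument is correct, but it takes a different route from the paper's.

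\textbf{The paper's route.} The paper also starts from $V_{mr,t}\le A_{r,t}\le C_At$. It then splits pathwise according to whether $V_{mr,t}$ stays bounded or diverges. If bounded, it gets $b_{\eta_{mr},\rho}(V_{mr,t})=O(1)$ and $\mathrm{rad}_{mr,t}=O(t^{-1})$. If divergent, it cites Proposition~2 of \citet{howard2021time} for $b_{\eta,\rho}(v)=O(\sqrt{v[\log v+\log(1/\eta)]})$ and substitutes $V_{mr,t}\le C_At$ into that bound.

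\textbf{Your route.} You make two different moves.
\begin{enumerate}
\item You show directly from $\psi_E\ge0$ that $b_{\eta,\rho}$ is nondecreasing in $v$. This lets you replace the random $V_{mr,t}$ by the deterministic $t$, so the bounded/unbounded case split is unnecessary.
\item You derive the growth of the boundary yourself. You restrict the mixture integral to $[\lambda^\star/2,\lambda^\star]$, using $\psi_E(\lambda)\le\lambda^2$ on $[0,1/2]$ and $\inf_{[0,1/2]}f_\rho>0$. A separate crude bound $b_{\eta,\rho}(v)=O(v+\log(1/\eta))$ handles small $v$.
\end{enumerate}

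\textbf{What each buys.} The paper's route is shorter but outsources the key estimate. Because $\eta_{mr}$ is held fixed there, its $\log(MR/\delta)$ display is really a substitution, not a bound with constants uniform in $\eta_{mr}$. Your route is self-contained, and its constants depend only on $\rho$ and the lower constant in $A_{r,t}\asymp t$. It also exposes the additive Bernstein-type term $\log(1/\eta_{mr})/t$. That term is absorbed into the square-root term whenever $\log(1/\eta_{mr})\le t$, which is the precise sense in which the $\log(MR/\delta)$ form holds uniformly.

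\textbf{Minor point.} $C$ is not quite an absolute constant unless you also restrict to $v\ge v_0(\rho)$, because $\log c_\rho$ enters the comparison of the exponent with $\log(2/\eta)$. This is harmless since $\rho$ is fixed.
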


\begin{proof}[Proof of Proposition~\ref{prop:contraction}]
Recall that
\[
\operatorname{rad}_{mr,t}=\frac{b_{\eta_{mr},\rho}(V_{mr,t})}{A_{r,t}},
\qquad
V_{mr,t}=\sum_{s=1}^t(X_{mrs}-\widehat X_{mrs})^2.
\]
Since
\[
X_{mrs}-\widehat X_{mrs}=a_{rs}(B_{mrs}-\widehat B_{mrs}),
\]
with $B_{mrs}\in\{0,1\}$, $\widehat B_{mrs}\in[0,1]$, and $a_{rs}\in[0,1]$, we have
\[
V_{mr,t}
=\sum_{s=1}^t a_{rs}^2(B_{mrs}-\widehat B_{mrs})^2
\leq\sum_{s=1}^t a_{rs}^2
\leq\sum_{s=1}^t a_{rs}
=A_{r,t}.
\]
Because $A_{r,t}\asymp t$, there exist constants $c_A,C_A>0$ and $t_0$ such that, for all $t\geq t_0$,
\[
c_A t\leq A_{r,t}\leq C_A t.
\]
Hence,
\[
0\leq V_{mr,t}\leq C_A t.
\]

Since $V_{mr,t}$ is nondecreasing, it either remains bounded or tends to infinity. For the gamma-exponential mixture boundary, Proposition~2 of \citet{howard2021time} gives
\[
b_{\eta_{mr},\rho}(v)=O\left(\sqrt{v\left[\log v+\log(1/\eta_{mr})\right]}\right)
\]
as $v\to\infty$, for fixed $\rho>0$ and $\eta_{mr}\in(0,1)$. Therefore, if $V_{mr,t}\to\infty$,
\[
b_{\eta_{mr},\rho}(V_{mr,t})
=O\left(\sqrt{t\left[\log t+\log(1/\eta_{mr})\right]}\right),
\]
where we have used $V_{mr,t}\leq C_A t$. Since $A_{r,t}\geq c_A t$,
\[
\operatorname{rad}_{mr,t}=\frac{b_{\eta_{mr},\rho}(V_{mr,t})}{A_{r,t}}=O\left(\sqrt{\frac{\log t+\log(1/\eta_{mr})}{t}}\right)
\to0.
\]

If instead $V_{mr,t}$ remains bounded, then $b_{\eta_{mr},\rho}(V_{mr,t})=O(1)$ for fixed $\eta_{mr}$ and $\rho$, while $A_{r,t}\asymp t$. Hence,
\[
\operatorname{rad}_{mr,t}=O(t^{-1})\to0.
\]
Thus, in either case,
\[
\operatorname{rad}_{mr,t}=O\left(\sqrt{\frac{\log t+\log(1/\eta_{mr})}{t}}\right)\to0.
\]

If $\eta_{mr}\asymp\delta/(MR)$, then
\[
\log(1/\eta_{mr})=\log(MR/\delta)+O(1),
\]
and therefore
\[
\operatorname{rad}_{mr,t}=O\left(\sqrt{\frac{\log t+\log(MR/\delta)}{t}}\right)\to0.
\]

On the joint confidence event, the confidence bounds satisfy
\[
U_{mr,t}\le\Gamma_{mr,t}+2\mathrm{rad}_{mr,t},
\qquad
L_{mr,t}\ge\Gamma_{mr,t}-2\mathrm{rad}_{mr,t}.
\]
If $\Gamma_{mr,t}>2\mathrm{rad}_{mr,t}$ for some $r$, then $L_{mr,t}>0$, so $m\notin\widehat{\mathcal V}^{\mathrm{pos}}_t$. Alternatively, suppose that a competitor $j$ satisfies $Q_{j,t}<Q_{m,t}$ and $\Gamma_{jr,t}+2\mathrm{rad}_{jr,t}\le0$ for every $r$. Then $U_{jr,t}\le0$ for every $r$, so
$j\in\widehat{\mathcal V}^{\mathrm{cert}}_t$ and
\[
q_t^{\mathrm{cert}}\le Q_{j,t}<Q_{m,t}.
\]
The projection rule therefore excludes $m$ in either case. Since the same event guarantees $\mathcal M^\star_t\subseteq\widehat{\mathcal M}^{\mathrm{proj}}_t$, excluding every non-oracle method gives exact recovery.

Finally, $M,R<\infty$ and persistent exposure imply $\max_{m,r}\mathrm{rad}_{mr,t}\to0$. Under the uniform margins stated in Section~\ref{sec:oracle-contraction}, every non-oracle method therefore satisfies an exclusion condition for all sufficiently large $t$.
\end{proof}

\subsection{Certification and oracle identification}
\label{app:certification-limits}

\begin{proof}[Proof of Proposition~\ref{prop:certification-limits}]
Let $\mathcal F_n:=\sigma(B_1,\ldots,B_n)$ and $E_n:=\{T_{\mathrm{cert}}\le n\}$. Since $E_n\in\mathcal F_n$, its probability under the Bernoulli product law is a finite sum of terms of the form $p^k(1-p)^{n-k}$, and is therefore continuous in $p$. The assumption gives $\mathbb P_p(E_n)\le\eta$ for every $p>\tau$. Taking $p\downarrow\tau$ yields $\mathbb P_\tau(E_n)\le\eta$. Finally, since $E_n\uparrow\{T_{\mathrm{cert}}<\infty\}$,
\[
\mathbb P_\tau(T_{\mathrm{cert}}<\infty)=\lim_{n\to\infty}\mathbb P_\tau(E_n)\le\eta.
\]
\end{proof}

\paragraph{Implication for oracle identification.}
In the two-candidate construction of Section~\ref{sec:coverage-boundary}, candidate~2 is the unique oracle whenever $p>\tau$. Let $(\widehat{\mathcal S}_t)_{t\ge1}$ be an $(\mathcal F_t)$-adapted rule satisfying (\ref{eq:ccsmcsdef}) for every $p\in(0,1)$, and define
\[
T_2:=\inf\{t\ge1:2\notin\widehat{\mathcal S}_t\}.
\]
For $p>\tau$, excluding candidate~2 violates oracle coverage, so $\sup_{p>\tau}\mathbb P_p(T_2<\infty)\le\delta$. Proposition~\ref{prop:certification-limits} therefore gives $\mathbb P_\tau(T_2<\infty)\le\delta$. Since reporting $\{1\}$ implies excluding candidate~2,
\[
\mathbb P_\tau\!\left(\exists t\ge1:\widehat{\mathcal S}_t=\{1\}\right)\le\mathbb P_\tau(T_2<\infty)\le\delta.
\]

\paragraph{Finite-sample lower bound near the boundary.}
Now let $p=\tau-\Delta$ with $\Delta\in(0,\tau)$, so that candidate~1 is strictly feasible and uniquely optimal. The difficulty of distinguishing this risk from the boundary risk is measured by the Bernoulli relative entropy
\[
\operatorname{kl}(a,b):=a\log\frac{a}{b} +(1-a)\log\frac{1-a}{1-b},\qquad a,b\in(0,1).
\]
For any $\beta\in(0,1-\delta)$ and integer $n\ge1$,
\[
\mathbb P_{\tau-\Delta}(T_2\le n)\ge1-\beta\quad\Longrightarrow\quad n\ge\frac{\operatorname{kl}(1-\beta,\delta)}{\operatorname{kl}(\tau-\Delta,\tau)}.
\]
Indeed, let $E_n:=\{T_2\le n\}$.
The boundary result gives $\mathbb P_\tau(E_n)\le\delta$.
Applying the log-sum inequality to $E_n$ and its complement yields
\[
n\operatorname{kl}(\tau-\Delta,\tau)\ge\operatorname{kl}\!\left(\mathbb P_{\tau-\Delta}(E_n),\mathbb P_\tau(E_n)\right)\ge\operatorname{kl}(1-\beta,\delta),
\]
where the last inequality uses
$\mathbb P_{\tau-\Delta}(E_n)\ge1-\beta>\delta\ge\mathbb P_\tau(E_n)$.
This proves the bound.

For fixed $\tau\in(0,1)$,
\[
\operatorname{kl}(\tau-\Delta,\tau)=\frac{\Delta^2}{2\tau(1-\tau)}+o(\Delta^2)
\qquad\text{as }\Delta\downarrow0.
\]
Hence the lower bound is of order $\Delta^{-2}$ for fixed $\delta$ and $\beta$. Reporting the correct singleton $\{1\}$ by time $n$ implies $T_2\le n$, so the same lower bound applies to high-probability singleton identification.

\section{The Weak Sequential Model Confidence Set}
\label{sub:weak_smcs}

The Weak Sequential Model Confidence Set (Weak-SMCS) of \citet{arnold2026sequential} is a score-based procedure for sequentially identifying weakly superior forecasting methods. Let $\ell_{m,t}$ denote the loss of method $m$ at time $t$, and define the pairwise loss difference between methods $m$ and $j$ as
\[
d_{mj,t}=\ell_{m,t}-\ell_{j,t}.
\]
The average conditional expected loss difference up to time $t$ is
\[
\Delta_{mj,t}=\frac{1}{t}\sum_{s=1}^{t}\mathbb{E}\left[d_{mj,s}\mid\mathcal G_{s-1}\right].
\]
The set of weakly superior methods at time $t$ is defined as
\[
\mathcal M_t^{w,\star}=\left\{m:\Delta_{mj,t}\leq0,\ \forall j\neq m\right\}.
\]
Unlike uniformly weak superiority, this target is allowed to vary over time, so a method excluded at an earlier stage may become weakly superior again later.

In our implementation, the loss is the Winkler score. For a prediction interval $[\underline{Y}_{m,t},\overline{Y}_{m,t}]$ with nominal miscoverage level $\alpha$, the Winkler score is
\[
\ell_{m,t}=(\overline{Y}_{m,t}-\underline{Y}_{m,t})+\frac{2}{\alpha}(\underline{Y}_{m,t}-Y_t)\mathbf{1}\{Y_t<\underline{Y}_{m,t}\}+\frac{2}{\alpha}(Y_t-\overline{Y}_{m,t})\mathbf{1}\{Y_t>\overline{Y}_{m,t}\}.
\]
Although the Winkler score itself is unbounded as a function of $Y_t$, the difference between the scores of two fixed prediction intervals is conditionally bounded. Following Section~3.4 of \citet{arnold2026sequential}, for each ordered pair $(m,j)$ and time $t$, we compute the predictable bound
\[
b_{mj,t}=\sup_{y\in\mathbb{R}}\left|WS_{m,t}(y)-WS_{j,t}(y)\right|.
\]
Since $b_{mj,t}$ depends only on the prediction intervals issued before observing $Y_t$, it is predictable. We then standardize the pairwise loss difference as
\[
\widetilde d_{mj,t}=\frac{d_{mj,t}}{b_{mj,t}},
\]
with $\widetilde d_{mj,t}=0$ when $b_{mj,t}=0$. By construction,
\[
|\widetilde d_{mj,t}|\leq1.
\]

This predictable normalization converts the conditionally bounded loss differences into uniformly bounded ones. Accordingly, the Weak-SMCS used in our experiments targets weak superiority with respect to the scaled loss differences. Define
\[
\widetilde\Delta_{mj,t}=\frac{1}{t}\sum_{s=1}^{t}\mathbb{E}\left[\widetilde d_{mj,s}\mid\mathcal G_{s-1}\right]=\frac{1}{t}\sum_{s=1}^{t}\frac{\mathbb{E}[d_{mj,s}\mid\mathcal G_{s-1}]}{b_{mj,s}}.
\]
The corresponding weakly superior set is
\[
\widetilde{\mathcal M}_t^{w,\star}=\left\{m:\widetilde\Delta_{mj,t}\leq0,\ \forall j\neq m\right\}.
\]

For the standardized differences, we use the bounded-difference construction of Proposition~3.6 in \citet{arnold2026sequential}. Since $|\widetilde d_{mj,t}|\leq1$, we set $c=2$ and $\lambda=1/4$. The predictable centering sequence is given by the running mean of previous standardized differences,
\[
\gamma_{mj,1}=0,\qquad \gamma_{mj,t}=\frac{1}{t-1}\sum_{s=1}^{t-1}\widetilde d_{mj,s},\quad t\geq2,
\]
with $\gamma_{mj,t}$ restricted to $[-1,1]$. Define
\[
V_{mj,t}=\sum_{s=1}^{t}\left(\widetilde d_{mj,s}-\gamma_{mj,s}\right)^2,
\]
and
\[
\psi_{E,c}(\lambda)=\frac{-\log(1-c\lambda)-c\lambda}{c^2}.
\]
For each ordered pair $(m,j)$ and candidate value $x\in\mathbb R$, define
\[
M_{mj,t}(x)=\exp\left\{\lambda\sum_{s=1}^{t}\widetilde d_{mj,s}-\lambda tx-\psi_{E,c}(\lambda)V_{mj,t}\right\}.
\]
In particular, evaluating the process at the boundary value $x=0$ gives
\[
M_{mj,t}(0)=\exp\left\{\lambda\sum_{s=1}^{t}\widetilde d_{mj,s}-\psi_{E,c}(\lambda)V_{mj,t}\right\}.
\]
Following the joint confidence-region construction of \citet{arnold2026sequential}, let $X=(x_{mj})_{m\neq j}$ denote a candidate matrix of pairwise conditional-average loss differences and define
\[
M_t(X)=\frac{1}{M(M-1)}\sum_{m\neq j}M_{mj,t}(x_{mj}),\qquad \mathcal C_t=\left\{X:M_t(X)\leq\frac{1}{\delta}\right\}.
\]
The set $\mathcal C_t$ is a joint confidence region for the matrix of scaled pairwise conditional-average loss differences. The dynamic Weak-SMCS is obtained by checking, for each method $m$ and every competitor $j\neq m$, whether the joint confidence region still contains a candidate matrix under which method $m$ is not worse than method $j$:
\[
\widehat{\mathcal M}^{w}_t=\left\{m:\mathcal C_t\cap\{X:x_{mj}\leq0\}\neq\varnothing,\ \forall j\neq m\right\}.
\]

Since $M_{mj,t}(x)$ is nonnegative, convex, and decreasing in $x$, the monotonicity argument of \citet{arnold2026sequential} reduces the intersection condition to the boundary value $x_{mj}=0$. For fixed $(m,j)$, the remaining coordinates of $X$ can be taken arbitrarily large, making their contributions to $M_t(X)$ arbitrarily close to zero. Hence,
\[
\inf_{X:\,x_{mj}\leq0}M_t(X)=\frac{M_{mj,t}(0)}{M(M-1)}.
\]

\[
\mathcal C_t\cap\{X:x_{mj}\leq0\}\neq\varnothing\quad\Longleftrightarrow\quad M_{mj,t}(0)<\frac{M(M-1)}{\delta}.
\]
Equivalently, method $m$ is excluded at time $t$ if there exists a competitor $j\neq m$ such that
\[
M_{mj,t}(0)\geq\frac{M(M-1)}{\delta}.
\]
Thus, the estimated dynamic Weak-SMCS can be written as
\[
\widehat{\mathcal M}^{w}_t=\left\{m:M_{mj,t}(0)<\frac{M(M-1)}{\delta},\ \forall j\neq m\right\}.
\]
The resulting sequence satisfies the time-uniform guarantee
\[
\Pr\left(\widetilde{\mathcal M}^{w,\star}_t\subseteq\widehat{\mathcal M}^{w}_t,\ \forall t\right)\geq1-\delta.
\]

Because the Weak-SMCS is defined pointwise at each time $t$, previously excluded methods are allowed to re-enter when the accumulated evidence changes. We use this dynamic version rather than its running-intersection counterpart.

We include Weak-SMCS as a score-based benchmark in simulations. Its inferential target differs from that of CC-SMCS: Weak-SMCS compares methods through predictably standardized Winkler-score differences, whereas CC-SMCS treats coverage as an explicit constraint and interval width as the efficiency criterion. Therefore, the two procedures target different superior sets and may exhibit different model-set paths even when applied to the same sequence of prediction intervals.

\section{The setting of $\rho$}
\label{sec:select_rho}

The parameter $\rho$ determines the intrinsic-time region over which the mixture boundary in equation~(\ref{eq:boundary}) is relatively tight. Following \citet{howard2021time}, we also use the tuning rule derived for the normal mixture boundary to guide the choice of $\rho$ for the gamma-exponential mixture boundary. 

In our construction, the total two-sided crossing probability assigned to a confidence sequence is $\eta$, so each one-sided boundary is assigned crossing probability $\eta/2$. Following the normal-mixture tuning rule of \citet{howard2021time}, for a prespecified target intrinsic time $w>0$, we set
\[
\rho(w)=\frac{w}{-W_{-1}\left(-\dfrac{\eta^2}{e l_0^2}\right)-1}.
\]
Here, $w>0$ denotes the target intrinsic time, $\rho>0$ is the mixture tuning parameter, $\eta\in(0,1)$ is the corresponding two-sided crossing probability, $l_0\geq1$ is the initial supermartingale constant, and $W_{-1}(\cdot)$ denotes the lower branch of the Lambert $W$ function satisfying $W(x)e^{W(x)}=x$. In our scalar setting, we take $l_0=1$.

To determine an appropriate target intrinsic time $w$, recall that
\[
A_{r,t}=\sum_{s=1}^{t}a_{rs}
\]
denotes the cumulative exposure. For method $m$ and constraint $r$, the empirical-Bernstein intrinsic time is
\[
V_{mr,t}=\sum_{s=1}^{t}\left(X_{mrs}-\widehat{X}_{mrs}\right)^2,
\]
and the corresponding one-sided confidence radius is
\[
R_{mr,t}=\frac{b_{\eta,\rho}(V_{mr,t})}{A_{r,t}}.
\]
Because $V_{mr,t}$ depends on the realized observations, it is unknown before the forecasting period begins and therefore cannot be used directly to prespecify $\rho$.

To address this issue, we follow \citet{howard2021time}, who use a Hoeffding variance process to provide a deterministic intrinsic-time scale for bounded observations, and adopt the same idea in our weighted setting. Since $B_{mrs}\in\{0,1\}$,
\[
B_{mrs}-\tau_r\in[-\tau_r,\,1-\tau_r].
\]
Hence, the weighted increment $a_{rs}(B_{mrs}-\tau_r)$ lies in
\[
a_{rs}(B_{mrs}-\tau_r)\in[-a_{rs}\tau_r,\,a_{rs}(1-\tau_r)],
\]
whose range length is
\[
a_{rs}(1-\tau_r)-(-a_{rs}\tau_r)=a_{rs}.
\]
By Hoeffding's lemma, the corresponding centered increment is conditionally sub-Gaussian with variance upper bound
\[
\frac{a_{rs}^2}{4}.
\]
Following the Hoeffding variance construction of \cite{howard2021time}, we therefore use
\[
V^H_{r,t}=\frac{1}{4}\sum_{s=1}^{t}a_{rs}^2
\]
as the reference intrinsic time for tuning $\rho$. Under the unit-exposure setting used in our experiments, this reference path is deterministic and can be computed before the forecasting period begins.

Let $T$ denote the forecasting horizon. We prespecify three target forecasting times corresponding to the $25\%$, $50\%$, and $75\%$ points of the forecasting horizon,
\[
t_q^\star=qT,\qquad q\in\{0.25,0.50,0.75\}.
\]
Under the unit-exposure setting adopted in this paper,
\[
a_{rs}=1,\quad A_{r,t}=t,\quad V_{r,t}^{\mathrm{H}}=\frac{t}{4}.
\]

The corresponding target intrinsic times are therefore
\[
w_{0.25}=\frac{T}{16},\qquad w_{0.50}=\frac{T}{8},\qquad w_{0.75}=\frac{3T}{16}.
\]
These target intrinsic times generate three candidate tuning parameters,
\[
\rho_q=\rho(w_q),\qquad q\in\{0.25,0.50,0.75\}.
\]

To select among these candidates, we evaluate the corresponding normal-mixture reference radius along the deterministic Hoeffding path
\[
V_t^{\mathrm{H}}=\frac{t}{4}.
\]
Let $u_{\eta,\rho}(v)$ denote the two-sided normal mixture boundary used for tuning. For each candidate $\rho_q$, we compute
\[
u_{\eta,\rho}\!\left(V_{r,t}^{\mathrm H}\right)=\sqrt{\left(V_{r,t}^{\mathrm H}+\rho\right)\log\left(\frac{V_{r,t}^{\mathrm H}+\rho}{\eta^2\rho}\right)}, \quad\overline{R}^{\mathrm{ref}}(\rho_q)=\frac{1}{T}\sum_{t=1}^{T}\frac{u_{\eta,\rho_q}(t/4)}{t},
\]
and select
\[
\rho^\star=\arg\min_{\rho_q:\,q\in\{0.25,0.50,0.75\}}\overline{R}^{\mathrm{ref}}(\rho_q).
\]
The selected $\rho^\star$ is fixed before the forecasting outcomes are observed.

The Hoeffding reference intrinsic time is used only for the prespecified choice of $\rho$. Once $\rho^\star$ has been selected, the actual confidence sequence is constructed using the realized empirical-Bernstein intrinsic time $V_{mr,t}$. Under the unit-exposure setting, the resulting one-sided confidence radius is
\[
R_{mr,t}=\frac{b_{\eta,\rho^\star}(V_{mr,t})}{t}.
\]

Thus, for a fixed error allocation $\eta$, the prespecified choice of $\rho^\star$ depends only on the forecasting horizon $T$. Since the three simulation settings share the same forecasting horizon, they have the same candidate values of $\rho$ and the same selected $\rho^\star$. We therefore report a single $\rho^\star$ for the three simulations. Table~\ref{tab:choe_rho} summarizes the selected values of $\rho^\star$ for the simulation setting and the four empirical datasets.

\begin{table}[htbp]
\centering
\caption{Selected values of $\rho$}
\label{tab:choe_rho}
\resizebox{\linewidth}{!}{
\begin{tabular}{lcccccc}
\toprule
The selection of $\rho$& $\delta$ &Simulation Data Set & Data Set 1  & Data Set 2&  Data Set 3 &  Data Set 4 \\
\midrule
\multirow{2}{*}{$\rho^{\star}$}
& 0.10 &  17.616&20.494& 29.584 & 30.946 & 8.151 \\
& 0.25 &21.752& 25.305 & 36.529 &38.211& 10.064 \\
\bottomrule
\end{tabular}}
\footnotesize{\textit{Note:} Data Sets 1, 2, 3, 4 respectively represent the data of French electricity price, electricity load, weather, and wind power.}
\end{table}

\section{More details of Simulations}
\label{sec:detail_sim}
\subsection{The computation of coverage and width for each method}
In simulations, we design six methods. Specifically, the  coverage rates for M2-M6 are computed by:
\[
\begin{aligned}
P\left\{Y_t\in C_{mr,t}\mid\mathcal{G}_{t-1}\right\}&=P\left\{\widehat{Y}_t-\sigma_{mr,t}z\le Y_t\le\widehat{Y}_t+\sigma_{mr,t}z\mid\mathcal{G}_{t-1}\right\}\\
&=P\left\{-\sigma_{mr,t}z\le Y_t-\widehat{Y}_t\le\sigma_{mr,t}z\mid\mathcal{G}_{t-1}\right\}\\
&=P\left\{-\frac{\sigma_{mr,t}z}{\sigma_t}\le \frac{Y_t-\widehat{Y}_t}{\sigma_t}\le \frac{\sigma_{mr,t}z}{\sigma_t}\,\middle|\,\mathcal{G}_{t-1}\right\}\\
&=\Phi(\frac{\sigma_{mr,t}z}{\sigma_t})-\Phi(-\frac{\sigma_{mr,t}z}{\sigma_t})\\&=2\Phi(\frac{\sigma_{mr,t}z}{\sigma_t})-1\\
&=2\Phi(\exp\{(\theta_{mr}-l_{t})|h_{t}|\}z)-1
\end{aligned}
\]
The last equality follows from $\sigma_{mr,t}=\sigma_t^0\exp\{\theta_{mr}|h_t|\}$. Hence, the conditional miscoverage probability is:
\[
\Pr\left\{Y_t\notin C_{mr,t}\mid\mathcal{G}_{t-1}\right\}
=2\left[1-\Phi\left(\exp\{(\theta_{mr}-l_{t})|h_t|\}z\right)\right].
\]
This conditional miscoverage probability can be related to our framework as follows:
\[
\Gamma_{mr,t}=
\begin{cases}
A_{r,t}^{-1}\sum_{s=1}^t\mu_{mrs}, & A_{r,t}>0,\\
0, & A_{r,t}=0.
\end{cases}
\]
where $\mu_{mrt}=\mathbb{E}[X_{mrt}\mid\mathcal{G}_{t-1}]=a_{rt}\left[\mathbb{E}[B_{mrt}\mid\mathcal{G}_{t-1}]-\tau_r\right]$ and $B_{mrt}=\mathbf{1}\{Y_t\notin C_{mr,t}\}$. Hence, when $A_{r,t}>0$,
\[
\begin{aligned}
\Gamma_{mr,t}
&=A_{r,t}^{-1}\sum_{s=1}^t a_{rs}\left[ \mathbb{E}[B_{mrs}\mid\mathcal{G}_{s-1}]-\tau_r \right]\\
&=A_{r,t}^{-1}\sum_{s=1}^t
a_{rs}\left[\Pr\left\{Y_s\notin C_{mr,s}\mid\mathcal{G}_{s-1}\right\}-\tau_r\right]\\
&=A_{r,t}^{-1}\sum_{s=1}^t a_{rs}\left[2\left\{1-\Phi\left(\exp\{(\theta_{mr}-l_{s})|h_s|\}z\right)\right\}-\tau_r\right].
\end{aligned}
\]
Similarly, for M1, whose prediction interval is $C_{1r,t}=[\widehat{Y}_t+\kappa\sigma_{4r,t}-z\sigma_{4r,t},\,\widehat{Y}_t+\kappa\sigma_{4r,t}+z\sigma_{4r,t}]$, the conditional miscoverage probability is
\[
\Pr\{Y_t\notin C_{1r,t}\mid\mathcal G_{t-1}\}=1-\left[\Phi\left(\frac{(\kappa+z)\sigma_{4r,t}}{\sigma_t}\right)-\Phi\left(\frac{(\kappa-z)\sigma_{4r,t}}{\sigma_t}\right)\right].
\]
Therefore, $\Gamma_{1r,t}$ is given by:
\[
\Gamma_{1r,t}=A_{r,t}^{-1}\sum_{s=1}^t a_{rs}\left[1-\left[\Phi\left(\exp\{(\theta_{4r}-l_{s})|h_{s}|\}(\kappa+z)\right)\right]+\left[\Phi\left(\exp\{(\theta_{4r}-l_{s})|h_{s}|\}(\kappa-z)\right)\right]-\tau_r\right]
\]

In the stationary case, the relationship between $\theta_{mr}$ and $l$ directly determines the feasibility of M2--M6. With constant $l_t=l$ and $\tau=\alpha$, if $\theta_{mr}<l$, then $\sigma_{mr,t}<\sigma_t$, the conditional miscoverage probability exceeds $\alpha$, and hence $\Gamma_{mr,t}>0$; if $\theta_{mr}=l$, then $\Gamma_{mr,t}=0$; and if $\theta_{mr}>l$, then $\sigma_{mr,t}>\sigma_t$, the conditional miscoverage probability is below $\alpha$, and hence $\Gamma_{mr,t}<0$. In our construction, M4 corresponds to the case $\theta_{4r}=l$ and therefore attains the nominal coverage level in the stationary setting. M1 is constructed using M4 as a reference: it has the same interval width as M4, but its center is shifted away from $\widehat{Y}_t$. This shift reduces the coverage probability without changing the interval width. The parameter $\kappa$ controls the magnitude of the shift: $\kappa=0$ recovers M4, while any nonzero $\kappa$ lowers the coverage relative to M4. We set $\kappa=0.5$, which yields $\Gamma_{1r,t}>0$ and therefore makes M1 infeasible under the coverage constraint while preserving the same width as M4.

In the dynamic settings, as in Simulations 2 and 3, $l_t$ changes over time. For M2--M6, the relation between $\theta_{mr}$ and $l_t$ determines only the instantaneous contribution to the cumulative excess miscoverage, while the sign of $\Gamma_{mr,t}$ also depends on the coverage history accumulated at previous time points. M1 is affected differently because its interval is shifted relative to the predictive center. Its instantaneous miscoverage therefore depends jointly on the scale ratio between M4 and the true forecast-error scale and on the fixed center-shift parameter $\kappa$. Consequently, the feasibility of M1 in the dynamic settings is determined from its own accumulated excess miscoverage $\Gamma_{1r,t}$ rather than from the relation between $\theta_{4r}$ and $l_t$ alone. According to Definition~\ref{def:oracle}, we determine the oracle from the cumulative coverage risk together with the cumulative interval width. For M2--M6, the interval width $2z\sigma_{mr,t}$ increases monotonically with $\sigma_{mr,t}$, while M1 has the same width as M4 by construction. The constrained oracle is therefore the method with the smallest cumulative interval width among those satisfying the cumulative coverage constraint.

\subsection{Parameter Settings}

We specify the candidate interval scale as $\sigma_{mr,t}=\sigma_t^0\exp\{\theta_{mr}|h_t|\}$, where $\sigma_t^0=\exp\{H_{t-1}\}$ and $\phi=0.5$. Given $\sigma_{mr,t}$, the corresponding coverage probability and interval width can be computed directly. Since $h_t$ is fixed by the data-generating process, it remains only to specify $\theta_{mr}$. For convenience, we introduce a common scaling parameter $\xi$ and, for $r=1$, set
\[
\theta_{mr}=\xi(-2,-1,0,1,2),\qquad m=2,3,4,5,6
\]
corresponding to M2--M6, respectively. We set $\xi=0.15$. For M1, we construct the interval as $[\widehat{Y}_{t}+\kappa\sigma_{4r,t}-z\sigma_{4r,t},\widehat{Y}_{t}+\kappa\sigma_{4r,t}+z\sigma_{4r,t}]$, where $\kappa$ controls the magnitude of the center shift. We set $\kappa=0.5$. Thus, M1 has the same interval width as M4 but a deliberately shifted center, which induces undercoverage. For all methods, we set $z=\Phi^{-1}(1-\alpha/2)$ with $\alpha=0.05$.

For the CC-SMCS parameters used in the simulations, we set $R=1$, $a_{rt}=1$, and $\epsilon_r=0$, so that $\tau_r=\alpha_r=0.05$. Since there are six candidate methods and one monitored constraint, we use a balanced error allocation with $\pi_{mr}=1/6$, giving $\eta_{mr}=\delta/6$, where $\delta\in\{0.10,0.25\}$. We set $w_t=1$, so that the efficiency objective is the cumulative mean interval width. The mixture tuning parameter $\rho^\star$ is fixed before evaluation according to the procedure described in Appendix~\ref{sec:select_rho}. The selected values are $\rho^\star=17.616$ for $\delta=0.10$ and $\rho^\star=21.752$ for $\delta=0.25$, as reported in Table~\ref{tab:choe_rho}.

\subsection{Connection Between the Parameters and the Three Simulations}
\label{sub:para_sim}
Across the three simulations, the candidate-specific parameters $\theta_{mr}$ remain unchanged. Instead, we modify the true forecast-error scale to generate stationary, structural-break, and gradual-drift environments.
We set the master random seed to 42 and generate independent replication-specific seeds using NumPy's SeedSequence. The same replication-specific seeds are reused across the three simulation designs to implement paired common random numbers.
\paragraph{Simulation 1}
The stationary setting assumes that the true scale $\sigma_t$ follows its baseline process, $\sigma_t^0=\exp\{H_{t-1}\}$, throughout the forecasting horizon. Under this setting, M4 is the constrained oracle at all time points because $\theta_{4r}=0$. Since $\theta_{mr}$ determines the conditional miscoverage probability through $\sigma_{mr,t}$ and $R=1$, we have
\[
\Gamma_{1r,t}>0,\qquad
\Gamma_{2r,t}>0,\qquad
\Gamma_{3r,t}>0,\qquad
\Gamma_{4r,t}=0,\qquad
\Gamma_{5r,t}<0,\qquad
\Gamma_{6r,t}<0.
\]
Hence, M1--M3 violate the coverage constraint, M4 lies exactly on the coverage boundary, and M5--M6 are conservative. Under the width specification used in the selection step,
\[
Q_{2r,t}<Q_{3r,t}<Q_{1r,t}=Q_{4r,t}<Q_{5r,t}<Q_{6r,t},
\qquad \forall t\in{T}.
\]
Therefore, M4 is the narrowest method satisfying the population coverage constraint and is the unique constrained oracle throughout Simulation 1.

\paragraph{Simulation 2}
This simulation introduces a permanent structural break in the true forecast-error scale while keeping all candidate procedures unchanged. Specifically, after $t=400$, the true scale changes from $\sigma^{0}_t$ to $\sigma^{0}_t\exp\{1.75\xi|h_t|\}$. The candidate interval constructions remain unchanged, whereas their conditional coverage probabilities shift after the break. The forecast-error scales then satisfy
\[
\sigma_{4r,t}=\sigma^{0}_t<\sigma_{5r,t}=\sigma^{0}_t\exp\{\xi|h_t|\}<\sigma_{t}=\sigma^{0}_t\exp\{1.75\xi|h_t|\}<\sigma_{6r,t}=\sigma^{0}_t\exp\{2\xi|h_t|\}.\label{eq:sig}
\]

M4 lies on the coverage boundary before the break and loses
feasibility at $t=401$. After the break, the oracle is M5 when
$\Gamma_{5r,t}\le0$ and M6 otherwise. The transition therefore
depends on the coverage margin accumulated by M5 before the break.


\paragraph{Simulation 3}
Instead of the abrupt break, this simulation sets a gradual linear drift in the true forecast-error scale. Specifically, the coefficient governing the scale distortion increases linearly from $0$ to $2\xi$ over the evaluation period, so that the true scale is given by $\sigma_t=\sigma_t^0\exp{2\xi\frac{t-1}{T-1}|h_t|}$. As the scale changes continuously over time, the conditional coverage probabilities of the candidate methods also evolve gradually. Since M4 is initially located on the coverage boundary, it loses feasibility shortly after the drift begins, and the oracle therefore moves quickly from M4 to M5. In theory, $\sigma_{5r,t}$ remains larger than $\sigma_t$ until approximately $t=1500$, while $\sigma_{6r,t}$ remains larger than $\sigma_t$ until $t=3000$. This setting therefore provides a more challenging environment for evaluating whether CC-SMCS can maintain validity and adapt to a smoothly evolving oracle.

\subsection{Additional Results}
\label{sub:addi_re}
Figure~\ref{fig:sim_025} presents the simulation results for $\delta=0.25$. In Simulation 1, where M4 is the fixed oracle, CC-SMCS retains M4 together with M5 in the final model set, consistent with the oracle-retention guarantee in Theorem~\ref{thm:main}. In Simulation 2, the constrained oracle changes from M4 to M5 and subsequently to M6. Figure~\ref{fig:sim2_025} shows that CC-SMCS adapts to these oracle changes while continuously retaining the current oracle in the model set. Simulation 3 exhibits a pattern similar to that obtained for $\delta=0.10$ with a linear drift. M4 becomes undercovered from $t\geq2$, after which M5 becomes the constrained oracle. Although $\sigma_{5r,t}$ remains larger than the true scale $\sigma_t$ until approximately $t=1500$, M5 does not immediately cease to be feasible once $\sigma_{5r,t}<\sigma_t$, because feasibility is determined by cumulative rather than instantaneous miscoverage. M5 therefore remains the oracle until its accumulated coverage margin is exhausted and $\Gamma_{5r,t}$ becomes positive, after which M6 becomes the constrained oracle. In Figure~\ref{fig:sim3_025}, M5 remains in the CC-SMCS throughout the evaluation period. The inclusion probability of M6 first decreases and then increases, and M4 is gradually excluded as the drift strengthens. Weak-SMCS, which targets the Winkler score rather than coverage-constrained interval width, exhibits different model-set dynamics. In Simulation 1, it retains the oracle throughout the evaluation period, whereas in Simulations 2 and 3 it tends to retain methods with undercoverage for longer after the oracle changes.
\begin{figure}[htbp]
\centering
\begin{subfigure}{\linewidth}
\centering
\includegraphics[width=\linewidth]{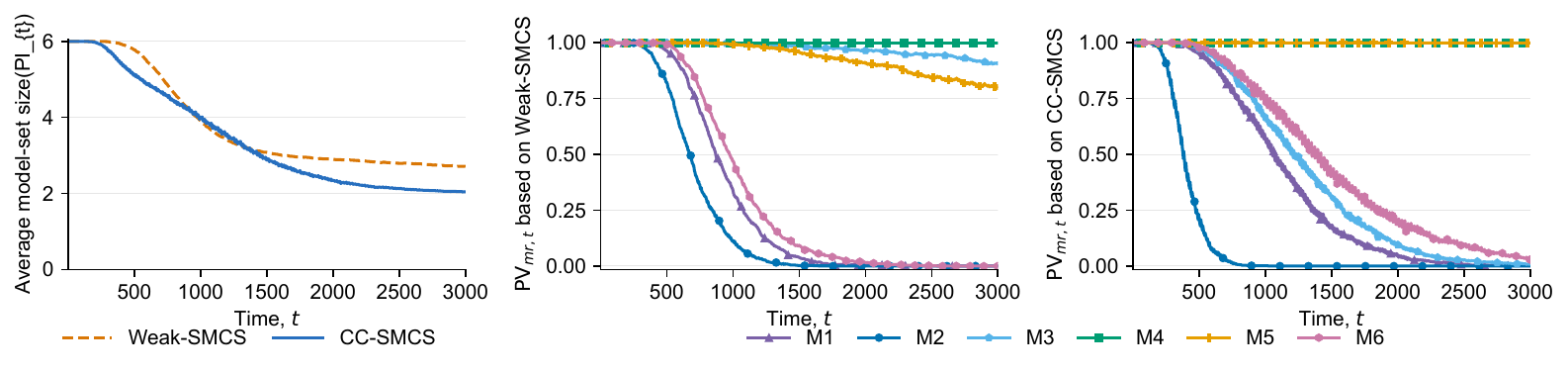}
\caption{Simulation 1}
\label{fig:sim1_025}
\end{subfigure}
\begin{subfigure}{\linewidth}
\centering
\includegraphics[width=\linewidth]{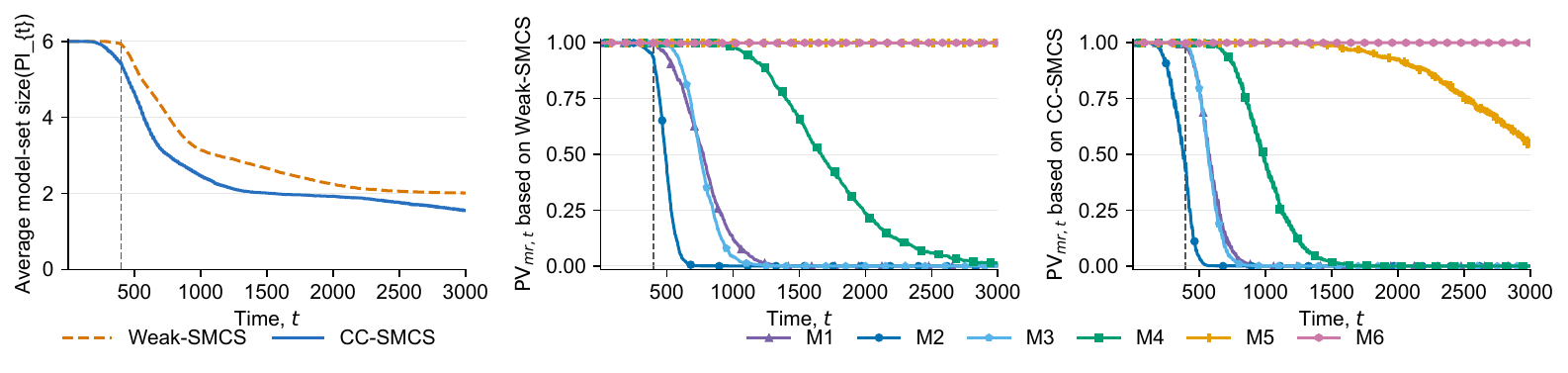}
\caption{Simulation 2}
\label{fig:sim2_025}
\end{subfigure}
\begin{subfigure}{\linewidth}
\centering
\includegraphics[width=\linewidth]{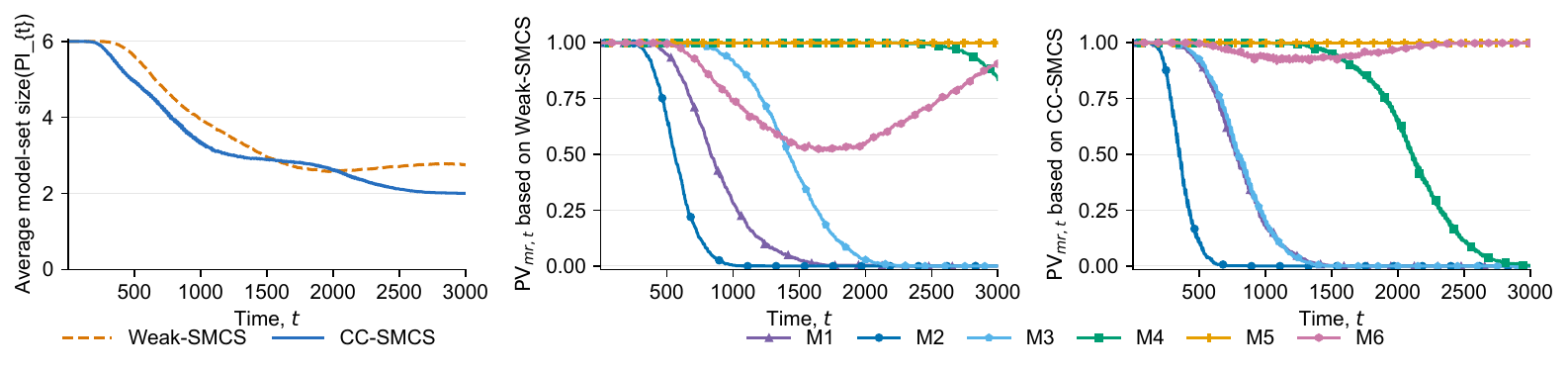}
\caption{Simulation 3}
\label{fig:sim3_025}
\end{subfigure}
\caption{The performance of simulations with $\delta=0.25$. The vertical dashed line in Figure~\ref{fig:sim2_025} indicates the structural break.}
\label{fig:sim_025}
\end{figure}

\section{Additional Details of the Empirical Study}
\label{sec:add_exper}
\subsection{Datasets}
\label{sub:data}
\paragraph{French electricity price}
The first dataset is the French electricity price series used by \citet{zaffran2022adaptive}, covering the period from January 8, 2016 to December 31, 2019. It contains 34,896 hourly observations.  We focus on the variable \texttt{Spot} and use only its historical values to construct one-step-ahead forecasts. It can be directly downloaded from \url{https://github.com/mzaffran/AdaptiveConformalPredictionsTimeSeries.}
\paragraph{Electricity load}
The second dataset is the electricity load series of \citet{hong2016probabilistic} from January 1, 2005, to September 30, 2010, containing 50,376 consecutive hourly observations, which can be found at \url{http://blog.drhongtao.com/2017/03/gefcom2014-load-forecasting-data.html}. This dataset was also used by \citet{lin2022conformal} for conformal prediction. Unlike their setting, we consider a univariate forecasting task and use only the historical values of \texttt{LOAD} to train the forecasting models and predict future load.

\paragraph{Weather}
The third dataset is the Weather dataset of \citet{wu2023timesnet}, and also used by \citet{chen2024conformalized}. It can be found at \url{https://huggingface.co/datasets/thuml/Time-Series-Library/blob/main/weather/weather.csv}, covering the year 2020 with observations recorded at 10-minute intervals. We use \texttt{T (degC)} as the target variable, which contains no missing values among the originally observed records. We find one duplicated timestamp and nine missing timestamps in the original time index. Since only nine timestamps are missing over the entire sample period, we remove the duplicated record, regularize the series to a complete 10-minute grid, and fill each missing value using the temperature observed at the same time on the previous day. This preprocessing preserves the regular sampling frequency while using only past information. The resulting series contains 52,704 regularly spaced observations. We then use only the historical values of \texttt{T (degC)} for model training and one-step-ahead prediction.

\paragraph{Wind power generation}
The final dataset is the Hackberry Wind Farm dataset used by \citet{xu2021enbpi}, which can be downloaded from \url{https://github.com/hamrel-cxu/EnbPI/blob/main/Data/Wind_Hackberry_Generation_2019_2020.csv}. While their original experiment focuses on the 2019 sample, we use the extended dataset from January 1, 2019, to July 31, 2020, containing 13,871 hourly observations. The original records preserve local daylight-saving-time transitions, which explains why the number of recorded observations differs from the number obtained by simply multiplying the number of calendar days by 24. We focus on the univariate variable \texttt{MWH} and use only its historical values for model training and one-step-ahead forecasting.

\subsection{Point Forecasting}
\label{sub:point_forecest}
\paragraph{Forecasting framework}
To construct conformal prediction intervals, we first train point forecasting models to obtain one-step-ahead forecasts. We consider three representative time-series forecasting backbones: DLinear \citep{zeng2023transformers}, PatchTST \citep{nie2023patchtst}, and TimesNet \citep{wu2023timesnet}. For the common point-forecasting framework, observations in each dataset are divided chronologically according to a $12:4:2:2$ ratio, corresponding approximately to $60\%$ for training, $20\%$ for validation, $10\%$ for conformal calibration, and the final $10\%$ for out-of-sample evaluation. This design is adapted from the $12:4:4$ training--validation--test ratio commonly used in long-term forecasting benchmarks, with the original test portion further divided equally into calibration and evaluation samples. The method-specific adaptations for CQR and EnbPI are described below.

All three models perform one-step-ahead univariate forecasting using only historical observations of the corresponding target variable. For a target observation at time $t$, the input window contains observations strictly before $t$. For the common point-forecasting framework, each series is standardized using the mean and standard deviation estimated from the training sample only. The same transformation is subsequently applied to the validation, calibration, and test samples, and all predictions are transformed back to the original scale before interval construction and evaluation.

\paragraph{Model configurations}
For the model-specific hyperparameters, we adopt the benchmark configurations reported in the original papers and their official implementations of DLinear \citep{zeng2023transformers}, PatchTST \citep{nie2023patchtst}, and TimesNet \citep{wu2023timesnet}, while changing the forecasting horizon to one to match our sequential prediction setting. DLinear uses an input sequence length of 336 and a moving-average kernel size of 25, with a batch size of 32, an initial learning rate of $5\times10^{-3}$, and zero weight decay. PatchTST utilizes an input sequence length of 336, a patch length of 16, a stride of 8, three encoder layers, four attention heads, $d_{\mathrm{model}}=16$, $d_{\mathrm{ff}}=128$, dropout of 0.3, fully connected dropout of 0.3, and RevIN normalization with the affine transformation disabled. Its batch size is 128, and its initial learning rate is $10^{-4}$. TimesNet adopts an input sequence length of 96, two TimesBlocks, $d_{\mathrm{model}}=16$, $d_{\mathrm{ff}}=32$, the top five detected periods, six inception kernels, and dropout of 0.1, with a batch size of 32 and an initial learning rate of $10^{-4}$.

\paragraph{Training and model selection}
All point forecasting models are optimized using the Adam optimizer with the mean squared error (MSE) loss and random seed 2021. Model selection is based exclusively on the validation sample, and the model parameters corresponding to the lowest validation MSE are retained for subsequent prediction. DLinear and TimesNet are trained for at most 10 epochs with an early-stopping patience of three epochs, while PatchTST is trained for at most 100 epochs with a patience of 100 epochs. We use type-1 learning-rate adjustment for DLinear and TimesNet and type-3 adjustment for PatchTST. In each case, the learning rate is updated at the end of each training epoch and the adjusted value is used in the following epoch. The same model-specific training configuration is applied across all four empirical datasets.

\paragraph{Adaptations for CQR and EnbPI}
Two interval construction methods require modifications to the common point-forecasting framework. For CQR \citep{romano2019cqr}, as detailed in Subsection~\ref{sub:cpm}, DLinear, PatchTST, and TimesNet are retained as forecasting backbones, while their scalar output layers are modified to produce two quantile outputs. For each backbone, a separate CQR model is trained for the $95\%$ nominal coverage level, using the sum of the lower- and upper-quantile pinball losses instead of MSE. EnbPI \citep{xu2021enbpi}, also detailed in Subsection~\ref{sub:cpm}, trains each bootstrap model independently, with early stopping determined by the training MSE on its own bootstrap sample rather than by validation loss.

\subsection{Conformal Prediction Methods}
\label{sub:cpm}

We consider six conformal prediction methods with nominal coverage levels of $95\%$. They differ in how the conformity scores are constructed, how the calibration information is weighted or updated, and whether the prediction interval is centered at a point forecast or obtained from conditional quantile estimates. Let $n_{\mathrm{cal}}$ denote the number of observations in the designated calibration sample.

\paragraph{Split Conformal Prediction}
The first method is classical split conformal prediction (Split CP) \citep{lei2018distribution}. Given a fitted point forecasting model with prediction $\widehat{Y}_i$, we use the absolute prediction error as the conformity score,
\[
S_i=\left|Y_i-\widehat{Y}_i\right|.
\]
For a calibration sample of size $n_{\mathrm{cal}}$, let $k=\lceil(n_{\mathrm{cal}}+1)(1-\alpha)\rceil$, and let $\widehat{q}_{1-\alpha}$ be the $k$th order statistic of the calibration scores. The prediction interval at time $t$ is
\[
\widehat{C}^{\mathrm{Split}}_t=\left[\widehat{Y}_t-\widehat{q}_{1-\alpha},\,\widehat{Y}_t+\widehat{q}_{1-\alpha}\right].
\]
The calibration scores and the resulting radius remain fixed throughout the entire test period. Therefore, newly observed test residuals do not affect subsequent Split CP intervals.

\paragraph{Weighted Conformal Prediction}
Weighted conformal prediction (WCP) \citep{barber2023conformal} modifies the conformal quantile by assigning different weights to past conformity scores. We use the same absolute prediction errors as in Split CP but assign decaying recency weights exponentially:
\[
w_{i,t}=\lambda^{t-i},\qquad \lambda=0.99.
\]
Thus, more recent observations receive greater weight. Following the weighted conformal construction, the test point is assigned unit mass at infinity when determining the weighted $(1-\alpha)$ quantile. Let $\widehat{q}^{\,w}_{1-\alpha,t}$ denote the resulting weighted conformal quantile. The WCP interval is
\[
\widehat{C}^{\mathrm{WCP}}_t=\left[\widehat{Y}_t-\widehat{q}^{\,w}_{1-\alpha,t},\,\widehat{Y}_t+\widehat{q}^{\,w}_{1-\alpha,t}\right].
\]
WCP utilizes a rolling residual window of fixed length. At the beginning of the test period, the residual window consists of the $n_{\mathrm{cal}}$ calibration residuals, and after $Y_t$ is revealed, the new absolute prediction residual is inserted, and only the most recent $n_{\mathrm{cal}}$ residuals are retained. 

\paragraph{Adaptive Conformal Inference}
Adaptive Conformal Inference (ACI) \citep{gibbs2021adaptive} dynamically adjusts the effective miscoverage level according to previous coverage errors. Let $\alpha$ denote the target miscoverage rate and $\alpha_t$ the level used to construct the interval at time $t$. Starting from $\alpha_1=\alpha$, ACI updates
\begin{equation}
\label{eq:gamma}
\alpha_{t+1}=\alpha_t+\gamma\left[\alpha-\mathbf{1}\left\{Y_t\notin\widehat{C}_t(\alpha_t)\right\}\right],\qquad \gamma=0.01.
\end{equation}
At time $t$, the empirical $(1-\alpha_t)$ quantile of the current absolute-residual window determines the interval radius. If $Y_t$ is not covered, $\alpha_t$ decreases, and the subsequent interval becomes more conservative; if $Y_t$ is covered, $\alpha_t$ increases. As in our WCP implementation, the residual window is initialized using the calibration residuals and updated after each observed test response while retaining only the most recent $n_{\mathrm{cal}}$ residuals. We do not truncate the recursive value of $\alpha_t$ to $[0,1]$; when the implied quantile level falls outside this range, we use the corresponding maximum or minimum residual in the current window, resulting in a finite-radius implementation of ACI.

\paragraph{Aggregated Adaptive Conformal Inference}
Aggregated Adaptive Conformal Inference (AgACI) \citep{zaffran2022adaptive} reduces the sensitivity of ACI to the choice of a single learning rate by running multiple ACI experts in parallel. We employ the 30 learning rates adopted in the official AgACI implementation. Each $\gamma\in\Gamma$ defines one ACI expert that updates its adaptive level $\alpha_{t,\gamma}$ according to Equation~(\ref{eq:gamma}) and consequently produces its own lower and upper prediction endpoints. Rather than directly averaging these intervals, the lower and upper endpoints are aggregated separately using Bernstein Online Aggregation (BOA). The BOA updates are based on pinball-loss gradients at levels $\alpha/2$ for the lower endpoint and $1-\alpha/2$ for the upper endpoint. After $Y_t$ is revealed, each ACI expert updates its adaptive level, and the common absolute-residual window is updated while retaining the most recent $n_{\mathrm{cal}}$ observations. The same finite-radius convention described above is applied to each ACI expert when its implied quantile level falls outside $[0,1]$.

\paragraph{Conformalized Quantile Regression}
Conformalized Quantile Regression (CQR) \citep{romano2019cqr} first estimates conditional quantiles rather than constructing a symmetric interval around a point forecast. For nominal coverage $1-\alpha$, the forecasting backbone jointly estimates
\[
\widehat{q}_{\alpha/2}(X_t)\quad\text{and}\quad\widehat{q}_{1-\alpha/2}(X_t),
\]
using the pinball loss. The corresponding raw quantile interval is
\[
\left[\widehat{q}_{\alpha/2}(X_t),\,\widehat{q}_{1-\alpha/2}(X_t)\right].
\]
For each observation in the calibration sample, the CQR conformity score is
\[
S_i=\max\left\{\widehat{q}_{\alpha/2}(X_i)-Y_i,\,Y_i-\widehat{q}_{1-\alpha/2}(X_i)\right\}.
\]
Let $\widehat{q}^{\mathrm{CQR}}_{1-\alpha}$ denote the finite-sample conformal quantile of these scores. The final interval is
\[
\widehat{C}^{\mathrm{CQR}}_t=\left[\widehat{q}_{\alpha/2}(X_t)-\widehat{q}^{\mathrm{CQR}}_{1-\alpha},\,\widehat{q}_{1-\alpha/2}(X_t)+\widehat{q}^{\mathrm{CQR}}_{1-\alpha}\right].
\]
For $95\%$ nominal coverage, we jointly train the lower and upper quantile outputs at levels 0.025 and 0.975. DLinear, PatchTST, and TimesNet are used as the corresponding quantile-regression backbones with their scalar output heads replaced by two-quantile output heads. The two raw outputs are ordered at prediction time to avoid quantile crossing. The conformal correction is computed once from the fixed calibration sample and remains unchanged throughout the test period. Since the CQR conformity scores are signed relative to the raw interval boundaries, the resulting conformal correction may be negative and can therefore contract an initially conservative quantile interval. If such contraction makes the two final endpoints cross, we project the resulting empty interval to its midpoint rather than exchanging the lower and upper endpoints.

\paragraph{Ensemble Batch Prediction Intervals}
Finally, Ensemble Batch Prediction Intervals (EnbPI) \citep{xu2021enbpi} constructs sequential prediction intervals through bootstrap ensemble forecasting and out-of-bag (OOB) residuals without requiring a separate conformal calibration set. We use $B=30$ bootstrap models, mean aggregation, and batch size $s=1$. Let $S_b$ denote the bootstrap sample used to train model $\widehat{f}^{\,b}$. For each observation $i$ in the initial EnbPI sample, only bootstrap models for which $i\notin S_b$ are aggregated to form its OOB prediction,
\[
\widehat{f}_{-i}(X_i)=\frac{1}{|\mathcal{B}_i|}\sum_{b\in\mathcal{B}_i}\widehat{f}^{\,b}(X_i),\qquad \mathcal{B}_i=\{b:i\notin S_b\}.
\]
The corresponding absolute OOB residual is
\[
S_i^{\mathrm{OOB}}=\left|Y_i-\widehat{f}_{-i}(X_i)\right|.
\]
For a future input $X_t$, the same OOB model sets are used to produce leave-one-out ensemble predictions $\{\widehat{f}_{-i}(X_t)\}$ across the initial observations. Following Algorithm~1 of \citet{xu2021enbpi}, the prediction center is the empirical $(1-\alpha)$ quantile of these leave-one-out ensemble predictions,
\[
\widehat{Y}^{\mathrm{EnbPI}}_t=\widehat{Q}_{1-\alpha}\left(\{\widehat{f}_{-i}(X_t)\}_{i=1}^{T_0}\right),
\]
where $T_0$ denotes the number of initial EnbPI prediction--response pairs. The interval radius is obtained from the empirical $(1-\alpha)$ quantile of the current absolute-residual window,
\[
w_t=\widehat{Q}_{1-\alpha}\left(\{S_i^{\mathrm{OOB}}\}\right)
\]
and the resulting symmetric prediction interval is
\[
\widehat{C}^{\mathrm{EnbPI}}_t=\left[\widehat{Y}^{\mathrm{EnbPI}}_t-w_t,\,\widehat{Y}^{\mathrm{EnbPI}}_t+w_t\right].
\]
Because $s=1$, once $Y_t$ is revealed, the new absolute prediction residual is inserted into the residual window and the oldest residual is removed, so the residual distribution is updated after every test observation.

Unlike Split CP, WCP, ACI, AgACI, and CQR, EnbPI does not use the designated 10\% calibration segment as a separate conformal calibration sample and does not require a separate validation set for training its bootstrap models. Instead, each bootstrap model is trained independently, with early stopping determined by the training MSE on its own bootstrap sample. To ensure an identical evaluation horizon across all six methods, the initial EnbPI history is constructed entirely from observations available before the common test period and has the same raw length as the training portion used in the corresponding point forecasting experiment. The maximum number of epochs and the early-stopping patience are set according to the corresponding forecasting backbone.

\subsection{Additional results }
\label{sub:add_re}
This section reports the remaining empirical results for the four datasets under DLinear, PatchTST, and TimesNet with $\delta\in\{0.10,0.25\}$. Throughout the empirical analysis, we set $R=1$, $a_{rt}=1$, and $\epsilon_r=0$, so that $\tau_r=\alpha_r=0.05$. For the six candidate methods, we use the balanced allocation $\eta_{mr}=\delta/6$. The values of the mixture tuning parameter $\rho^\star$ are reported in Table~\ref{tab:choe_rho}. As in the main analysis, we examine cumulative coverage, the ratio $\widehat{\Gamma}_{mr,t}/\mathrm{rad}_{mr,t}$, and the cumulative mean interval width of each method to explain the evolution of the CC-SMCS model set over time.

For the French electricity price dataset, PatchTST and TimesNet exhibit patterns broadly similar to those observed under DLinear. We focus on the time at which the certified set first becomes nonempty. Under PatchTST, Split CP and EnbPI are the first methods to enter the certified set because their cumulative coverage performance becomes sufficiently favorable. Since Split CP has a larger cumulative mean interval width than EnbPI, it is excluded from the final model set earlier. As the cumulative coverage performance of CQR improves, CQR also enters the certified set, but is subsequently excluded because of its relatively large interval width. CC-SMCS also allows a method to re-enter the final model set when its relative efficiency improves. WCP provides such an example: while remaining in the possible set, it is excluded when its cumulative mean interval width exceeds the minimum cumulative mean width among the certified methods, $q_t^{\mathrm{cert}}$, and is re-included once its cumulative mean interval width falls below $q_t^{\mathrm{cert}}$ (Figures~\ref{fig:france_patch_010} and~\ref{fig:france_patch_025}). Similar patterns are observed for DLinear with $\delta=0.25$ (Figure~\ref{fig:france_025}) and for TimesNet under both $\delta=0.10$ and $\delta=0.25$ (Figures~\ref{fig:france_timesnet_010} and~\ref{fig:france_timesnet_025}).

For the Electricity Load dataset, the results under DLinear with $\delta=0.25$ are similar to those observed for the French electricity price dataset (Figure~\ref{fig:load_025}). TimesNet also exhibits similar dynamics under both $\delta=0.10$ and $\delta=0.25$, with the model set evolving as the coverage performance and interval widths of the candidate methods change over time (Figures~\ref{fig:load_timesnet_010} and~\ref{fig:load_timesnet_025}). In contrast, a different pattern emerges under PatchTST. CQR becomes infeasible because its cumulative coverage remains substantially below the nominal level of $95\%$, causing $\widehat{\Gamma}_{mr,t}/\mathrm{rad}_{mr,t}$ to exceed 1. As a result, CQR leaves the possible set and is therefore excluded from the final model set (Figures~\ref{fig:load_patch_010} and~\ref{fig:load_patch_025}).

For the Weather dataset, CC-SMCS generally excludes methods with relatively wide intervals once a narrower method becomes certified feasible, while retaining methods with adequate coverage and relatively narrow intervals. This pattern is consistent across DLinear, PatchTST, and TimesNet, as shown in Figures~\ref{fig:weather_patch_010}, \ref{fig:weather_patch_025}, \ref{fig:weather_timesnet_010}, and~\ref{fig:weather_timesnet_025}.

For the Wind Power dataset, the model-set paths differ across the three forecasting backbones. Under DLinear and PatchTST, the certified set remains empty until near the end of the prediction period. Split CP is the first method to enter the certified set but also has the largest cumulative mean interval width, and it is excluded only after EnbPI becomes certified with a smaller width (Figures~\ref{fig:wind_025}, \ref{fig:wind_patch_010}, and~\ref{fig:wind_patch_025}). This reflects relatively weak separation in the coverage--width profiles of the candidate pipelines, so the available evidence is insufficient to reduce the model set earlier. In contrast, under TimesNet, Split CP and EnbPI become certified earlier and the differences in interval width are more pronounced, leading to earlier exclusions of Split CP and CQR (Figures~\ref{fig:wind_timesnet_010} and~\ref{fig:wind_timesnet_025}). The contrast across forecasting backbones illustrates that the informativeness of CC-SMCS depends not only on the conformal calibration method but also on the underlying forecasting backbone, since changes in the base forecaster can alter the coverage--efficiency profiles of the resulting candidate pipelines.

\begin{figure}[htbp]
\centering
\begin{subfigure}{\linewidth}
\centering
\includegraphics[width=\linewidth]{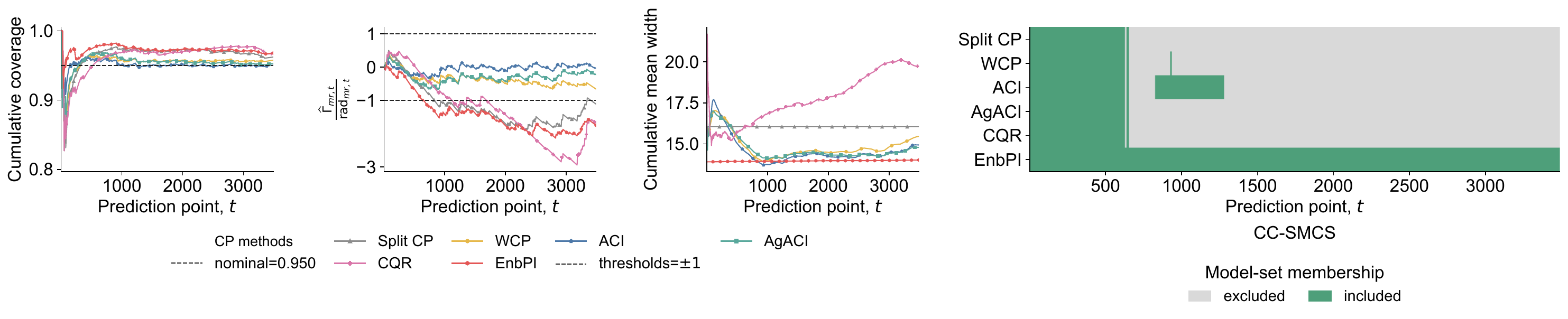}
\caption{French Electricity Price}
\label{fig:france_025}
\end{subfigure}

\vspace{0.1cm}
\begin{subfigure}{\linewidth}
\centering
\includegraphics[width=\linewidth]{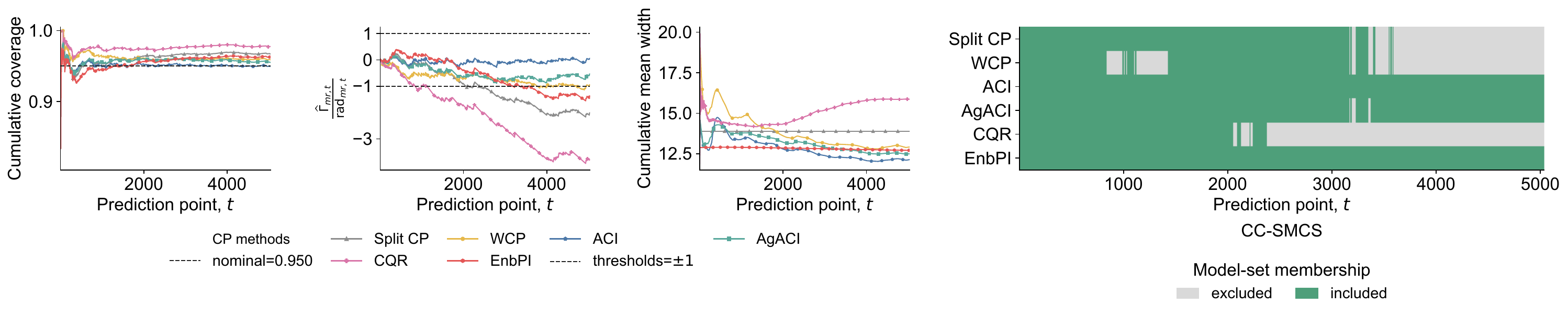}
\caption{Electricity Load}
\label{fig:load_025}
\end{subfigure}

\begin{subfigure}{\linewidth}
\centering
\includegraphics[width=\linewidth]{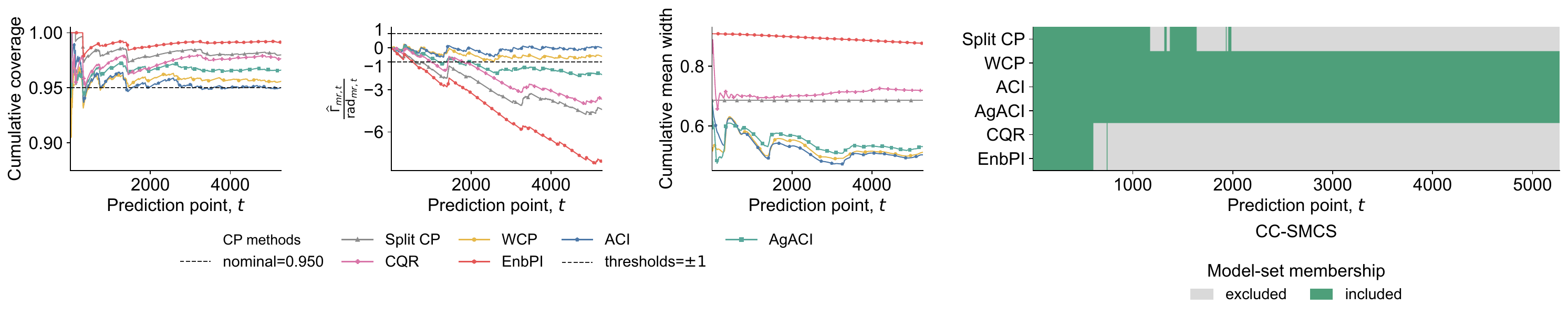}
\caption{Weather}
\label{fig:weather_025}
\end{subfigure}

\begin{subfigure}{\linewidth}
\centering
\includegraphics[width=\linewidth]{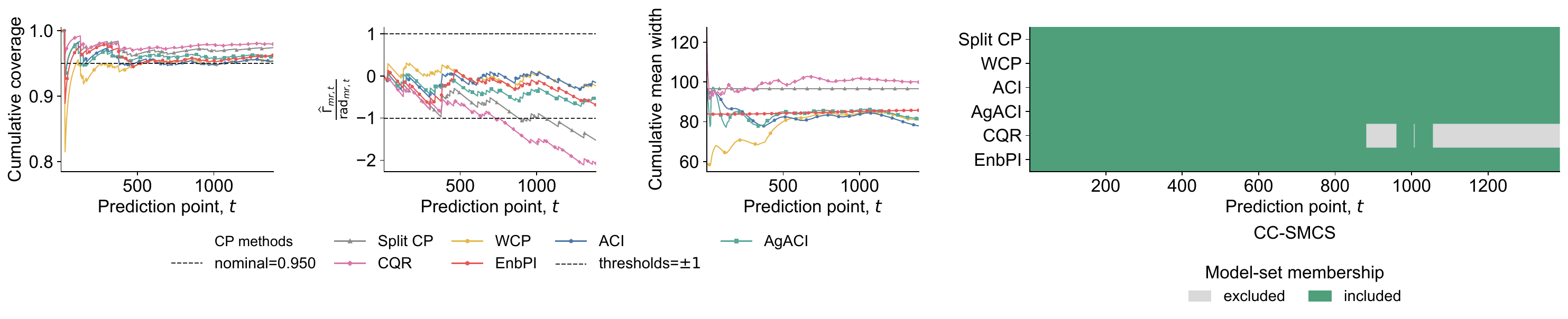}
\caption{Wind Power}
\label{fig:wind_025}
\end{subfigure}
\caption{Empirical results for DLinear at 95\% nominal coverage with $\delta=0.25$.}
\label{fig:re_dlinear_025}
\end{figure}

\begin{figure}[htbp]
\centering
\begin{subfigure}{\linewidth}
\centering
\includegraphics[width=\linewidth]{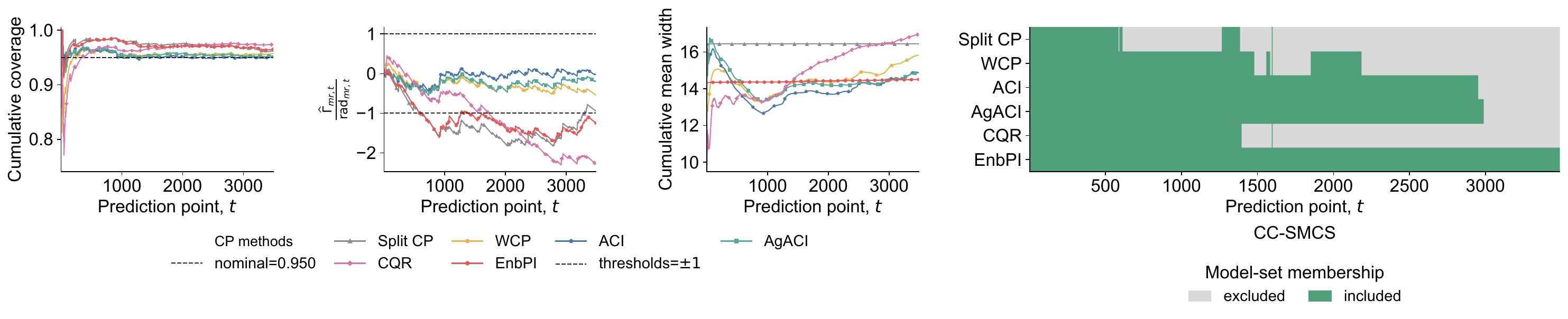}
\caption{French Electricity Price}
\label{fig:france_patch_010}
\end{subfigure}

\vspace{0.1cm}
\begin{subfigure}{\linewidth}
\centering
\includegraphics[width=\linewidth]{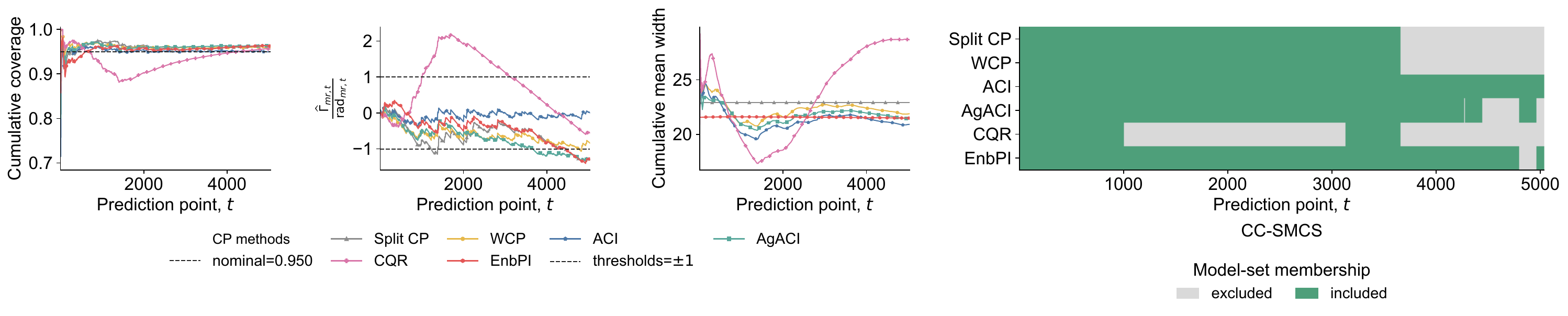}
\caption{Electricity Load}
\label{fig:load_patch_010}
\end{subfigure}

\begin{subfigure}{\linewidth}
\centering
\includegraphics[width=\linewidth]{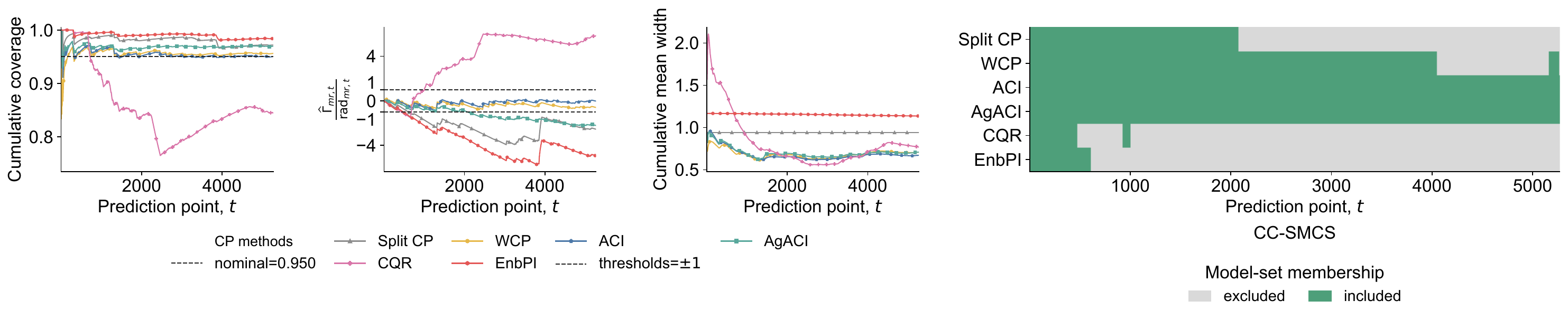}
\caption{Weather}
\label{fig:weather_patch_010}
\end{subfigure}

\begin{subfigure}{\linewidth}
\centering
\includegraphics[width=\linewidth]{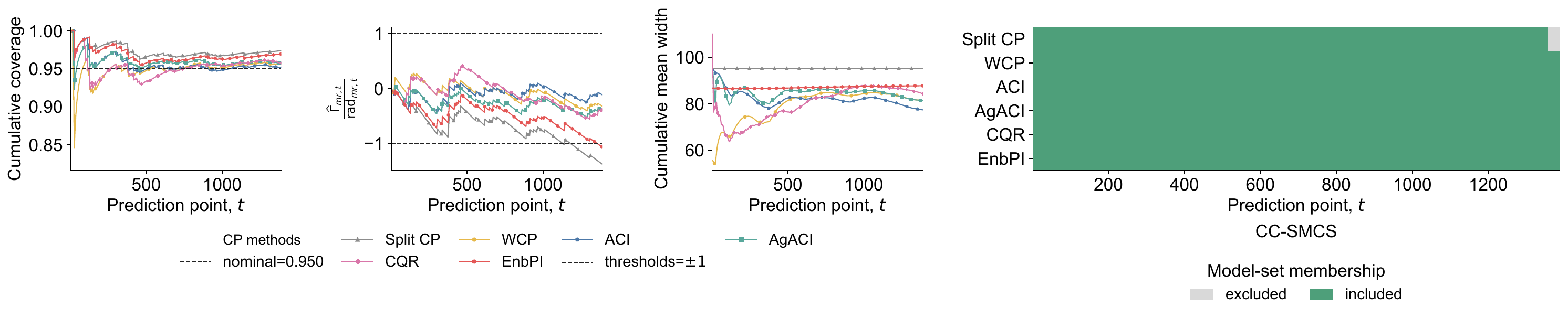}
\caption{Wind Power}
\label{fig:wind_patch_010}
\end{subfigure}
\caption{Empirical results for PatchTST at 95\% nominal coverage with $\delta=0.10$.}
\label{fig:re_patch_010}
\end{figure}

\begin{figure}[htbp]
\centering
\begin{subfigure}{\linewidth}
\centering
\includegraphics[width=\linewidth]{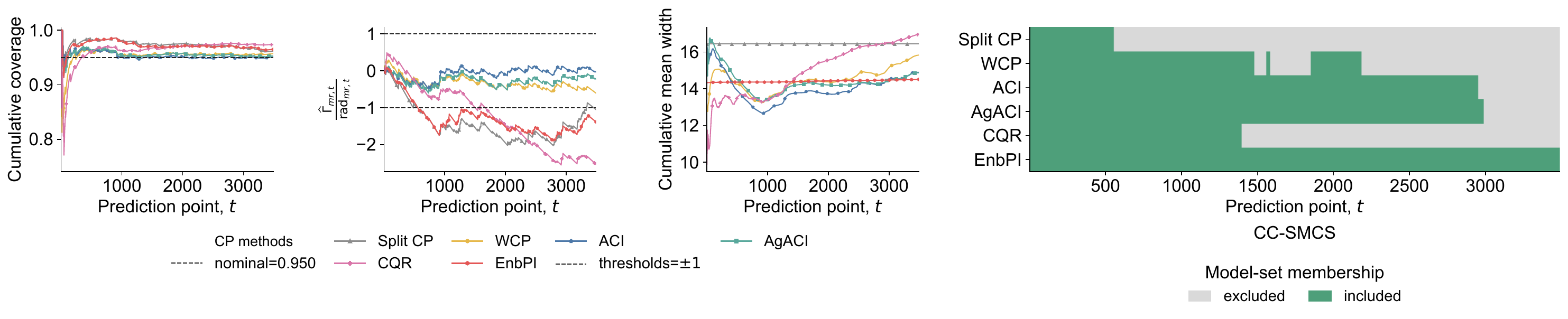}
\caption{French Electricity Price}
\label{fig:france_patch_025}
\end{subfigure}

\vspace{0.1cm}
\begin{subfigure}{\linewidth}
\centering
\includegraphics[width=\linewidth]{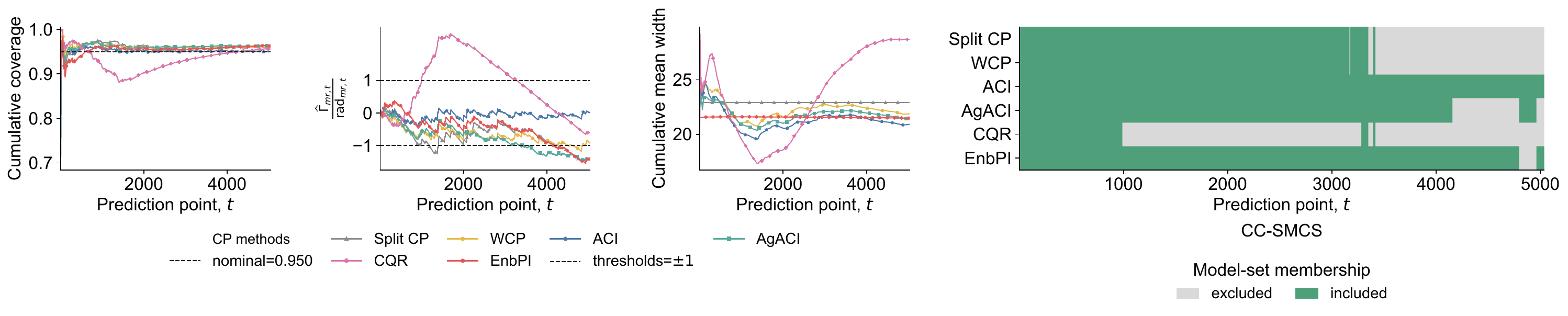}
\caption{Electricity Load}
\label{fig:load_patch_025}
\end{subfigure}

\begin{subfigure}{\linewidth}
\centering
\includegraphics[width=\linewidth]{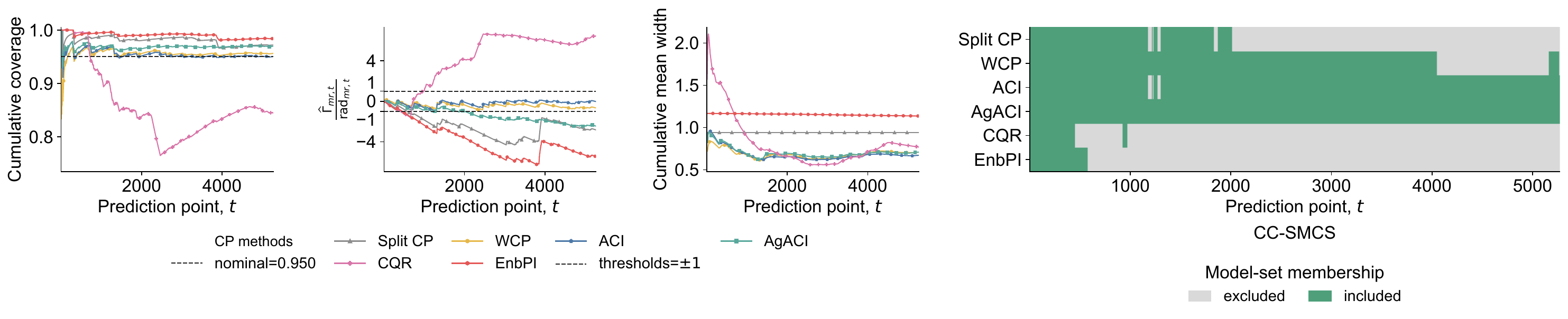}
\caption{Weather}
\label{fig:weather_patch_025}
\end{subfigure}

\begin{subfigure}{\linewidth}
\centering
\includegraphics[width=\linewidth]{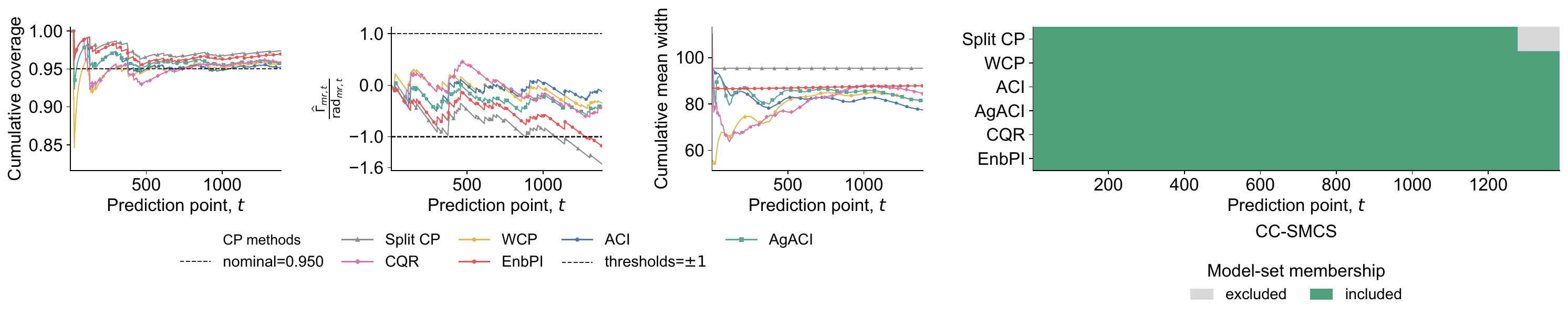}
\caption{Wind Power}
\label{fig:wind_patch_025}
\end{subfigure}
\caption{Empirical results for PatchTST at 95\% nominal coverage with $\delta=0.25$.}
\label{fig:re_patch_025}
\end{figure}

\begin{figure}[htbp]
\centering
\begin{subfigure}{\linewidth}
\centering
\includegraphics[width=\linewidth]{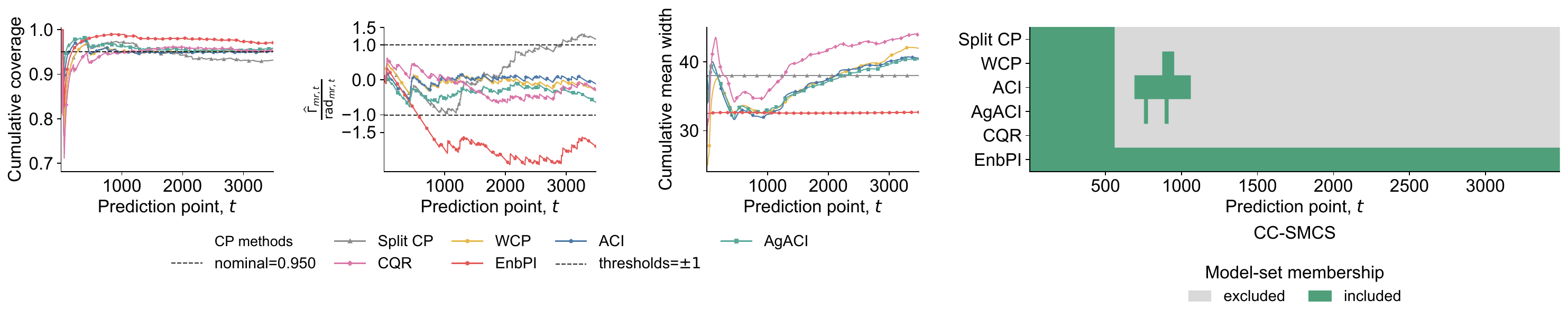}
\caption{French Electricity Price}
\label{fig:france_timesnet_010}
\end{subfigure}

\vspace{0.1cm}
\begin{subfigure}{\linewidth}
\centering
\includegraphics[width=\linewidth]{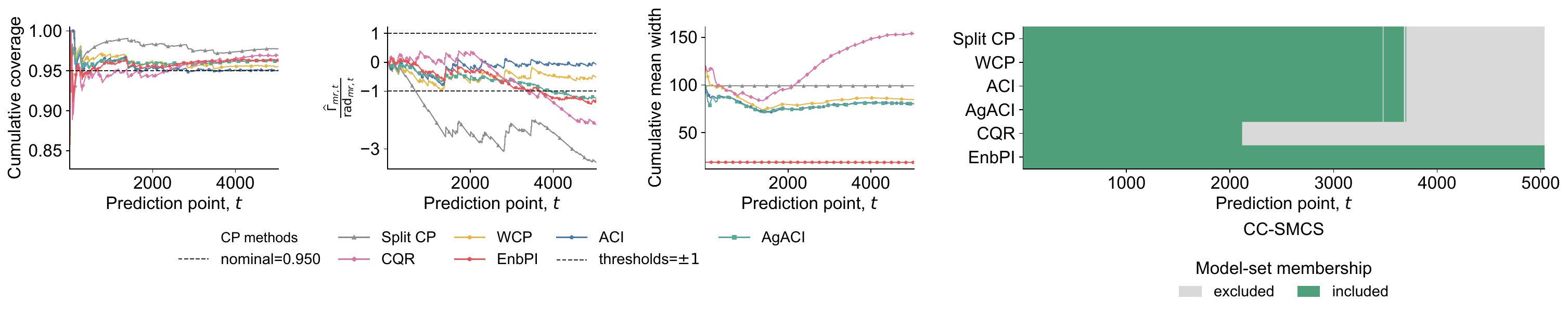}
\caption{Electricity Load}
\label{fig:load_timesnet_010}
\end{subfigure}

\begin{subfigure}{\linewidth}
\centering
\includegraphics[width=\linewidth]{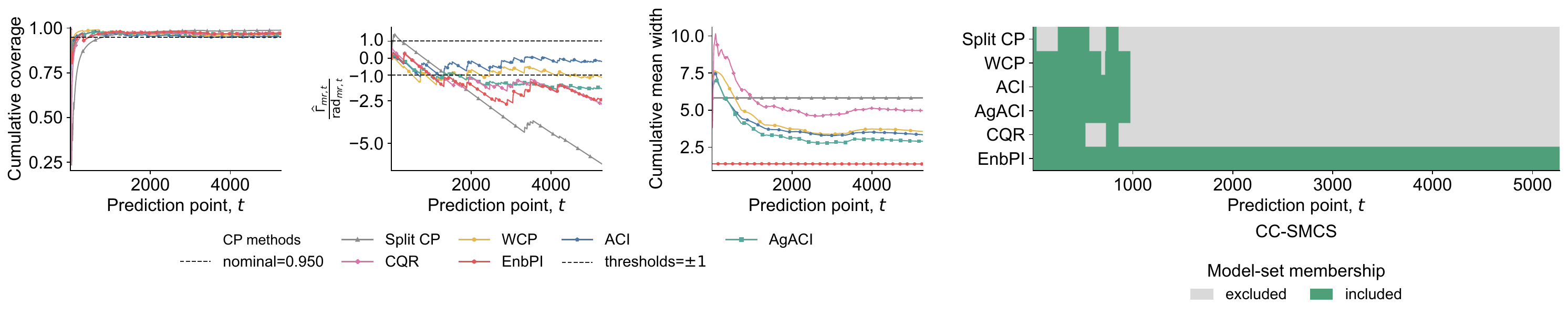}
\caption{Weather}
\label{fig:weather_timesnet_010}
\end{subfigure}

\begin{subfigure}{\linewidth}
\centering
\includegraphics[width=\linewidth]{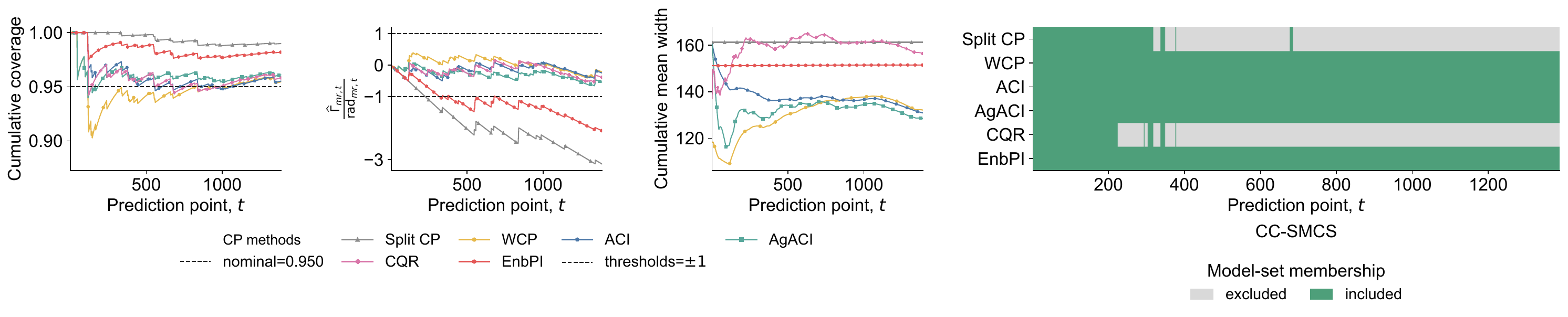}
\caption{Wind Power}
\label{fig:wind_timesnet_010}
\end{subfigure}
\caption{Empirical results for TimesNet at 95\% nominal coverage with $\delta=0.10$.}
\label{fig:re_timesnet_010}
\end{figure}

\begin{figure}[htbp]
\centering
\begin{subfigure}{\linewidth}
\centering
\includegraphics[width=\linewidth]{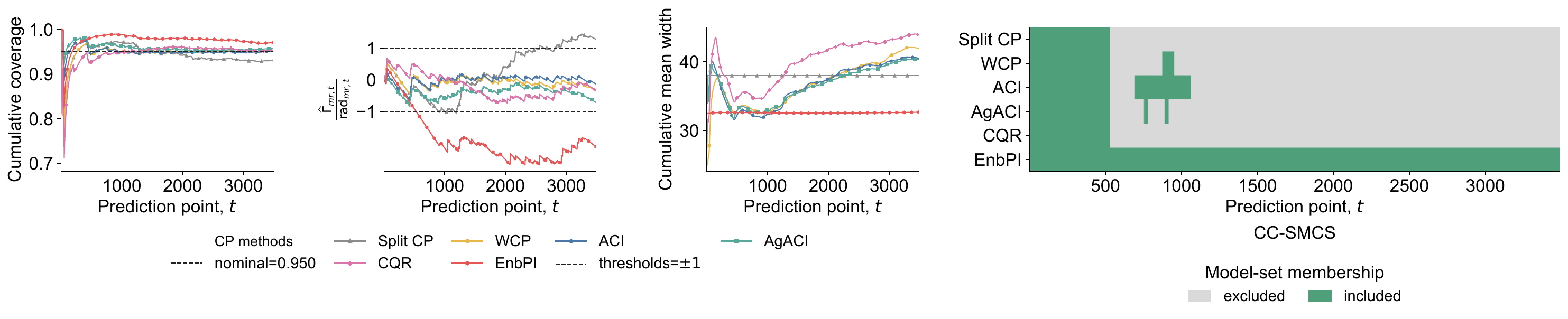}
\caption{French Electricity Price}
\label{fig:france_timesnet_025}
\end{subfigure}

\vspace{0.1cm}
\begin{subfigure}{\linewidth}
\centering
\includegraphics[width=\linewidth]{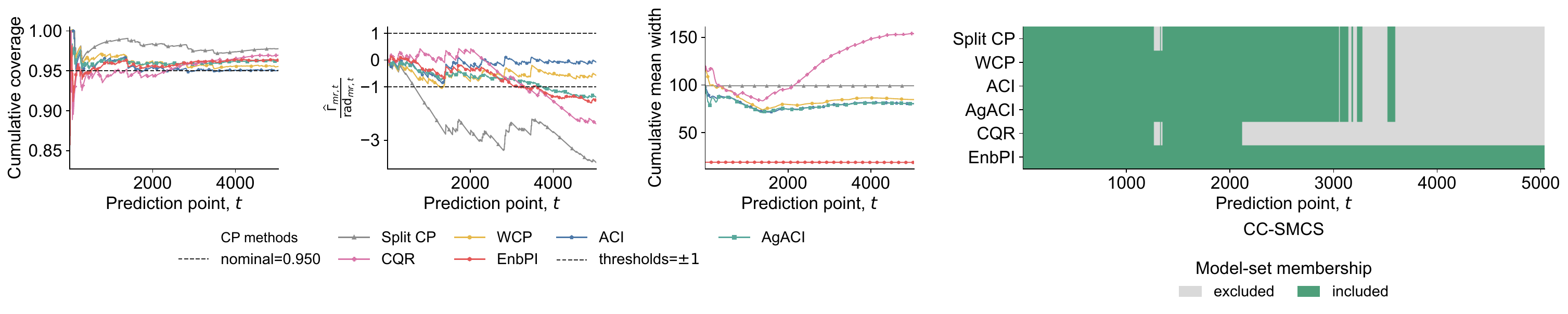}
\caption{Electricity Load}
\label{fig:load_timesnet_025}
\end{subfigure}

\begin{subfigure}{\linewidth}
\centering
\includegraphics[width=\linewidth]{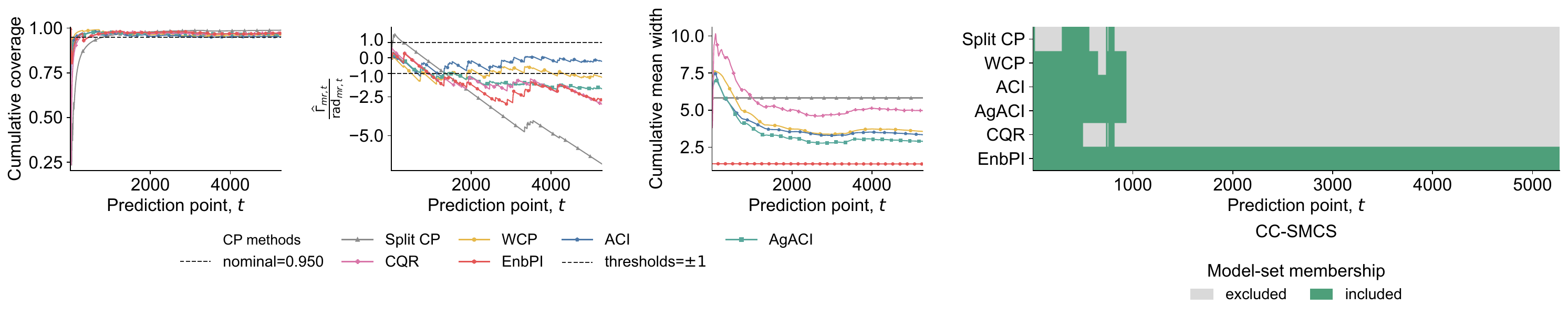}
\caption{Weather}
\label{fig:weather_timesnet_025}
\end{subfigure}

\begin{subfigure}{\linewidth}
\centering
\includegraphics[width=\linewidth]{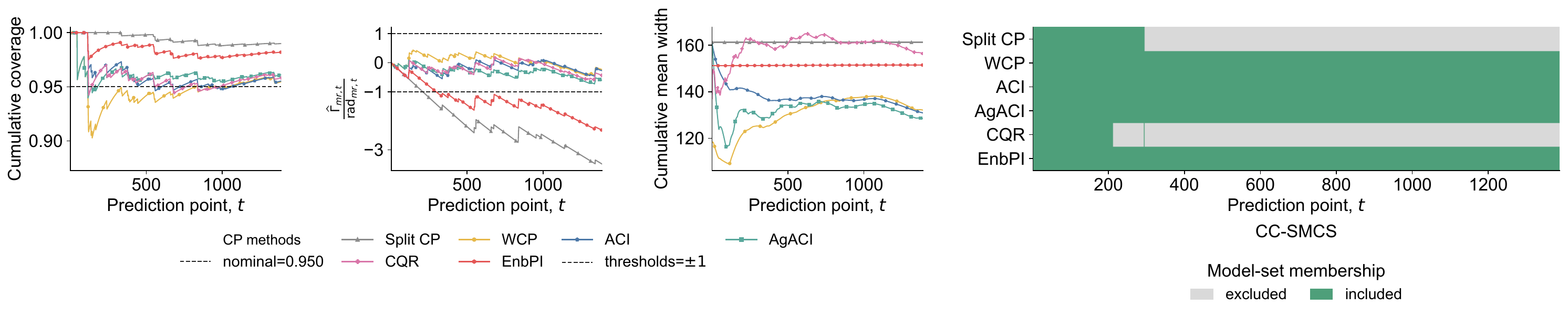}
\caption{Wind Power}
\label{fig:wind_timesnet_025}
\end{subfigure}
\caption{Empirical results for TimesNet at 95\% nominal coverage with $\delta=0.25$.}
\label{fig:re_timesnet_025}
\end{figure}

\section{More Discussions and Extensions}
\subsection{Delayed multi-horizon feedback}\label{app:delay}

For an $h$-step forecast issued at origin $s$, $C_{mrsh}$ is $\G_s$-measurable and is evaluated using $Y_{s+h}$. If its increment is inserted at calendar time $s+h$ into the usual filtration, conditioning on $\G_{s+h-1}$ need not recover the forecast-origin conditional mean given $\G_s$. We therefore partition forecast origins into residue classes
\begin{equation}
\Iset_{h,k}=\{k,k+h,k+2h,\ldots\},
\qquad k=1,\ldots,h.
\label{eq:interlace}
\end{equation}
Within each class, an increment issued at one origin is resolved by the next origin in the same class. We run a separate confidence sequence for every $(m,r,h,k)$ stream and allocate the error probability across all streams.

\begin{corollary}\label{cor:delay}
If the streamwise allocations satisfy $\sum_{m,r,h,k}\eta_{mrhk}\leq\delta$, then Theorem~\ref{thm:main} holds simultaneously over all calendar times and monitored horizons, with each horizon-specific target defined over forecast origins whose outcomes have been resolved.
\end{corollary}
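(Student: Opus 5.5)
The plan is to reduce Corollary~\ref{cor:delay} to the single-stream machinery of Lemmas~\ref{lem:hoeffding}--\ref{lem:mixed_bound}. Each interlaced stream gets its own filtration indexed by position within its residue class. The horizon-specific confidence events are then combined by a union bound, exactly as in Lemma~\ref{lem:gamma_interval} and Theorem~\ref{thm:main}.

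\emph{Stream filtration.} Fix a stream $(m,r,h,k)$ and enumerate its origins $s_i=k+(i-1)h$, $i\ge1$. Define the stream filtration $\mathcal H_i:=\G_{s_{i+1}}$. The prediction set $C_{mrs_ih}$ and the exposure $a_{rs_ih}$ are $\G_{s_i}$-measurable, hence $\mathcal H_{i-1}$-measurable. The outcome $Y_{s_i+h}=Y_{s_{i+1}}$ is $\G_{s_{i+1}}=\mathcal H_i$-measurable. Thus the increment $X_{mrhk,i}=a_{rs_ih}(B_{mrs_ih}-\tau_r)$ is $\mathcal H_i$-measurable. Its conditional mean is $\mu_{mrhk,i}=\E[X_{mrhk,i}\mid\G_{s_i}]$, which is exactly the forecast-origin target. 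The predictable estimate $\widehat B$ is built from the resolved stream increments $1,\dots,i-1$. These are $\G_{s_i}$-measurable because $s_{i-1}+h=s_i$.

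\emph{Single-stream bound.} Lemma~\ref{lem:hoeffding} uses only these measurability facts and the boundedness of $X-\widehat X$ in $[-1,1]$. It therefore applies verbatim with $(\mathcal H_i)$ in place of $(\G_t)$. Lemma~\ref{lem:mixed_bound} then gives, for the stream, a time-uniform two-sided bound at level $\eta_{mrhk}$ in the stream index $i$. Defining $\Gamma_{mrhk,n}$ and $L,U$ as in \eqref{eq:LU} over the first $n$ resolved origins, the rearrangement in Lemma~\ref{lem:gamma_interval} yields $L_{mrhk,n}\le\Gamma_{mrhk,n}\le U_{mrhk,n}$ for all $n$, outside an event of probability $\eta_{mrhk}$.

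\emph{Union bound and mapping to calendar time.} A union bound over all $(m,r,h,k)$ with $\sum\eta_{mrhk}\le\delta$ gives a joint event of probability at least $1-\delta$ on which every stream interval covers its target for every $n$. At calendar time $t$, each stream has resolved a deterministic number $n_{hk}(t)$ of origins, namely those with $s+h\le t$. The reported intervals at time $t$ are the stream intervals at index $n_{hk}(t)$, so the joint event holds simultaneously over all calendar times. If a horizon constraint aggregates over $k$, the same argument applies to the summed numerators: the sum of the $k$ stream deviations is bounded by the sum of the boundaries, and one divides by the total exposure.

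\emph{Conclusion and main obstacle.} With rectangular intervals covering every constraint index $(r,h)$ at every $t$, the deterministic argument in the proof of Theorem~\ref{thm:main} yields all three inclusions, and Proposition~\ref{prop:projection} still applies. The main obstacle is the measurability check that the interlacing makes each increment resolved exactly at the next origin of its class. Without interlacing, increments from overlapping origins would not form a martingale difference sequence for a common filtration.
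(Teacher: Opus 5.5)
Your proposal is correct and takes essentially the same route as the paper. It uses residue-class streams with filtration $\G_{s_{k,n}}$ at the stream origins, applies Lemmas~\ref{lem:hoeffding}--\ref{lem:mixed_bound} per stream, takes a union bound over $(m,r,h,k)$, evaluates at the deterministic resolved count, aggregates the $k$ streams into the horizon-level target by the triangle inequality, and finishes with the deterministic argument of Theorem~\ref{thm:main}. The only difference is cosmetic: your predictable $\widehat B$ uses only the stream's own resolved history, whereas the paper's uses all origins $u\le s-h$; both are $\G_{s}$-measurable and lie in $[0,1]$, so either works.
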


Before proving Corollary \ref{cor:delay}, we reformulate the problem and derive the following preliminary results first. For each monitored horizon $h$, a forecast issued at origin $s$ is $\G_s$-measurable and targets $Y_{s+h}$. Define
\begin{align}
B_{mrsh}&:=\1\{Y_{s+h}\notin C_{mrsh}\},\qquad X_{mrsh}:=a_{rsh}(B_{mrsh}-\tau_{rh}),\\
\widehat B_{mrsh}&:=
\begin{cases}
\tau_{rh}, & A_{rh,s-h}=0,\\
\dfrac{\sum_{u=1}^{s-h}a_{ruh}B_{mruh}}{A_{rh,s-h}}, & A_{rh,s-h}>0,
\end{cases}\\
\widehat X_{mrsh}&:=a_{rsh}(\widehat B_{mrsh}-\tau_{rh}),\qquad \mu_{mrsh}:=\E[X_{mrsh}\mid\G_s],
\end{align}
where $a_{rsh}\in[0,1]$ is $\G_s$-measurable and $A_{rh,s-h}:=\sum_{u=1}^{s-h}a_{ruh}$, with sums over empty index sets understood as zero.

Fix $k\in\{1,\ldots,h\}$ and enumerate the origins in the $k$th residue stream by
\[
s_{k,n}:=k+nh,\qquad n=0,1,2,\ldots.
\]
Define
\[
\mathcal H_{k,n}:=\G_{s_{k,n}},\qquad D_{mrh,k,n+1}:=X_{mr,s_{k,n},h}-\mu_{mr,s_{k,n},h}.
\]
Since $s_{k,n}+h=s_{k,n+1}$, we have
\[
X_{mr,s_{k,n},h}\in\mathcal H_{k,n+1},\qquad \mu_{mr,s_{k,n},h}\in\mathcal H_{k,n},\qquad D_{mrh,k,n+1}\in\mathcal H_{k,n+1}.
\]
Moreover, since $\mathcal H_{k,n}=\G_{s_{k,n}}$ and $\mu_{mr,s_{k,n},h}=\E[X_{mr,s_{k,n},h}\mid\G_{s_{k,n}}]$,
\begin{align*}
\E[D_{mrh,k,n+1}\mid\mathcal H_{k,n}]
&=\E[X_{mr,s_{k,n},h}-\mu_{mr,s_{k,n},h}\mid\G_{s_{k,n}}]\\
&=\E[X_{mr,s_{k,n},h}\mid\G_{s_{k,n}}]-\mu_{mr,s_{k,n},h}\\
&=0.
\end{align*}
Conditionally on $\mathcal H_{k,n}$, define
\[
\Delta_{mrh,k,n+1}:=(X_{mr,s_{k,n},h}-\widehat X_{mr,s_{k,n},h})^2.
\]
Since $B_{mr,s_{k,n},h}\in\{0,1\}$, $\widehat B_{mr,s_{k,n},h}\in[0,1]$, and $a_{r,s_{k,n},h}\in[0,1]$,
\[
X_{mr,s_{k,n},h}-\widehat X_{mr,s_{k,n},h}\in[-a_{r,s_{k,n},h},a_{r,s_{k,n},h}]\subseteq[-1,1],
\]
By Lemma~\ref{lem:hoeffding}, for every $\lambda\in[0,1)$,
\[
\E\left[\left.\exp\{\lambda D_{mrh,k,n+1}-\psi_E(\lambda)\Delta_{mrh,k,n+1}\}\right|\mathcal H_{k,n}\right]\leq1.
\]
Thus, Lemma~\ref{lem:mixed_bound} applies to each residue stream with
\[
V_{mrh,k,N}:=\sum_{n=0}^{N-1}(X_{mr,s_{k,n},h}-\widehat X_{mr,s_{k,n},h})^2=\sum_{n=0}^{N-1}\Delta_{mrh,k,n+1}.
\]
For every $(m,r,h,k)$,
\[
\Pp\left(\forall N\geq1:\ \left|\sum_{n=0}^{N-1}D_{mrh,k,n+1}\right|\leq b_{\eta_{mrhk},\rho}(V_{mrh,k,N})\right)\geq1-\eta_{mrhk}.
\]

At calendar time $t$, define the number of resolved origins in stream $k$ by
\[
N_{h,k}(t):=\#\{n\geq0:s_{k,n}+h\leq t\}.
\]
With empty sums understood as zero, define
\begin{align*}
S_{mrh,k}(t)&:=\sum_{n=0}^{N_{h,k}(t)-1}X_{mr,s_{k,n},h},\\
A_{rh,k}(t)&:=\sum_{n=0}^{N_{h,k}(t)-1}a_{r,s_{k,n},h},\\
V_{mrh,k}(t)&:=\sum_{n=0}^{N_{h,k}(t)-1}(X_{mr,s_{k,n},h}-\widehat X_{mr,s_{k,n},h})^2.
\end{align*}
Taking $N=N_{h,k}(t)$ in the streamwise confidence sequence gives
\[
\left|S_{mrh,k}(t)-\sum_{n=0}^{N_{h,k}(t)-1}\mu_{mr,s_{k,n},h}\right|\leq b_{\eta_{mrhk},\rho}(V_{mrh,k}(t))
\]
for all stream resolution times; when $N_{h,k}(t)=0$, both sums are zero and the inequality holds trivially.

Let $\mathcal E_{mrhk}$ denote the time-uniform event for stream $(m,r,h,k)$. Then
\[
\Pp\left(\bigcap_{m,r,h,k}\mathcal E_{mrhk}\right)\geq1-\sum_{m,r,h,k}\Pp(\mathcal E_{mrhk}^c)\geq1-\sum_{m,r,h,k}\eta_{mrhk}\geq1-\delta.
\]

Define
\[
A_{rh}^{\mathrm{delay}}(t):=\sum_{k=1}^{h}A_{rh,k}(t).
\]
When $A_{rh}^{\mathrm{delay}}(t)>0$, define
\[
\Gamma_{mrh,t}^{\mathrm{delay}}:=\frac{\sum_{k=1}^{h}\sum_{n=0}^{N_{h,k}(t)-1}\mu_{mr,s_{k,n},h}}{A_{rh}^{\mathrm{delay}}(t)}.
\]
On the joint stream event, for every $k$,
\[
\left|S_{mrh,k}(t)-\sum_{n=0}^{N_{h,k}(t)-1}\mu_{mr,s_{k,n},h}\right|\leq b_{\eta_{mrhk},\rho}(V_{mrh,k}(t)).
\]
Therefore, by the triangle inequality,
\begin{align*}
\left|\sum_{k=1}^{h}S_{mrh,k}(t)-\sum_{k=1}^{h}\sum_{n=0}^{N_{h,k}(t)-1}\mu_{mr,s_{k,n},h}\right|&=\left|\sum_{k=1}^{h}\left(S_{mrh,k}(t)-\sum_{n=0}^{N_{h,k}(t)-1}\mu_{mr,s_{k,n},h}\right)\right|\\
&\qquad\leq\sum_{k=1}^{h}\left|S_{mrh,k}(t)-\sum_{n=0}^{N_{h,k}(t)-1}\mu_{mr,s_{k,n},h}\right|\\
&\qquad\leq\sum_{k=1}^{h}b_{\eta_{mrhk},\rho}(V_{mrh,k}(t)).
\end{align*}
For brevity, write
\[
B_{mrh}^{\mathrm{delay}}(t):=\sum_{k=1}^{h}b_{\eta_{mrhk},\rho}(V_{mrh,k}(t)).
\]
The preceding inequality becomes
\[
-B_{mrh}^{\mathrm{delay}}(t)\leq\sum_{k=1}^{h}S_{mrh,k}(t)-A_{rh}^{\mathrm{delay}}(t)\Gamma_{mrh,t}^{\mathrm{delay}}\leq B_{mrh}^{\mathrm{delay}}(t).
\]
Rearranging,
\[
\sum_{k=1}^{h}S_{mrh,k}(t)-B_{mrh}^{\mathrm{delay}}(t)\leq A_{rh}^{\mathrm{delay}}(t)\Gamma_{mrh,t}^{\mathrm{delay}}\leq\sum_{k=1}^{h}S_{mrh,k}(t)+B_{mrh}^{\mathrm{delay}}(t).
\]
Since $A_{rh}^{\mathrm{delay}}(t)>0$, dividing by $A_{rh}^{\mathrm{delay}}(t)$ gives
\[
L_{mrh,t}^{\mathrm{delay}}\leq\Gamma_{mrh,t}^{\mathrm{delay}}\leq U_{mrh,t}^{\mathrm{delay}},
\]
where
\begin{eqnarray*}
L_{mrh,t}^{\mathrm{delay}}&:=&\frac{\sum_{k=1}^{h}S_{mrh,k}(t)-\sum_{k=1}^{h}b_{\eta_{mrhk},\rho}(V_{mrh,k}(t))}{A_{rh}^{\mathrm{delay}}(t)},\\
U_{mrh,t}^{\mathrm{delay}}&:=&\frac{\sum_{k=1}^{h}S_{mrh,k}(t)+\sum_{k=1}^{h}b_{\eta_{mrhk},\rho}(V_{mrh,k}(t))}{A_{rh}^{\mathrm{delay}}(t)}.
\end{eqnarray*}
If $A_{rh}^{\mathrm{delay}}(t)=0$, set
\[
\Gamma_{mrh,t}^{\mathrm{delay}}:=0,\qquad L_{mrh,t}^{\mathrm{delay}}:=-\infty,\qquad U_{mrh,t}^{\mathrm{delay}}:=+\infty.
\]
Therefore
\[
\Pp\left(\forall t,m,r,h:\ L_{mrh,t}^{\mathrm{delay}}\leq\Gamma_{mrh,t}^{\mathrm{delay}}\leq U_{mrh,t}^{\mathrm{delay}}\right)\geq1-\delta.
\]

\begin{proof}[Proof of Corollary \ref{cor:delay}]
On the preceding joint event, regard each pair $(r,h)$ as a coverage-constraint index and apply the definitions of Section~\ref{sec:method} with the delayed quantities. If a method is certified, then for every monitored $(r,h)$,
\[
\Gamma_{mrh,t}^{\mathrm{delay}}\leq U_{mrh,t}^{\mathrm{delay}}\leq0,
\]
so it is truly feasible. Conversely, true feasibility gives
\[
L_{mrh,t}^{\mathrm{delay}}\leq\Gamma_{mrh,t}^{\mathrm{delay}}\leq0
\]
for every monitored $(r,h)$, so the method is possibly feasible. Thus, for every $t$,
\[
\widehat\Vset_t^{\cert}\subseteq\Vset_t^\star\subseteq\widehat\Vset_t^{\pos}.
\]

Now let $m^\star\in\M_t^\star$. Since every $j\in\widehat\Vset_t^{\cert}$ is truly feasible,
\[
Q_{m^\star,t}\leq Q_{j,t},\qquad \forall j\in\widehat\Vset_t^{\cert}.
\]
Therefore $Q_{m^\star,t}\leq q_t^{\cert}$, with $q_t^{\cert}=+\infty$ when $\widehat\Vset_t^{\cert}=\varnothing$. Since $m^\star\in\widehat\Vset_t^{\pos}$,
\[
m^\star\in\left\{m\in\widehat\Vset_t^{\pos}:Q_{m,t}\leq q_t^{\cert}\right\}=\widehat\M_t^{\proj}.
\]
Hence $\M_t^\star\subseteq\widehat\M_t^{\proj}$ simultaneously over all calendar times and monitored horizons. The joint event has probability at least $1-\delta$, which proves the corollary.
\end{proof}

\subsection{Outcome-Dependent Efficiency Objectives}\label{app:randomobjective}

The main text uses a predictable set-size objective. Suppose instead that method $m$ incurs an adapted loss $\ell_{mt}$ after $Y_t$ is observed, and the target is
\[
\overline Q_{m,t}
=\frac{1}{W_t}\sum_{s=1}^t
\E[w_s\ell_{ms}\mid\G_{s-1}].
\]
For each ordered pair $(m,j)$, define
\[
D_{mj,t}:=w_t(\ell_{mt}-\ell_{jt}),
\qquad
\Delta_{mj,t}:=\overline Q_{m,t}-\overline Q_{j,t}=\frac{1}{W_t}\sum_{s=1}^t\E[D_{mj,s}\mid\G_{s-1}].
\]
Suppose that simultaneous lower confidence sequences $L^\Delta_{mj,t}$ can be constructed such that
\[
\Pp\!\left(\forall t,m,j:\ L^\Delta_{mj,t}\leq\Delta_{mj,t}\right)\geq1-\delta_Q.
\]
Such confidence sequences can be obtained, for example, when the centered pairwise increments admit predictable bounds or an appropriate conditional sub-$\psi$ process.
Combine this event with the feasibility confidence event using total error $\delta_V+\delta_Q\leq\delta$, and define
\begin{equation}
\widehat\M_t^{\mathrm{loss}}=\left\{m\in\widehat\Vset_t^{\pos}:L^\Delta_{mj,t}\leq0\text{ for every }j\in\widehat\Vset_t^{\cert}\right\}.
\label{eq:randomobjective}
\end{equation}
If $m^\star$ is a true constrained optimizer and $j$ is certified, then $j$ is truly feasible and $\Delta_{m^\star j,t}\leq0$; the lower confidence bound cannot be positive on the joint event. Therefore $m^\star\in\widehat\M_t^{\mathrm{loss}}$ for all $t$. Pairwise differences exploit the fact that all methods are evaluated on the same outcomes, but \eqref{eq:randomobjective} is generally an outer set rather than the exact projection of a coherent joint objective region.

\end{document}